\documentclass[10pt]{article}
\usepackage{lipsum}
\usepackage{calc}
\usepackage[
  letterpaper,%
  textheight={47\baselineskip+\topskip},%
  textwidth={\paperwidth-126pt},%
  footskip=75pt,%
  marginparwidth=0pt,%
  top={\topskip+0.75in}]{geometry}
\usepackage{fancyhdr}

\usepackage{times}  
\usepackage{helvet}  
\usepackage{courier}  
\usepackage[hyphens]{url}  
\usepackage{graphicx} 
\urlstyle{rm} 
\usepackage[numbers]{natbib}  
\usepackage{caption} 
\frenchspacing  
\setlength{\pdfpagewidth}{8.5in} 
\setlength{\pdfpageheight}{11in} 
%
\usepackage{algorithm}
\usepackage{algorithmic}

%
\usepackage{newfloat}
\usepackage{listings}
\DeclareCaptionStyle{ruled}{labelfont=normalfont,labelsep=colon,strut=off} 
\lstset{%
	basicstyle={\footnotesize\ttfamily},
	numbers=left,numberstyle=\footnotesize,xleftmargin=2em,
	aboveskip=0pt,belowskip=0pt,%
	showstringspaces=false,tabsize=2,breaklines=true}
\floatstyle{ruled}
\newfloat{listing}{tb}{lst}{}
\floatname{listing}{Listing}
%
\pdfinfo{
/TemplateVersion (2023.1)
}

\setcounter{secnumdepth}{2} 

\usepackage{appendix}
\usepackage{csquotes}
\usepackage{comment}
\usepackage{xcolor}
\usepackage{amsthm}
\usepackage{amsmath}
\usepackage{amsfonts}
\usepackage{amssymb}
\usepackage{multirow}
\usepackage{booktabs}
\usepackage{soul}
\usepackage{subcaption}

\urlstyle{same}
\usepackage{mathtools}

\newtheorem{theorem}{Theorem}

\newtheorem{definition}[theorem]{Definition}

\newcommand{\Prob}{\boldsymbol{P}}
\DeclareMathOperator{\E}{\mathbb{E}}

\newcommand*\rot{\rotatebox{90}}


\title{Multi-Source Survival Domain Adaptation}
\author {Ammar Shaker\thanks{The corresponding author.} , Carolin Lawrence\\[3mm]
    NEC Laboratories Europe GmbH, Heidelberg, Germany\\
    \{Ammar.Shaker,Carolin.Lawrence\}@neclab.eu
}

\begin{document}

\maketitle
\begin{abstract}
 
Survival analysis is the branch of statistics that studies the relation between the characteristics of living entities and their respective survival times, taking into account the partial information held by censored cases. A good analysis can, for example, determine whether one medical treatment for a group of patients is better than another. With the rise of machine learning, survival analysis can be modeled as learning a function that maps studied patients to their survival times. To succeed with that, there are three crucial issues to be tackled. 
 First, some patient data is censored: we do not know the true survival times for all patients. Second, data is scarce, which led past research to treat different illness types as domains in a multi-task setup. Third, there is the need for adaptation to new or extremely rare illness types, where little or no labels are available. In contrast to previous multi-task setups, we want to investigate how to efficiently adapt to a new survival target domain from multiple survival source domains. 
 For this, we introduce a new survival metric and the corresponding discrepancy measure between survival distributions.
 These allow us to define domain adaptation for survival analysis while incorporating censored data, which would otherwise have to be dropped. Our experiments on two cancer data sets reveal a superb performance on target domains, a better treatment recommendation, and a weight matrix with a plausible explanation.

\end{abstract}

\section{Introduction}


The abundance of health records has massively increased in the last few decades, mainly due to the advancement of data collection methods and the increasing financial support for medical trials and research. To determine the effects of a specific environment or the success of a treatment, survival analysis can be used to study the relation between the characteristics of living entities and their respective survival times. This induced relation is often described by the survival function or the hazard function, which models the conditional propensity for the event of death to happen.

A crucial challenging characteristic of learning with health records is censoring, which is the case when only partial information about the patient's survival is known. This could happen either due to losing track of the patient or the termination of the study before observing the intended event on all patients. 
Simply discarding this data would lead to losing all the partial information carried by the censored cases, which would be particularly harmful when censoring is prevalent. This, for example, occurs in the messenger RNA data for breast adenocarcinoma, where censoring exceeds 87\%\footnote{\url{https://www.cancer.gov/about-nci/organization/ccg/research/structural-genomics/tcga}}. 

While censoring makes the direct application of machine learning methods unfeasible, active research tries to tackle this challenge. One essential line of work to tackle this challenge adopts the proportional hazards assumption (PH)~\cite{Cox:1972:RMLT}, instead of attempting to fully model the survival function. More recently, 
traditional survival analysis methods~\cite{Cox:1972:RMLT} have been complemented and then superseded by machine learning approaches; for a survey, see~\cite{wang2019machine}. 
For example, with the increasing success of deep learning methods, DeepSurv~\cite{katzman2018deepsurv} has reported a significant increase in performance by employing a neural network with a loss function adapted to hold the assumed proportionality of hazards.


An additional challenge arises when there is insufficient data for a particular problem of interest. This scenario is quite relevant in the medical field, where some diseases are more common than others, such as the varying incidence rates of cancer types confirmed by The Cancer Genome Atlas (TCGA) data. The issue is also present 
for new illnesses that arise, such as a significantly changed variant of a previous disease. In such a setup, a fitting machine learning technique would be multi-source domain adaptation~\cite{mansour2008domain}, which tries to exploit the knowledge-transfer from multiple source domains into the target domain. To the best of our knowledge, there has not been yet any work that tackles domain or multi-source domain adaptation for survival domains.


In this work, we introduce a first attempt to transfer knowledge from multiple source survival domains to a target survival domain. Our main contributions are summarized as follows:
\begin{itemize}
  \item{We construct the symmetric discordance index ($SDI$) to measure the distance between risk functions. We show the utility of $SDI$ in the survival domain adaptation in which multiple survival source tasks are observed (Section \ref{subsec:loss}). Second, we introduce the survival domain discrepancy distance $D_{SDI-disc}(P_s,P_t)$ to measure the proximity between distributions ($P_s$, $P_t$) with respect to hypothesis space $\mathcal{H}$ (Section \ref{subsec:bound}).}
\item{We derive an error generalization bound for survival target domains (Section \ref{subsec:bound}) and we employ this bound in an adversarial min-max optimization problem objective (Section \ref{subsec:optimization}).}
\item{We show empirically on two TCGA data sets the utility of our method in both the unsupervised and the partially supervised settings. We also show that our approach facilitates treatment recommendation that is in 66\% of the cases better than the administered treatment. Additionally, we learn a weight matrix that discovers relations between the different cancer types (Section \ref{sec:eval}).}
\end{itemize}



\section{Background: Survival Analysis}

\subsection{Preliminaries}
\label{sec:Preliminaries}
Survival analysis methods aim at learning the relation between features of individuals and their corresponding survival times (time-to-event). We use the term \textit{instance} instead of \textit{individual} since studied subjects could be humans, animals, or even mechanical parts.
Typically, survival data take the form 
$D =\{(\boldsymbol{x}_i,t_i,\delta_i)| i\in \{1,\dots,n\}\}$, where $n$ is the number of instances, $\boldsymbol{x}_i \in \mathbb{R}^d$ is a vector of covariates, $t_i$ is either the observation time of the event or the censoring time and $\delta_i$ is an event indicator that reveals the status of censoring, i.e., $\delta_i=0$ for censored cases and $\delta_i=1$ otherwise. Censoring occurs when the target event is not observed before the termination of the study; thus, we acquire only the partial information about surviving at least till $t_i$. We consider only right-censoring in which the actual survival time of a censored instance is after the time of the last observation, i.e., censoring time.

\subsection{Survival Functions}
The time-to-event $t$ is a random variable that can be characterized by three functions: \textit{(i)} the probability density function, \textit{(ii)} the survival function, and \textit{(iii)} the hazard function. 
Knowing any of these functions leads to deriving the other two.
Given the random variable $T$, time-to-event, the density function models the probability for the event to occur in infinitesimal interval $[t,t+\Delta t]$, i.e., $f(t) = \, \lim_{\Delta t \to 0} \frac{\Prob \left\{ t < T \leq t + \Delta t \right\}}{\Delta t}$.
The survival function, $S(\cdot)$, models the probability of surviving till time $t$: $S(t) \, = \, \Prob \left\{T > t \right\} = 1 - F(t) = \int_t^\infty f(x)\, dx$, where $F(\cdot)$ is the cumulative distribution function. The conditional probability for the event to occur in the interval $[t,t+\Delta t]$, provided it has not occurred before $t$, is called the hazard function, $\lambda(\cdot)$; $\lambda(t)  = \, \lim_{\Delta t \to 0} \frac{\Prob \left\{ t < T \leq t + \Delta t | T > t \right\}}{\Delta t} = \, \frac{f(t)}{S(t)}$. Both $f(\cdot)$ and $\lambda(\cdot)$ can be derived from $S(\cdot)$ as 
$f(t) = \, \frac{d}{dt}[1-S(t)] = -S'(t)$ 
and $\lambda(t) = \, \frac{f(t)}{S(t)}=\frac{-S'(t)}{S(t)}$.

Since the interest of the three survival functions is instance-wise, in the remaining of the paper, we extend the notation by adding the vector of the instance's covariates as a parameter, i.e., $f(t;\boldsymbol{x})$, $F(t;\boldsymbol{x})$, $S(t;\boldsymbol{x})$ and $\lambda(t;\boldsymbol{x})$.

The proportional hazards (PH) assumption, which is first introduced in the Cox proportional hazards model~\cite{Cox:1972:RMLT}, assumes constant proportionality of hazards between instances over time, i.e., the hazard ratio $HR = \lambda(t;\boldsymbol{x}_1)/\lambda(t;\boldsymbol{x}_2)$ between the instances $\boldsymbol{x}_1$ and $\boldsymbol{x}_2$ is constant. Hence, for an instance $\boldsymbol{x}$, the hazard is the product of the baseline hazard $\lambda_0(t)$ and a time-independent function $r(\boldsymbol{x})$, i.e., $\lambda(t;\boldsymbol{x}) \, = \, \lambda_0(t)\cdot r(\boldsymbol{x})$. The Cox PH model assumes that $r(\boldsymbol{x})$ is a log-linear function of $\boldsymbol{x}$:
\begin{align}
\label{eq:Cox1}
\lambda(t;\boldsymbol{x})  = \, \lambda_0(t) \cdot \exp\left( \sum_{i=1}^n \beta_i \cdot x_i \right) \enspace ,
\end{align}
where $\lambda_0(t)$ is the hazard when all covariates are set to zero~\cite{Lee:1992:SMSDA}.
The coefficients $\beta_i$ are found by maximizing the log of the so-called partial likelihood (PL); this likelihood depends on ordering events instead of their joint probabilities. PL computes the event's conditional probability only for non-censored instances, given their risk set, which contains the surviving instances so far. 

\subsection{Performance Measures}
To estimate the performance of a fitted survival model, evaluation measures compute the agreement between the rank of the predicted survivals and the actual survival times. The concordance, also known as the $\text{C-index}$,~\cite{harrell1982evaluating,harrell1984regression} measures how well a risk model ranks instances according to their estimated hazards, survivals, or predicted death times. To this end, it considers each pair of instances and checks if the model's prediction ranks the two instances in accordance with their true order of events. Each non-censored instance is compared against all instances that outlive it (having a larger event or censoring time). Each correct ranking of pairs is counted as 1, and the final score is normalized over the total number of valid pairs. For example, if the Cox PH model, Eq.~(\ref{eq:Cox1}), is used, the $\text{C-index}$ takes the form
\begin{align}
\label{eq:C-index}
 \text{C-index}(r;D) &= \frac{1}{Z}  \sum_{\substack{(\boldsymbol{x}_i,t_i,\delta_i) \in D\\ \land \delta_i=1}} \, \sum_{
\substack{(\boldsymbol{x}_j,t_j,\delta_j) \in D\\ \land t_j>t_i}} I[r(\boldsymbol{x}_i) > r(\boldsymbol{x}_j)] \enspace , 
\end{align}\noindent where 
$Z = \sum_{\substack{(\boldsymbol{x}_i,t_i,\delta_i) \in D\\ \land \delta_i=1}} \, \sum_{
\substack{(\boldsymbol{x}_j,t_j,\delta_j) \in D\\ \land t_j>t_i}} 1$, $\{(\boldsymbol{x}_j,t_j,\delta_j) \in D| t_j>t_i\}$ is the risk set of the instance $(\boldsymbol{x}_i,t_i,\delta_i)$, 
$r(\cdot)$ is the time-independent risk function, and $I[\cdot]$ is the indicator function. Eq.~(\ref{eq:C-index}) becomes the area under the curve (AUC) when the event times are replaced with the binary problem (event, no event) with no censoring cases; see~\cite{haider2020effective}. 
Alternatively, the loss based on discordance can be computed as in 
$\text{D-index}(r;D) = 1- \text{C-index}(r;D)$.

\section{Multi-Source Survival Domain Adaptation}
\label{sec:Multi-Source Survival Domain Adaptation}
For survival instances $(\boldsymbol{x}_i,t_i,\delta_i)$, let $\mathcal{X}=  \mathbb{R}^d$ and $\mathcal{Y} = \mathbb{R}^+$ be the spaces of the input's covariates and the event time, as described earlier.
Let $\{D_i\}_{i=1}^K$ be $K$ survival source domains characterized by the distributions $P_i$, and let $\{(\boldsymbol{x}_i^j,y_i^j,\delta_i^j)\}_{j=1}^{N_i}$
be the acquired instances for each domain $D_i$. Let $D_t$ be the target survival domain for which samples are described only by their covariates $\boldsymbol{x}_t^j$ and the event indicators $\delta_t^j$, whereas the survival times remain missing, i.e., the instances $\{(\boldsymbol{x}_t^j,?,\delta_t^j)\}_{j=1}^{N_t}$ are observed from $P_{t}$.

Typically, multi-source domain adaptation aims at adapting a model fitted on the source domains to the target domain while minimizing the expected loss on $D_t$. This work considers multi-source survival domain adaptation (MSSDA) where the true survival times $t_t^j$ are unknown.

\subsection{Discordance Loss for Survival Data}\label{subsec:loss}
The $\text{D-index}$ could serve as a loss for risk functions on survival domains; however, it does not enjoy symmetry when the roles of the ground truth and the risk function are exchanged due to censoring cases. We aim to enforce symmetry because it is an essential property for bounding the generalization loss on the target domain. To impose symmetry, we define the symmetric discordance index ($SDI$) for two risk functions $r_1$ and $r_2$, and prove that it is a metric satisfying the triangular inequality.

$SDI$ is composed of two parts \textit{(i)} the disagreements in ranking each pair of non-censored instances, and (ii) the disagreements in ranking pairs of censored and non-censored instances. The weights $\alpha_1$ and $\alpha_2$ transform $SDI$ into a convex combination of these two parts.
Moreover, $SDI$'s symmetry follows from counting the discordance with censored cases twice, once for each of the risk functions while considering the other as the ground truth.
{\fontsize{9.5pt}{10.8pt} \selectfont 
\begin{align}
& SDI(r_1,r_2;D) = \frac{\alpha_1}{\alpha_1+\alpha_2} \frac{1}{Z}  \sum_{\substack{(\boldsymbol{x}_i,t_i,\delta_i),\\(\boldsymbol{x}_j,t_j,\delta_j) \in  D_{ev}\\ i<j}} I\Bigg[
\bigg(\Big(r_1(\boldsymbol{x}_i) < r_1(\boldsymbol{x}_j)\Big) \land 
\Big(r_2(\boldsymbol{x}_i) > r_2(\boldsymbol{x}_j)\Big)\bigg)\nonumber\\
&\lor\bigg(\Big(r_1(\boldsymbol{x}_i) > r_1(\boldsymbol{x}_j)\Big) \land \Big(r_2(\boldsymbol{x}_i) < r_2(\boldsymbol{x}_j)\Big)\bigg)\bigg] 
+ \frac{\alpha_2}{\alpha_1+\alpha_2} \frac{1}{|D_{ce}|} \sum_{\substack{(\boldsymbol{x}_i,t_i,\delta_i) \in D_{ce}}}
\frac{|C_{r_1,\boldsymbol{x}_i} \triangle \ C_{r_2,\boldsymbol{x}_i} |}{|C_{r_1,\boldsymbol{x}_i} \cup C_{r_2,\boldsymbol{x}_i} |} \label{eq:SDI}\\
&\text{s.t.}\; C_{r,\boldsymbol{x}} = 
\Big\{(\boldsymbol{x}_j,t_j,\delta_j) \in D_{ev}| r(\boldsymbol{x}_j) > r(\boldsymbol{x})\Big\} , \
 \alpha_1 = {|D_{ev}|\choose 2}, \ \alpha_2 = |D_{ev}|.|D_{ce}|/2, \
Z = \sum_{\substack{(\boldsymbol{x}_i,t_i,\delta_i),\\(\boldsymbol{x}_j,t_j,\delta_j) \in D_{ev}\\ i<j }} 1  \nonumber\\
&D_{ev} = \{(\boldsymbol{x}_j,t_j,\delta_j) \in D| \delta_j = 1\}, D_{ce} = \{(\boldsymbol{x}_j,t_j,\delta_j) \in D| \delta_j \neq 1\} \ , \nonumber 
\end{align}}\noindent
where $D_{ev} \subseteq D$ and $D_{ce} \subseteq D$ are the sets of non-censored and censored instances, respectively. $C_{r,\boldsymbol{x}}$ is the set of instances (from $D$) that are assumed to outlive $\boldsymbol{x}$, according to the risk function $r$. $\triangle$ is the set symmetric difference (disjunctive union).

Notice that the $SDI$ is equivalent to Kendall's tau distance between two rankings when \textit{(i)} counting 0.5 as a score for ties on the survival times, and \textit{(ii)} no censoring.

Next, we present the formal definition of Kendall tau as a rank distance; thereafter, Theorem~\ref{theorem:SDI-index} proves that SDI is a metric by presenting it as a weighted sum of the Kendall tau and the Jaccard metric.


\begin{definition}{Kendall tau~\cite{cicirello2019kendall}:}\label{def:Kendall tau}
Let $S=\{s_1,\dots,s_n\}$ be the set of $n$ ordered instances, and let $\tau_1$ and $\tau_2$ be two different permutations of instances in $S$, such that for each $s_i\in S$, $\tau(s_i)$ gives the ranking of $s_i$ in the permutation $\tau$. Kendall tau distance~\cite{kendall1955rank} measures the number of pair-wise interventions needed to make two permutations become the same. Kendall tau between the permutations $\tau_1$ and $\tau_2$ is defined as: 
\begin{align}
\kappa(\tau_1,&\tau_2) = \frac{2 K_{d}}{ n*(n-1)} \label{eq:kindalltau1}\\
 K_{d} =& |\{(s_i,s_j) \in S \times S| \, i<j \, \land \nonumber\\
&( ( ( \tau_1(s_i) < \tau_1(s_j)) \land ( \tau_2(s_i) < \tau_2(s_j))) \, \lor \nonumber\\
&( ( \tau_1(s_i) > \tau_1(s_j)) \land ( \tau_2(s_i) > \tau_2(s_j)))) \}| \enspace , 
\end{align}
where $K_{d}$ is the number of discordance pairs.

\end{definition}

\begin{theorem}
	\label{theorem:SDI-index}	
	Given the survival data $D =\{(\boldsymbol{x}_i,t_i,\delta_i)| i\in \{1,\dots,n\}\}$ and the risk estimators $r_1,r_2:\mathbb{R}^d \rightarrow \mathbb{R}^+$, the symmetric discordance index $SDI$ (Eq.~\ref{eq:SDI}) is a metric.
\end{theorem}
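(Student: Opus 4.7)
The plan is to rewrite $SDI$ as a convex combination of two well-known metrics and invoke the fact that non-negative combinations of metrics on a common space remain metrics. The two constituents are \emph{(i)} a Kendall tau distance between the rankings that $r_1$ and $r_2$ induce on $D_{ev}$, and \emph{(ii)} an arithmetic mean, over $\boldsymbol{x}_i \in D_{ce}$, of the Jaccard distances between the outlive sets $C_{r_1,\boldsymbol{x}_i}$ and $C_{r_2,\boldsymbol{x}_i}$. With this decomposition in hand, non-negativity, symmetry, and identity of indiscernibles are immediate from inspection of Eq.~\eqref{eq:SDI}: the disjunction inside $I[\cdot]$ is invariant under $r_1 \leftrightarrow r_2$, and $\triangle$ and $\cup$ are symmetric in their arguments, so the real work is the triangle inequality.

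For part~\emph{(i)}, the indicator sum in the first term of Eq.~\eqref{eq:SDI} counts unordered pairs in $D_{ev}$ on which $r_1$ and $r_2$ rank strictly in opposite directions, which after the normalization by $Z=\binom{|D_{ev}|}{2}$ equals the Kendall tau distance of Definition~\ref{def:Kendall tau} applied to the two rankings restricted to $D_{ev}$. Since Kendall tau is a known metric on permutations~\cite{cicirello2019kendall}, the first term is a scaled metric on the space of rankings on $D_{ev}$.

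For part~\emph{(ii)}, each summand has the form $J(A,B)=|A\triangle B|/|A\cup B|$ with $A=C_{r_1,\boldsymbol{x}_i}$ and $B=C_{r_2,\boldsymbol{x}_i}$. This is the classical Jaccard (or Tanimoto) distance on finite subsets of $D_{ev}$; its triangle inequality is the one non-trivial ingredient in the proof, and I would invoke the standard argument (Lipkus, 1999) rather than reproduce it. An arithmetic mean of metrics on a common space inherits symmetry and the triangle inequality term-by-term, so part~\emph{(ii)} is itself a metric on the sets $\{C_{r,\boldsymbol{x}_i}\}_{\boldsymbol{x}_i\in D_{ce}}$ induced by the risk function.

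Finally, I would combine parts~\emph{(i)} and~\emph{(ii)} with the weights $\alpha_1/(\alpha_1+\alpha_2)$ and $\alpha_2/(\alpha_1+\alpha_2)$, which are non-negative and sum to $1$. Any non-negative linear combination of (pseudo-)metrics on the same underlying space is a (pseudo-)metric, which yields
\[
SDI(r_1,r_3;D) \le SDI(r_1,r_2;D) + SDI(r_2,r_3;D),
\]
completing the triangle inequality. The main obstacle I anticipate is exactly the Jaccard triangle inequality, which is the only step not reducible to bookkeeping; a secondary subtlety is that ties in $r_1$ or $r_2$ on $D_{ev}$ are ignored by the strict inequalities inside $I[\cdot]$, so part~\emph{(i)} is formally a pseudo-metric on risk functions and becomes a genuine metric only on the quotient identifying risk functions that produce the same ranking on $D_{ev}$ and the same outlive sets on $D_{ce}$. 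I would state the theorem with this equivalence understood.
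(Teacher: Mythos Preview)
Your proposal is correct and follows essentially the same route as the paper: decompose $SDI$ into a Kendall tau term on $D_{ev}$ and a (mean of) Jaccard distance term over the censored instances, then use that a non-negative weighted sum of metrics is a metric. Your treatment is in fact more careful than the paper's, which does not explicitly address the tie/pseudo-metric subtlety you flag or justify the Jaccard triangle inequality beyond naming it.
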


The proof of Theorem~\ref{theorem:SDI-index} follows from demonstrating that the $SDI$ is a weighted average of two metrics, Kendall tau, and the Jaccard index. The first term, Kendall tau, is measured over the set of non-censored instances. The second term is the sum of the Jaccard index, for each censored instance, on the two risk sets induced by the ranking function $r_1$ and $r_2$, see the proof in the supplementary material. Establishing $SDI$ as a metric implies that it enjoys the triangular inequality. This in turn is a necessary criterion that will allow us to derive a generalization bound for the target domain.



\subsection{Generalization Bound for Target Domain}\label{subsec:bound}

To derive the bound of the loss on the target domain by that of the source domains, we follow~\cite{cortes2014domain}. We first define a discordance-based distance $D_{SDI-disc}$ to quantify the discrepancy of two distributions $P_s$ and $P_t$, over sets from $\mathcal{X}$, based on the loss $SDI:\mathcal{H}\times\mathcal{H}\times\mathcal{X}^N\rightarrow[0,1]$, where $N$ is the size of the sets over which the distance between two rankings is measured, and $\mathcal{H}$ is the hypothesis space.

\begin{definition}{$D_{SDI-disc}$:}\label{def:discordance_hypothesis_divergence}
	The discordance-based distance ($D_{SDI-disc}$) is the largest distance between two domains (concerning the hypothesis space $\mathcal{H}$) in a metric space equipped with the metric $SDI$ as a distance function. Let $D_s$ and $D_t$ be two survival domains with their corresponding distributions $P_s$ and $P_t$. In survival domains, some samples undergo censoring independent of their features, where the censoring time is bound by the survival time. Each hypothesis in $\mathcal{H}$ is a scoring function that acts as a ranking or a risk function. For the distributions $P_s$ and $P_t$, and $N \in \mathbb{N}$, $D_{SDI-disc}$ takes the form:
	\begin{align}
	&D_{SDI-disc}(P_s,P_t) = 
	\max_{h,h^\prime \in \mathcal{H}} \E_{\substack{
	M_s=\{x_1,\dots,x_{N}\sim P_s\}\\
	M_t=\{x_1,\dots,x_{N}\sim P_t\}
	}} \left| SDI(h,h^\prime;M_s) - SDI(h,h^\prime;M_t)\right| \enspace ,
	\label{eq:d-disc}
	\end{align} \noindent
	where $M_s$ and $M_t$ are the sets of size $N$ from the source and target domains, respectively.
\end{definition}

The discordance distance, $D_{SDI-disc}$ reaches its maximum when two ranking functions $h,h^\prime \in \mathcal{H}$ rank the instances of the survival source domain similarly (high concordance) and differently rank the samples of the target domain (high discordance), or vice-versa. Theorem~\ref{theorem:SDI_bound}	utilizes $D_{SDI-disc}$ as a distance between distributions to bound the discordance loss on the target survival domain.
\begin{theorem}
\label{theorem:SDI_bound}	
Let $S$ be a set of $K$ source survival domains $S=\{D_{s_1},\dots,D_{s_K}\}$ with distributions $P_{s_i}$, and denote the ground truth mapping (risk) function in $D_{s_i}$ as $f_{s_i}$. Similarly, let $D_t$ be a target survival domain with the corresponding distribution $P_t$ and the true risk function $f_t$. 
Assume the following sets: $M_{s_i}=\{x_1,\dots,x_{N}| x_j \sim P_{s_i}\}$ and $M_t=\{x_1,\dots,x_{N}| x_j \sim P_t\}$ to be sampled, of size $N$, from the source domains $D_{S_i}$ in $S$ and the target domain $D_t$, respectively. Also, assume a weighting scheme 
$w_i$ for the source domain $D_{s_i}$ s.t. $\sum_{i=1}^{K}{w_i=1}$. 
For any hypothesis $h\in \mathcal{H}$, the $SDI$ on the target domain $D_t$ is bound in the following way:
\begin{align}
&SDI(r_h,f_t; M_t) \leq \eta_{D}(f_S,f_t) + 
\sum_{i=1}^k w_i \cdot \bigg(SDI(r_h,f_{s_i}; M_{s_i}) + D_{SDI-disc}(P_{s_i},P_t) \bigg)  \enspace ,
\end{align}\noindent
where $r_h$ is the risk (or ranking) function induced by $h$ and 
\[\eta_{D}(f_{S},f_t)= \min_{h^{*} \in \mathcal{H}}
SDI(r_{h^*},f_t; M_t) + \sum_{i=1}^k w_i \cdot SDI(r_{h^*},f_{s_i}; M_{s_i}) 
\]\noindent
is the minimum joint empirical $SDI$ losses on the sources $S$ and the target $D_t$, achieved by an optimal hypothesis $h^{*}$.
\end{theorem}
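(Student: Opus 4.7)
The plan is to follow the classical domain-adaptation argument of Cortes and Mohri, specialized to our setting via the triangle inequality guaranteed by Theorem~\ref{theorem:SDI-index} and the discrepancy of Definition~\ref{def:discordance_hypothesis_divergence}. The overall idea is to insert the joint minimizer $h^{*}$ as a bridge between $r_{h}$ and $f_{t}$ on the target sample, then swap the target sample for each source sample by paying $D_{SDI-disc}$, and finally split the ``$r_{h}$ vs.\ $r_{h^{*}}$'' term on the source back into an empirical source loss plus the adaptivity constant $\eta_D$.

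Concretely, I would proceed in four steps. \textbf{Step 1.} Because $SDI$ is a metric on risk functions (Theorem~\ref{theorem:SDI-index}), apply the triangle inequality on the target sample:
\[
SDI(r_h, f_t; M_t) \;\leq\; SDI(r_h, r_{h^{*}}; M_t) \;+\; SDI(r_{h^{*}}, f_t; M_t).
\]
\textbf{Step 2.} Since $\sum_{i=1}^{K} w_i = 1$, rewrite the first term as $\sum_i w_i \cdot SDI(r_h, r_{h^{*}}; M_t)$ and, for each $i$, invoke Definition~\ref{def:discordance_hypothesis_divergence} (in expectation over the random samples $M_{s_i}$ and $M_t$) to obtain
\[
SDI(r_h, r_{h^{*}}; M_t) \;\leq\; SDI(r_h, r_{h^{*}}; M_{s_i}) \;+\; D_{SDI-disc}(P_{s_i}, P_t).
\]
\textbf{Step 3.} Apply the triangle inequality once more on each source sample:
\[
SDI(r_h, r_{h^{*}}; M_{s_i}) \;\leq\; SDI(r_h, f_{s_i}; M_{s_i}) \;+\; SDI(r_{h^{*}}, f_{s_i}; M_{s_i}).
\]
\textbf{Step 4.} Sum over $i$ with weights $w_i$, add the dangling $SDI(r_{h^{*}}, f_t; M_t)$ from Step~1, and regroup. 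The $SDI(r_h, f_{s_i}; M_{s_i})$ and $D_{SDI-disc}(P_{s_i},P_t)$ contributions yield exactly the per-source summands in the stated bound, while
\[
SDI(r_{h^{*}}, f_t; M_t) \;+\; \sum_{i=1}^K w_i \, SDI(r_{h^{*}}, f_{s_i}; M_{s_i}) \;=\; \eta_D(f_S, f_t)
\]
by the very definition of $h^{*}$ as the joint minimizer over $\mathcal{H}$.

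The main subtlety, and the step requiring the most care, is the invocation of $D_{SDI-disc}$ in Step~2: the discrepancy is defined as a $\max$ over pairs $(h,h^{\prime})\in\mathcal{H}$ of an \emph{expectation} over random samples, whereas the bound we want concerns the specific pair $(r_h, r_{h^{*}})$ on specific realized samples. The cleanest way to handle this is to interpret the theorem in expectation over $M_{s_i}$ and $M_t$, so that
\[
\E\bigl|SDI(r_h,r_{h^{*}};M_t) - SDI(r_h,r_{h^{*}};M_{s_i})\bigr| \;\leq\; D_{SDI-disc}(P_{s_i},P_t)
\]
follows immediately from maximizing over $\mathcal{H}\times\mathcal{H}$. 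A high-probability empirical version would require, in addition, a concentration argument on the indicator-based function class underlying $SDI$ (for instance via Rademacher complexities for the pairwise-disagreement class), which I would remark on but not fully derive, since all of the structural work is already done by the metric property of $SDI$ and the definition of $D_{SDI-disc}$.
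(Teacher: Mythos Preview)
Your proposal is correct and uses essentially the same approach as the paper: the triangle inequality of $SDI$ to insert $r_{h^*}$, then Definition~\ref{def:discordance_hypothesis_divergence} to swap the target sample for each source sample, then weight and sum. The paper differs only in organization---it first derives the single-source bound via the reverse-triangle-inequality form $\bigl|SDI(r_h,r_{h^*};M)-SDI(r_h,f;M)\bigr|\leq SDI(r_{h^*},f;M)$ and then aggregates over $i$---and it glosses over the same expectation-versus-realized-sample subtlety that you correctly flag in Step~2.
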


The supplementary material provides the complete proof of Theorem~\ref{theorem:SDI_bound}. This proof starts by deriving the error bound for a single source domain $D_{s_i}$. Thanks to the metric properties of $SDI$, we prove at first that 
\[
SDI(r_h,f_t; M_t) \leq SDI(r_h,f_{s_i}; M_{s_i})+ SDI(r_{h^*},f_t; M_t)+ SDI(r_{h^*},f_{s_i}; M_{s_i}) + D_{SDI-disc}(P_{s_i},P_t) \enspace .
\]
The proof concludes by reweighing and aggregating this inequality for each source domain. The main outcome of Theorem~\ref{theorem:SDI_bound} is bounding the symmetric discordance on the target domain by the quantities \textit{i)} the weighted average of the $SDI$ on the survival source domains,
\textit{ii)} the weighted mismatch between the target $D_t$ and each of $D_{S_i}$ in terms of the discordance-based distance ($D_{SDI-disc}$), and 
\textit{iii)} the minimum joint empirical $SDI$ losses on the source and target domains. Based on this result, next, we design an optimization objective for survival domain adaptation.

\begin{figure}[t!]
 \centering
\includegraphics[width=.65\linewidth,trim = 150 140 170 0,clip]{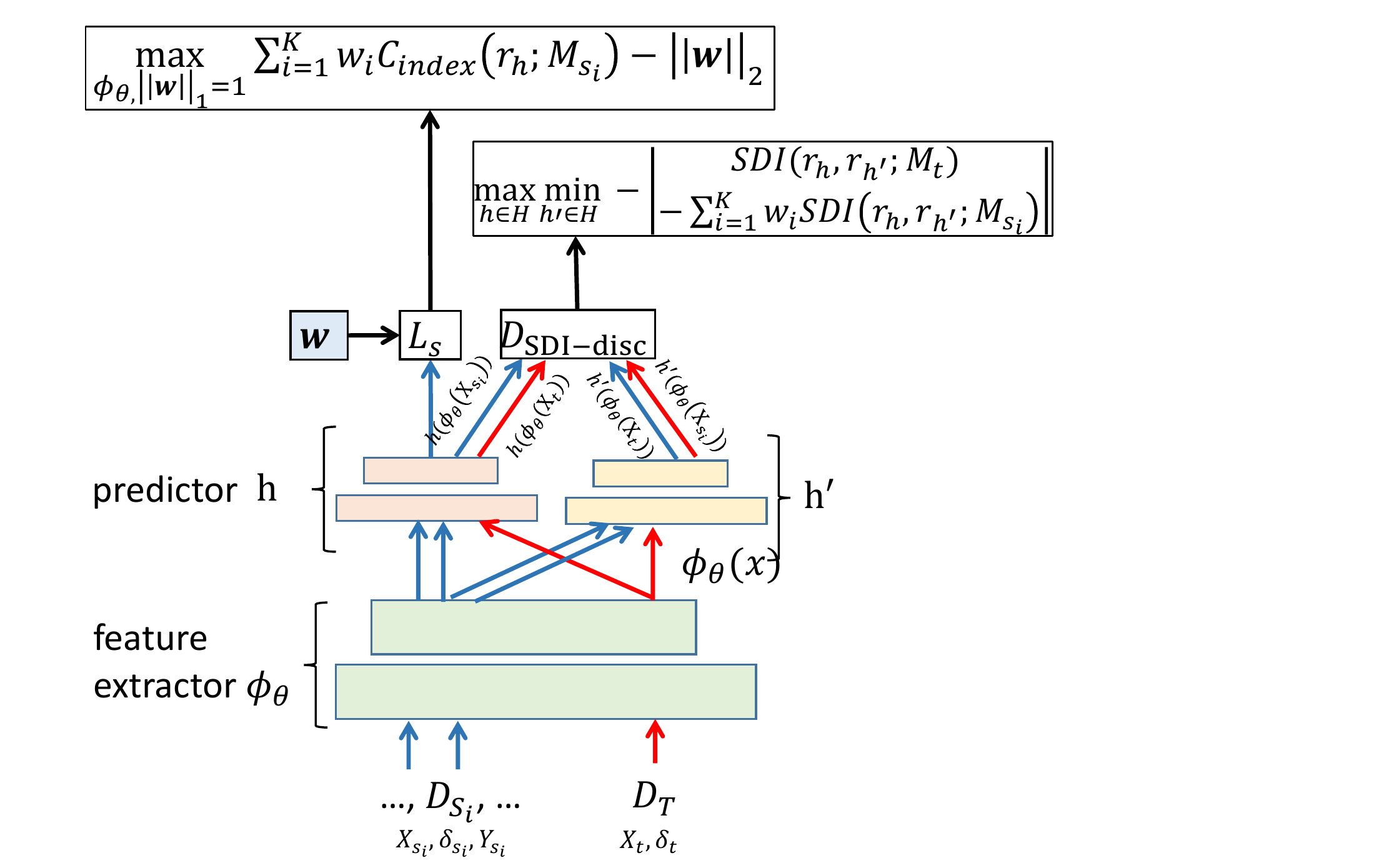}
\caption{An illustration of how symmetric discordance index ($SDI$) is employed in our multi-source survival domain adaptation method, MSSDA. 
The objective includes three terms: 1) the first term enforces the ranking function $r_h$, to be a good ranker, in terms of $\text{C-index}$, on all source domains; and 2)
the second term is an explicit realization of the weighted discordance-based distance ($w_i D_{SDI-disc}(P_t,P_{S_i})$); and 3) the third term is a regularization on the learned weights vector, $\boldsymbol{w}$, that specifies the weight for each source domain concerning the target domain.}
\label{fig:Illustrations_SA}
\end{figure}


\begin{table}[t]
      \centering
    \begin{tabular}{|p{0.004\textwidth}p{0.35\textwidth}p{0.06\textwidth}|p{0.02\textwidth}c|}
        \toprule
ID&Cancer name&Acr.&\multicolumn{2}{|c|}{Instances}\\
&&&\#&$\delta=1$\\
        \midrule
1&Breast Adenocarcinoma&BRCA&707&90\\
2&Glioblastoma Multiforme&GBM&275&176\\
3&Head and Neck Squamous Cell Carci.&HNSC&298&119\\
4&Kidney Renal Clear Cell Carcinoma&KIRC&415&136\\
5&Acute Myeloid Leukaemia&LAML&172&105\\
6&Lung Adenocarcinoma&LUAD&148&49\\
7&Lung Squamous Cell Carcinoma&LUSC&163&68\\
8&Ovarian Serous Carcinoma&OV&315&181\\
        \hline
    \end{tabular}
        \caption{Properties of the mRNA data.}\label{tab:mRNA}
\end{table}
\begin{table}
      \centering
    \begin{tabular}{|p{0.0\textwidth}cp{0.02\textwidth}c|p{0.02\textwidth}c|
    p{0.02\textwidth}c|p{0.015\textwidth}|}
        \toprule
ID& Acronym&\multicolumn{2}{c}{Instances}&\multicolumn{2}{|c|}{Pharma.}&\multicolumn{2}{c|}{Rad.}&TR\\
&&\#&$\delta=1$&\#&$\delta=1$&\#&$\delta=1$&\\
        \midrule
1&ACC&80&29&34&13&2&0&\\
2&BLCA&407&178&111&40&25&15&X\\
3&BRCA&754&105&238&15&31&2&X\\
4&CESC&307&72&4&2&35&9&\\
5&CHOL&36&18&13&7&1&1&\\
6&ESCA&184&77&12&5&22&5&X\\
7&HNSC&484&203&6&2&134&46&\\
8&KIRC&254&76&24&16&7&3&\\
9&KIRP&290&44&18&13&4&4&\\
10&LGG&510&124&50&10&70&16&X\\
11&LIHC&371&128&39&18&12&4&X\\
12&LUAD&441&157&103&36&34&24&X\\
13&LUSC&338&137&72&20&19&13&X\\
14&MESO&86&73&29&26&2&1&\\
15&PAAD&178&93&75&41&-&-&\\
16&SARC&259&98&46&22&48&15&X\\
17&SKCM&97&26&22&9&1&0&\\
18&STAD&382&147&106&42&1&0&\\
19&UCEC&410&72&55&18&82&10&X\\
20&UCS&56&34&15&12&5&5&\\
21&UVM&80&23&11&6&4&2&\\
        \hline
    \end{tabular}
        \caption{Properties of the miRNA data. The treatment columns (Pharmaceutical and Radiation) are collected by matching the data with The Cancer Genome Atlas (TCGA). The TR column indicates whether or not the cancer type is used for evaluating the treatment recommendation.}\label{tab:miRNA}
\end{table}

\subsection{Optimization Problem of Multi-Source Survival Domain Adaptation}\label{subsec:optimization}
We exploit the bound derived in Theorem~\ref{theorem:SDI_bound} to enforce distribution matching through an adversarial min-max optimization objective, following domain-adversarial neural networks (DANN)~\cite{ganin2016domain}.
To this end, we search in the hypothesis space $\mathcal{H}$, where each $h \in \mathcal{H}$ defines a risk function $r_h$, the time-independent function in the hazard Eq.~(\ref{eq:Cox1}). Thus, keeping the proportional hazards assumption.
Formally, the hypotheses in $\mathcal{H}$ take the form $h:\mathcal{V}\rightarrow\mathbb{R}$, where $\mathcal{V}$ is the feature space. We also search for the feature extractor $\phi_\theta:\mathcal{X}\rightarrow\mathcal{V}$, and the weighting $\boldsymbol{w}$ of the source domains, such that:
\begin{align}
    & \max\limits_{\substack{\phi_\theta,h\in \mathcal{H}\\ ||\boldsymbol{w}||_1=1}} \min\limits_{h^{\prime} \in \mathcal{H}} \left( 
    \sum_{i=1}^{K} w_i \text{C-index}(r_h;M_{s_i})-
	\lambda_1 \left|SDI(r_h,r_{h^\prime};M_t)  -\sum_{i=1}^{K}
	w_i SDI(r_h,r_{h^\prime};M_{s_i})\right|
	 -  \lambda_2 ||\boldsymbol{w}||_2  \right) \enspace ,
	\label{eq:SDI_objective}
\end{align}
\noindent
where $r_h$ and $r_{h^\prime}$ are the ranking functions induced by the hypotheses $h$ and $h^\prime$ respectively. The first term of Eq.~(\ref{eq:SDI_objective}) enforces the ranking function $r_h$, to be a good ranker, in terms of $\text{C-index}$, on all source domains; this term is realized by minimizing the negative log-partial likelihood. 
The second term is an explicit realization of the weighted discordance-based distance ($w_i D_{SDI-disc}(P_t,P_{S_i})$). The third term is a regularization on the learned weights vector, $\boldsymbol{w}$, that specifies the weight for each source domain concerning the target domain.
This adversarial min-max game aims at finding, for the survival source and target domains, a feature extractor $\phi_\theta$ and a ranker $r_h$ such that for any other ranker $r_{h^\prime}$, the weighted distance is minimized, i.e., achieving feature invariance of the target domain and each of the source domains (in a weighted manner). We term our method the multi-source survival domain adaptation as (MSSDA), and acknowledge that comparable min-max objectives were used in~\cite{pei2018multi,saito2019semi,richard2020unsupervised,shaker2022learningVCD} outside of survival analysis.
Notice that this algorithm does not optimize for $\eta_{D}$ since this term is constant for a single source domain and for the mixture of sources in $\eta_{D}(f_{S},f_t)$ given the weighting $\boldsymbol{w}$.

Figure~\ref{fig:Illustrations_SA} depicts a graphical illustration of the proposed optimization problem; it shows the details of our method, MSSDA.
$X_{s_i}$, $\delta_{s_i}$ and $Y_{s_i}$ are the input samples, the censoring indicators, and the survival times from the source domain $D_{s_i}$; $X_t$ and $\delta_t$ are the input samples and the censoring indicators of the target domain without survival times. The hypothesis $h$ is trained to produce a good ranker $r_h$ in terms of the weighted $\text{C-index}$ on the sources. The hypothesis $h^\prime$ tries to increase the discordance-based distance ($D_{SDI-disc}$) between the target distribution and the weighted combination of source domains (i.e., $D_{SDI-disc}$).

\begin{figure*}
     \centering
     
     \begin{subfigure}[b]{.49\textwidth}
         \centering
         \includegraphics[width=.98\textwidth]{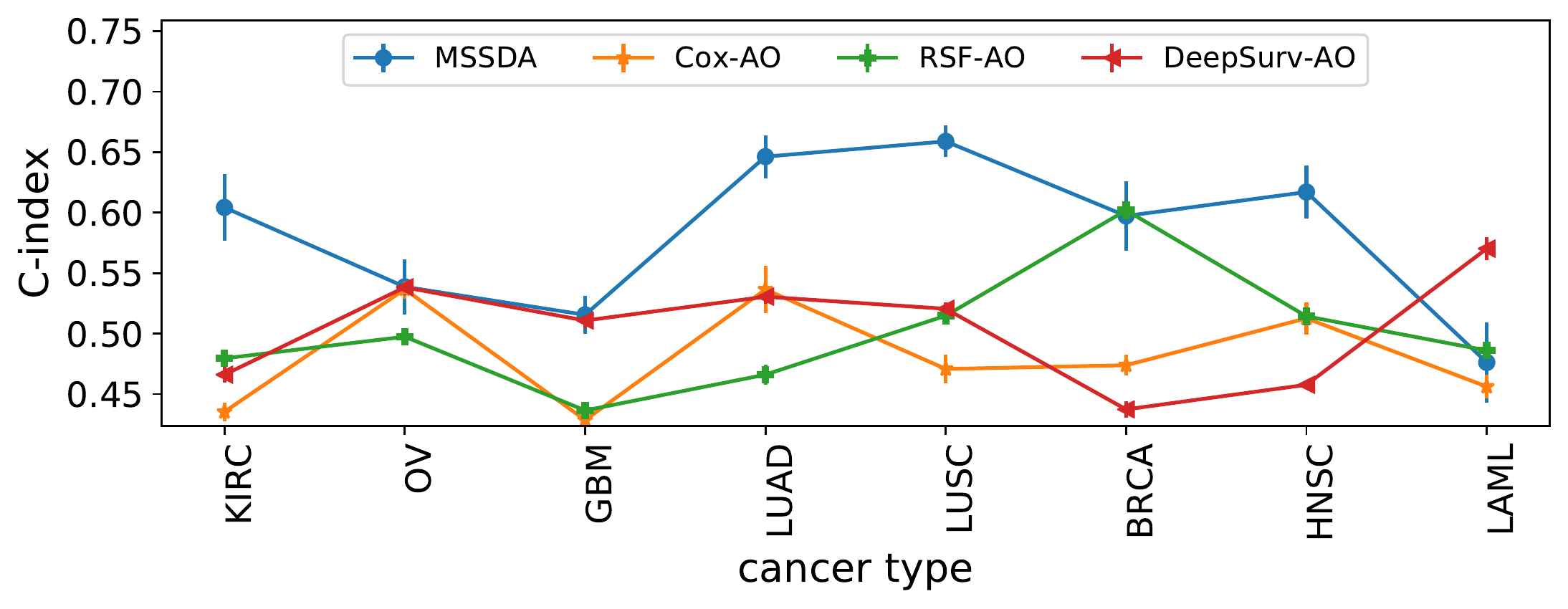}
         \caption{$\text{C-index}$ on mRNA with no supervision.}
         \label{fig:mRNANo_Supervision_RSF}
     \end{subfigure}
     \begin{subfigure}[b]{.49\textwidth}
         \centering
         \includegraphics[width=.98\textwidth]{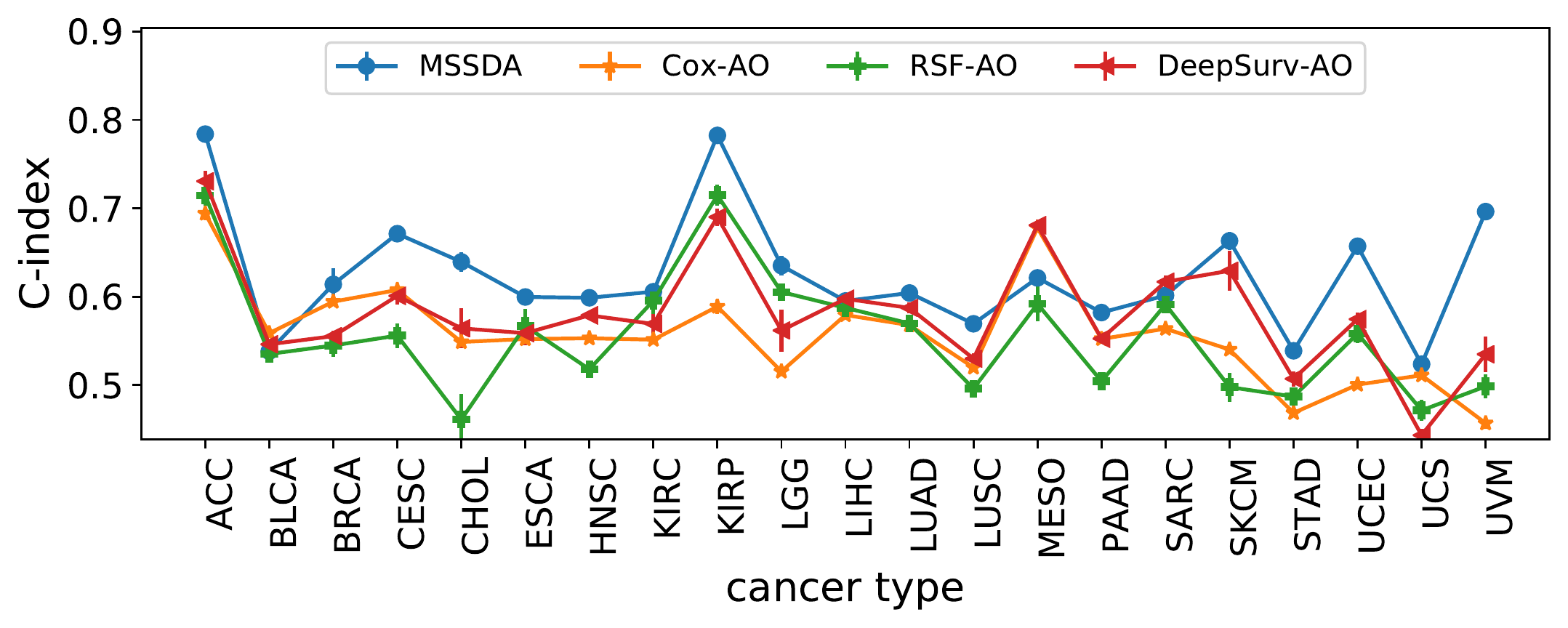}
         \caption{$\text{C-index}$ on miRNA with no supervision.}
         \label{fig:miRNANo_Supervision_RSF}
     \end{subfigure}
     
     \begin{subfigure}[b]{0.49\textwidth}
         \centering
         \includegraphics[width=.98\linewidth]{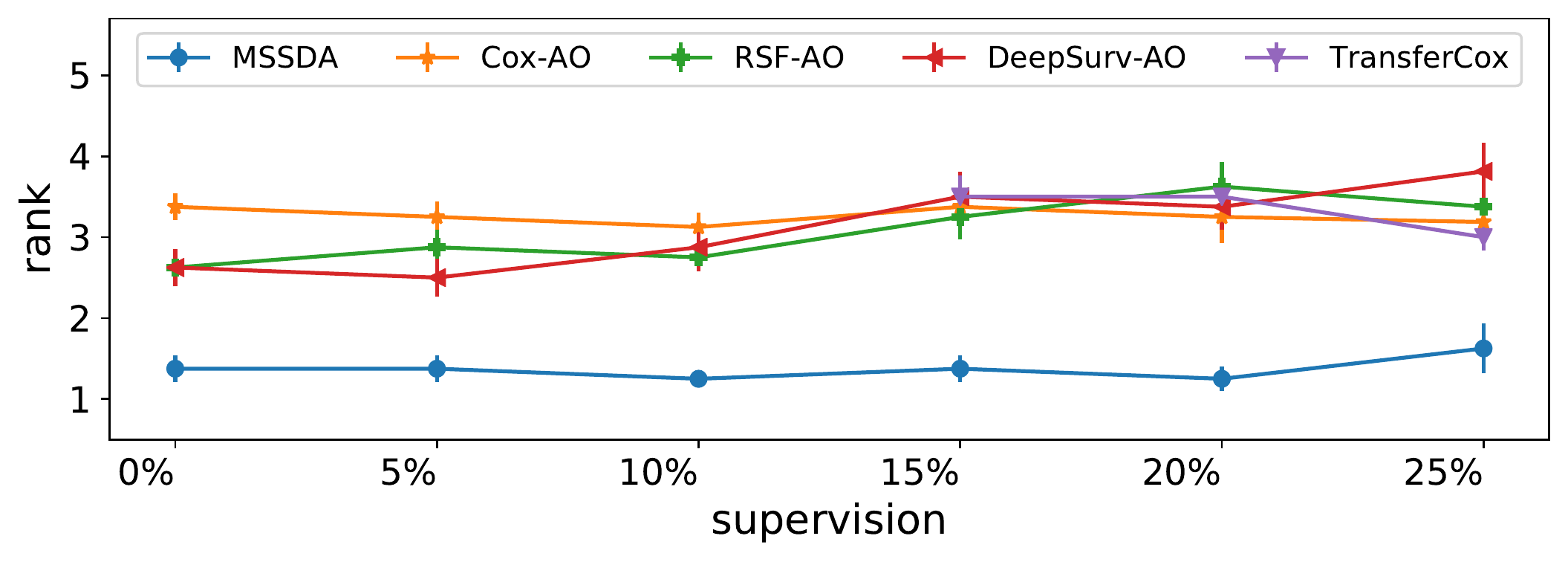}
         \caption{Rank of the different methods based on the $\text{C-index}$ on mRNA.}
         \label{fig:mRNA_RSF_Rank_c_index_rem}
     \end{subfigure}
     \begin{subfigure}[b]{0.49\textwidth}
         \centering
         \includegraphics[width=.98\linewidth]{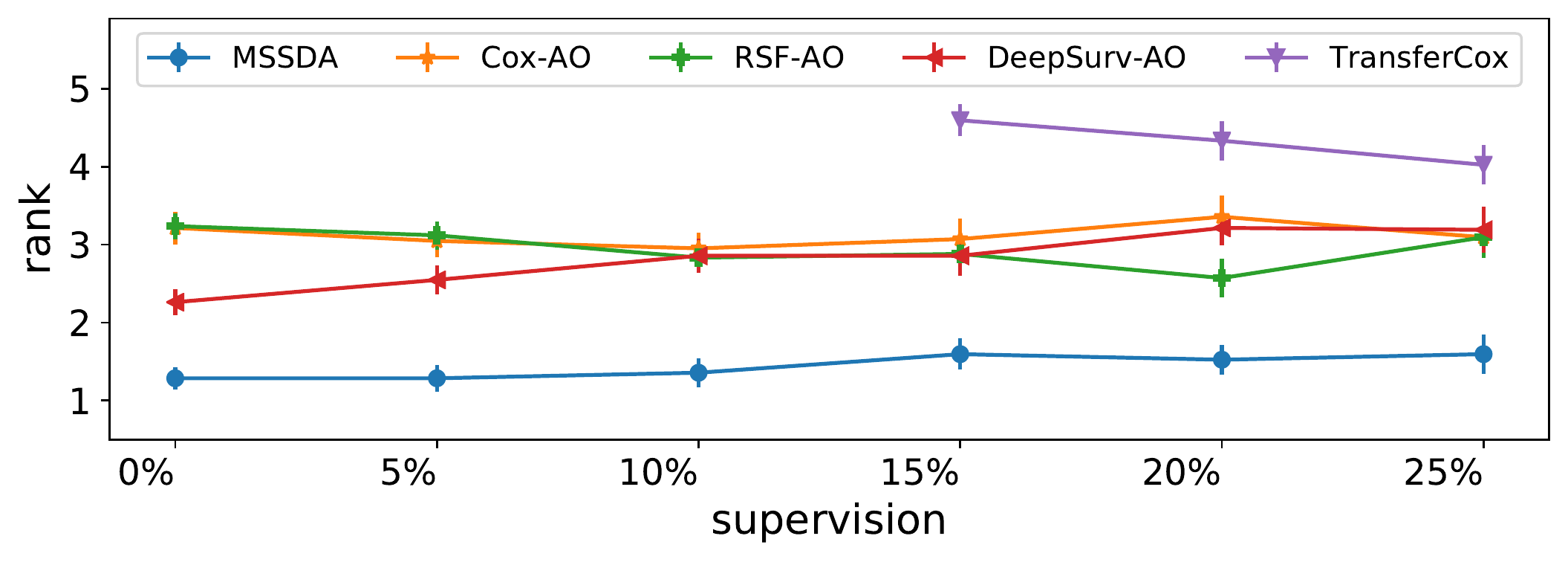}
         \caption{Rank of the different methods based on the $\text{C-index}$ on miRNA.}
         \label{fig:miRNA_RSF_Rank_c_index_rem}
     \end{subfigure}
        \caption{Performance comparison and ranking, on the miRNA and mRNA data, in terms of the $\text{C-index}$.}
        \label{fig:miRNA_mRNA_Rank_C-index_and_rem_strict_C-index}
\end{figure*}
\section{Empirical Evaluation}\label{sec:eval}
To investigate the usefulness of our proposed method to adapt to a target survival domain, we address the following three questions:
\begin{itemize}
    \item Does the multi-source domain adaption work on survival target domains? How does it perform if the labels for a portion of the target data were used? (Section \ref{subsec:eval}.)
    \item Can we recommend treatment better than what was offered to the patients? (Section \ref{subsec:rec}.)
    \item Do the learned weights on the source domains reveal any useful information about the underlying cancer types? (Section \ref{subsec:xai}.)
    \item How essential is the proposed symmetric discordance index ($SDI$) for aligning the conditional distributions compared to other domain-invariant regularisation approaches? (Section \ref{subsec:ablation}.)
\end{itemize}

\paragraph{Datasets.} We utilize two data sets from The Cancer Genome Atlas project (TCGA)\footnote{\url{https://www.cancer.gov/about-nci/organization/ccg/research/structural-genomics/tcga}}. This project analyzes the molecular profiles and the clinical data of 33 cancer types. 
\textit{(i)} The Messenger RNA data (mRNA)~\cite{li2016transfer}, which includes eight cancer types. Each patient is represented by 19171 binary features; see Table~\ref{tab:mRNA}.
\textit{(ii)} The micro-RNA data (miRNA) that includes 21 cancer types~\cite{wang2017multi}; each has a varying number of patients. Table~\ref{tab:miRNA} depicts the total number of patients for each cancer and the number of patients that experienced the event (died) during the time of the clinical study ($\delta=1$).
We also extract the treatment performed for each cancer type (if available).

\paragraph{Baselines.} For the evaluation, we compare with 1) the Cox proportional hazards model fitted by maximizing the log of the partial likelihood, 2) DeepSurv~\cite{katzman2018deepsurv} that introduces the proportional hazards to neural networks, and 3) the survival random forests (RSF)~\cite{ishwaran2007random}\footnote{\url{https://square.github.io/pysurvival/}}.
These methods deal with single domains; therefore, we perform separate training on each source domain and use the trained model as a ranking function for the target domain. Each ranker orders the target's instances; we average these orders over all rankers, hence, the abbreviation \textquote{average order} (AO). To answer the second part of the first question, we compare with 4) TransferCox~\cite{li2016transfer}, a transfer learning method that employs multi-task learning on survival domains and requires labels in all domains without prioritizing the target domain. 

For both MSSDA\footnote{https://github.com/shaker82/MSSDA} and DeepSurv, we use the same architecture, a two-layered feature extractor with 200 and 20 units in the first and second hidden layers, respectively. The detailed architecture and the hyper-parameters search are explained in the supplementary material. We model the log-risk function as the non-linear function $h\circ \phi_\theta(x)$ learned by the fitted network architecture, i.e., $r(x) = e^{h\circ \phi_\theta(x)}$. MSSDA and DeepSurv are trained for 20 epochs. 

In the supervised target case, we allow a small portion of the target domain to be labeled and used for training. We use these percentages, 5\%, 10\%, 15\%, 20\%, and 25\%. Except for the TransferCox, the target samples are appended to each source domain's samples. For TransferCox, the target samples are added as a new domain, which is why TransferCox can not be tested when the target domain contains very few samples (less than 20\%).

\paragraph{Evaluation.} To measure the performance of each method, we employ the $\text{C-index}$, Eq.(\ref{eq:C-index}), to measure the concordance between the inversely ordered predicted risks and the actual lifetime. For the unsupervised and supervised cases, we measure the $\text{C-index}$ on the target domain's samples after removing the ones used for the supervision. We also propose $\text{C-index}$${^\prime}$ that measures the concordance on the whole target domain, including the samples used for supervision. Notice that $\text{C-index}$${^\prime}$ includes only a tiny portion of the pairs used for the training. In the case of 25\% supervision, we show in the supplementary material the advantage of measuring the $\text{C-index}{^\prime}$ and that the ratio of reused pairs is only 6.25\%. All results are averaged over five folds.

\paragraph{Optimizing the $SDI$} 
The counting-based comparison in the SDI is implemented using the MarginRankingLoss MRS\footnote{\url{https://pytorch.org/docs/stable/generated/torch.nn.MarginRankingLoss.html}}: $MRS(x1,x2,y) = max(0,-y(x1-x2)+m)$, where $m$ is the margin. 
For example, we implement $I\Big((r_1(\boldsymbol{x}_i) < r_2(\boldsymbol{x}_j)\Big)$ in Eq. (\ref{eq:SDI}) 
using the surrogate $MRS(exp(r_1(\boldsymbol{x}_i) - r_2(\boldsymbol{x}_j)),0,1)$ with $m=1$.

\subsection{Evaluation of Survival Prediction}\label{subsec:eval}
Table~\ref{tab:C-index-rem-mRNA-RSF_0_5} depicts the performance in terms of the $\text{C-index}$ on the mRNA data; it shows that MSSDA outperforms all other methods in both the unsupervised and the partially supervised (5\%) cases while always achieving the first rank (the last row).
In the supplementary material, Tables~\ref{tab:Supp-mRNA-C-index-rem-1-RSF} and~\ref{tab:Supp-mRNA-C-index-rem-2-RSF} show that MSSDA still dominates the remaining supervision settings at 10\%, 15\%, 20\%, and 25\%;
these results are graphically depicted in Figures~\ref{fig:mRNANo_Supervision_RSF} and~\ref{fig:mRNA_RSF_Rank_c_index_rem} and confirm the superiority of MSSDA performance and its first rank. In general, MSSDA performs best on five of seven cancer types, achieving the best rank, followed by RSF in low supervision and TransferCox in high supervision settings. A similar performance is evident when considering the $\text{C-index}^{\prime}$, as confirmed in Tables~\ref{tab:Supp-mRNA-C-index-1-RSF} and \ref{tab:Supp-mRNA-C-index-2-RSF} in the supplementary material.

\begin{table}[t!]
\centering
\begin{tabular}{|lp{0.05\textwidth}p{0.05\textwidth}p{0.05\textwidth}p{0.05\textwidth}|p{0.05\textwidth}p{0.05\textwidth}p{0.05\textwidth}p{0.05\textwidth}|}
\toprule
superv.&\multicolumn{4}{c|}{.00\%} & \multicolumn{4}{c|}{5\%}\\
method& \rot{MSSDA} & \rot{Cox-AO} & \rot{RSF-AO} & \rot{\shortstack[l]{DeepSurv\\-AO}}  & \rot{MSSDA} & \rot{Cox-AO} & \rot{RSF-AO} & 
\rot{\shortstack[l]{DeepSurv\\-AO}}\\
\midrule

KIRC &  \textbf{.604} &         .435 &           .480 &              .466 &  \textbf{.618} &         .444 &         .521 &              .479   \\
&    \tiny{(.028)} &         \tiny{(.008)} &         \tiny{(.002)} &              \tiny{(.007)} &    \tiny{(.019)} &         \tiny{(.007)} &         \tiny{(.004)} &              \tiny{(.004)}   \\
OV   &  \textbf{.539} &         .537 &           .497 &              .538 &  \textbf{.563} &         .523 &         .490 &              .552   \\
&    \tiny{(.023)} &         \tiny{(.008)} &         \tiny{(.002)} &              \tiny{(.002)} &    \tiny{(.015)} &         \tiny{(.013)} &         \tiny{(.006)} &              \tiny{(.002)}   \\
GBM  &  \textbf{.516} &         .428 &           .436 &              .511 &           .493 &         .443 &         .494 &     \textbf{.515}   \\
&    \tiny{(.016)} &         \tiny{(.004)} &         \tiny{(.002)} &              \tiny{(.006)} &    \tiny{(.017)} &         \tiny{(.009)} &         \tiny{(.008)} &              \tiny{(.007)}   \\
LUAD &  \textbf{.646} &         .536 &           .466 &              .530 &  \textbf{.661} &         .556 &         .466 &              .507   \\
&    \tiny{(.018)} &         \tiny{(.020)} &         \tiny{(.008)} &              \tiny{(.006)} &    \tiny{(.014)} &         \tiny{(.020)} &         \tiny{(.015)} &              \tiny{(.005)}   \\
LUSC &  \textbf{.659} &         .471 &           .515 &              .520 &  \textbf{.658} &         .533 &         .494 &              .508   \\
&    \tiny{(.013)} &         \tiny{(.012)} &         \tiny{(.005)} &              \tiny{(.005)} &    \tiny{(.024)} &         \tiny{(.007)} &         \tiny{(.029)} &              \tiny{(.006)}   \\
BRCA &           .597 &         .474 &  \textbf{.602} &              .437 &  \textbf{.599} &         .492 &         .536 &              .418   \\
&    \tiny{(.029)} &         \tiny{(.009)} &         \tiny{(.004)} &              \tiny{(.007)} &    \tiny{(.033)} &         \tiny{(.019)} &         \tiny{(.017)} &              \tiny{(.009)}   \\
HNSC &  \textbf{.617} &         .513 &           .514 &              .458 &  \textbf{.646} &         .430 &         .504 &              .441   \\
&    \tiny{(.022)} &         \tiny{(.013)} &         \tiny{(.003)} &              \tiny{(.003)} &    \tiny{(.017)} &         \tiny{(.009)} &         \tiny{(.016)} &              \tiny{(.005)}   \\
LAML &           .476 &         .456 &           .486 &     \textbf{.570} &           .533 &         .451 &         .514 &     \textbf{.551}   \\
&    \tiny{(.033)} &         \tiny{(.010)} &         \tiny{(.002)} &              \tiny{(.010)} &    \tiny{(.024)} &         \tiny{(.003)} &         \tiny{(.014)} &              \tiny{(.008)}   \\
\bottomrule
P-value&
&\scriptsize{5e-3}&\scriptsize{2e-2}&\scriptsize{2e-2}&
&\scriptsize{3e-3}&\scriptsize{9e-3}&\scriptsize{9e-3}
\\
\bottomrule
Rank& \textbf{1.38}& 3.38& 2.63& 2.63& \textbf{1.38}& 3.25& 2.88& 2.50 \\
\bottomrule
\end{tabular}
\caption{The performance comparison on the eight cancer types in the mRNA data in terms of $\text{C-index}$. The following settings were used: no supervision and 5\% supervision. The numbers in brackets depict the standard error. The last row shows the rank in each supervision group.
The p-value row depicts the p-value for the upper-tailed Wilcoxon signed-ranks test between each method and MDSSA.
The null hypothesis can be rejected at the significance level of $0.05$.}\label{tab:C-index-rem-mRNA-RSF_0_5}
\end{table}

\begin{table*}
      \centering
    \begin{tabular}{|p{0.005\textwidth}p{0.05\textwidth}|p{0.03\textwidth}p{0.05\textwidth}|p{0.03\textwidth}p{0.05\textwidth}||cp{0.0\textwidth}c|cp{0.0\textwidth}c|cp{0.0\textwidth}c|cp{0.0\textwidth}c|}
        \toprule
&&\multicolumn{4}{|c||}{[A] Success rate on 5 folds}&\multicolumn{12}{c|}{[B] Median time of all folds merged}\\
\cline{3-18}
&&\multicolumn{2}{|c|}{MSSDA}&\multicolumn{2}{c||}{DeepSurv-AO}
&\multicolumn{6}{c|}{MSSDA}&\multicolumn{6}{c|}{DeepSurv-AO}\\
&&Rad.&Pharma.&Rad.&Pharma.&
\multicolumn{3}{c|}{Rad.}&\multicolumn{3}{|c|}{Pharma.}&
\multicolumn{3}{|c|}{Rad.}&\multicolumn{3}{|c|}{Pharma.}\\
ID&Name&\rot{\shortstack[l]{success\\ratio}}&\rot{\shortstack[l]{success\\ratio}}&
\rot{\shortstack[l]{success\\ratio}}&\rot{\shortstack[l]{success\\ratio}}&
\rot{\shortstack[l]{median\\recomm.}}&&\rot{\shortstack[l]{median\\anti-recom.}}&
\rot{\shortstack[l]{median\\recomm.}}&&\rot{\shortstack[l]{median\\anti-recom.}}&
\rot{\shortstack[l]{median\\recomm.}}&&\rot{\shortstack[l]{median\\anti-recom.}}&
\rot{\shortstack[l]{median\\recomm.}}&&\rot{\shortstack[l]{median\\anti-recom.}}\\

\midrule
0 & BLCA &2/5 & 2 / 5&4/5&2/5&\st{370}&&\st{370}&536.5&$\ngeqslant$&547&651&$\ge$&324&539&$\ngeqslant$&544\\
1 & BRCA &1/2 & 4 / 5&3/3&3/5&\st{1330.5}&&\st{1330.5}&\st{1032}&&\st{1032}&2296&$\ge$&365&\st{1032}&&\st{1032}\\
2 & ESCA &2/4 & 4 / 4&3/5&3/5&\st{283}&&\st{283}&496&$\ge$&480&\st{283}&&\st{283}&496&$\ge$&480\\
3 & LGG &4/5 & 3 / 5&1/5&4/5&1368&$\ge$&794&1106&$\ge$&933&1011&$\ngeqslant$&1335&1106&$\ge$&758\\
4 & LIHC &5/5 & 4 / 5&1/4&1/5&643&$\ge$&432&639&$\ge$&633&432&$\ngeqslant$&643&612&$\ngeqslant$&639\\
5 & LUAD &4/5 & 3 / 5&2/5&2/5&677&$\ge$&561&574&$\ngeqslant$&594&\st{633}&&\st{633}&503&$\ngeqslant$&594\\
6 & LUSC &2/5 & 2 / 5&2/5&1/5&387&$\ge$&345&559&$\ngeqslant$&562&\st{387}&&\st{387}&559&$\ngeqslant$&573\\
7 & SARC &3/5 & 5 / 5&3/5&2/5&\st{695}&&\st{695}&1013.5&$\ge$&550&\st{695}&&\st{695}&591&$\ngeqslant$&599\\
8 & UCEC &4/5 & 2 / 5&1/5&3/5&1279&$\ge$&1127&666&$\ge$&610&1016&$\ngeqslant$&1317&670&$\ge$&610\\
        \hline
$\sum$ & &\textbf{5.8/9} & \textbf{6/9}&4.5/9&4.2/9
&\multicolumn{3}{c|}{\textbf{5/5}}
&\multicolumn{3}{c|}{\textbf{5/8}}
&\multicolumn{3}{c|}{2/5}
&\multicolumn{3}{c|}{3/8}\\        
        \hline
    \end{tabular}
        \caption{Results of the treatment recommendation experiments on the miRNA data. Table A, on five folds, depicts the ratios of better-recommended treatments over the valid folds. Table B presents the median survival times of the recommendation and anti-recommendation groups across all folds after removing the censored patients.}\label{tab:Treatment_recommendation}
\end{table*}

Similarly, MSSDA achieves a superb performance on miRNA on supervision and no supervision settings, as confirmed by the dominating $\text{C-index}$ in Figure~\ref{fig:miRNANo_Supervision_RSF}, and the best rank in Figure~\ref{fig:miRNA_RSF_Rank_c_index_rem}. These figures summarize Tables~\ref{tab:Supp-miRNA-C-index-rem-1-RSF}, \ref{tab:Supp-miRNA-C-index-rem-2-RSF}, \ref{tab:Supp-miRNA-C-index-1-RSF}, and \ref{tab:Supp-miRNA-C-index-2-RSF}
and Figures~\ref{fig:all_miRNA_C-index_rem_strict} and \ref{fig:all_miRNA_C-index_strict} that are kept in the supplementary material. Again MSSDA performs best on 17, 18, 18, 15, 15, 16 out of 21 cancer types for the 0\%, 5\%, 10\% 15\%, 20\% and 25\% supervision settings, respectively. Hence, MSSDA always ranks first.

The supplementary material includes Figure~\ref{fig:Supp_miRNA_mRNA_Rank_C-index_and_rem_strict_C-index} that depicts the rank when using $\text{C-index}$${^\prime}$, and a deeper discussion on the discrepancy between the result of $\text{C-index}$ and $\text{C-index}^{\prime}$.

We compute the p-value for the upper-tailed Wilcoxon signed-ranks test between each method and MDSSA in each setting on both data sets.
The null hypothesis can be rejected on all mRNA data at the significance level of $\alpha=0.05$ for both performance measures. On the miRNA, the null hypothesis can be rejected in all cases at $\alpha=0.1$ except for the two cases (RSF, 20\% supervision, $\text{C-index}$) and (TransferCox, 25\% supervision, $\text{C-index}^\prime$).



\subsection{Treatment Recommendation}\label{subsec:rec}

For the treatment recommendation experiments, we collect the type of treatment (either pharmaceutical (P) or radiation (R)) given for each of the patients (if available). Table \ref{tab:miRNA} depicts the number of cases for each treatment type. We finally select only those cancer types with at least two non-censored and one censored patient for each treatment type, as indicated in column \textquote{TR}; see Table~\ref{tab:miRNA}.
Following the procedure proposed in DeepSurv~\cite{katzman2018deepsurv}, we annotate each instance from  the source domains by a dummy binary attribute identifying the type of treatment (P or R). 
After learning on the source domains, for each sample $\boldsymbol{x}$ from the target domain, we measure the recommendation $rec(x)_{PR} = log \frac{r (x^P)}{r (x^R)}= h\circ \phi_\theta(x^P) - h\circ \phi_\theta(x^R)$, where $\boldsymbol{x}^P$ and $\boldsymbol{x}^R$ are the same target domain sample once considered to be treated by pharmaceutical and once by radiation, respectively. A positive $rec(x)_{PR}$ means that the patient has a higher risk when treated by \textquote{P} than when treated by \textquote{R}. Hence, it is recommended to prescribe \textquote{R}. By comparing with the ground truth treatments, we group the patients into the $\Upsilon_{recom}$ group when the recommended treatment aligns with the true treatment and the $\Upsilon_{anti}$ group containing patients that received a recommendation contradicting the actual treatment. Thereafter, the median survival times of the two groups are compared. 

A smaller median survival time of the $\Upsilon_{anti}$ indicates that the patients could have had a potentially longer survival time had they been given the model's recommended treatment.
MSSDA is employed as in the previous experiments using the labeled multi-source domains and an unlabelled target domain.
For DeepSurv, we train a different model for each source domain and then compute the average order (rank) of each sample of the target domain for each treatment. On DeepSurv, the groups $\Upsilon_{recom}$ and $\Upsilon_{anti}$ are computed using the difference in the predicted patients' ranks for the two treatments. 
Cox model is omitted since it recommends the same treatment for all instances, as proven in ~\cite{katzman2018deepsurv}. TransferCox is also omitted since it requires labeled target data.


Part[A] of Table~\ref{tab:Treatment_recommendation} shows that in the case of contradiction with the administered treatment, our method in 64.4\% and 66\% of the cases gives a better recommendation for the radiation and pharmaceutical treatments, compared to 50\% and 46.6\% achieved by DeepSurv. This result is computed over five folds for each treatment and each cancer type. Part[B] of Table~\ref{tab:Treatment_recommendation} shows a detailed comparison of the median survival time for each treatment and cancer pair when merging the samples of all folds. The medians are struck through upon equality and compared otherwise. Again, the results show a median survival time in the $\Upsilon_{anti}$ group smaller than that of the $\Upsilon_{recom}$ group in 5/5 and 5/8 of the cases when MSSDA is employed. Whereas, Deepsurv achieves this only on 2/5 and 3/8 of the cases. RSF fails experimentally to identify and employ the treatment indicator, which has led to failing to induce two different risk models for the different treatments. Therefore, RSF is omitted in the experiment.


\begin{figure}
\centering
\includegraphics[scale=0.45]{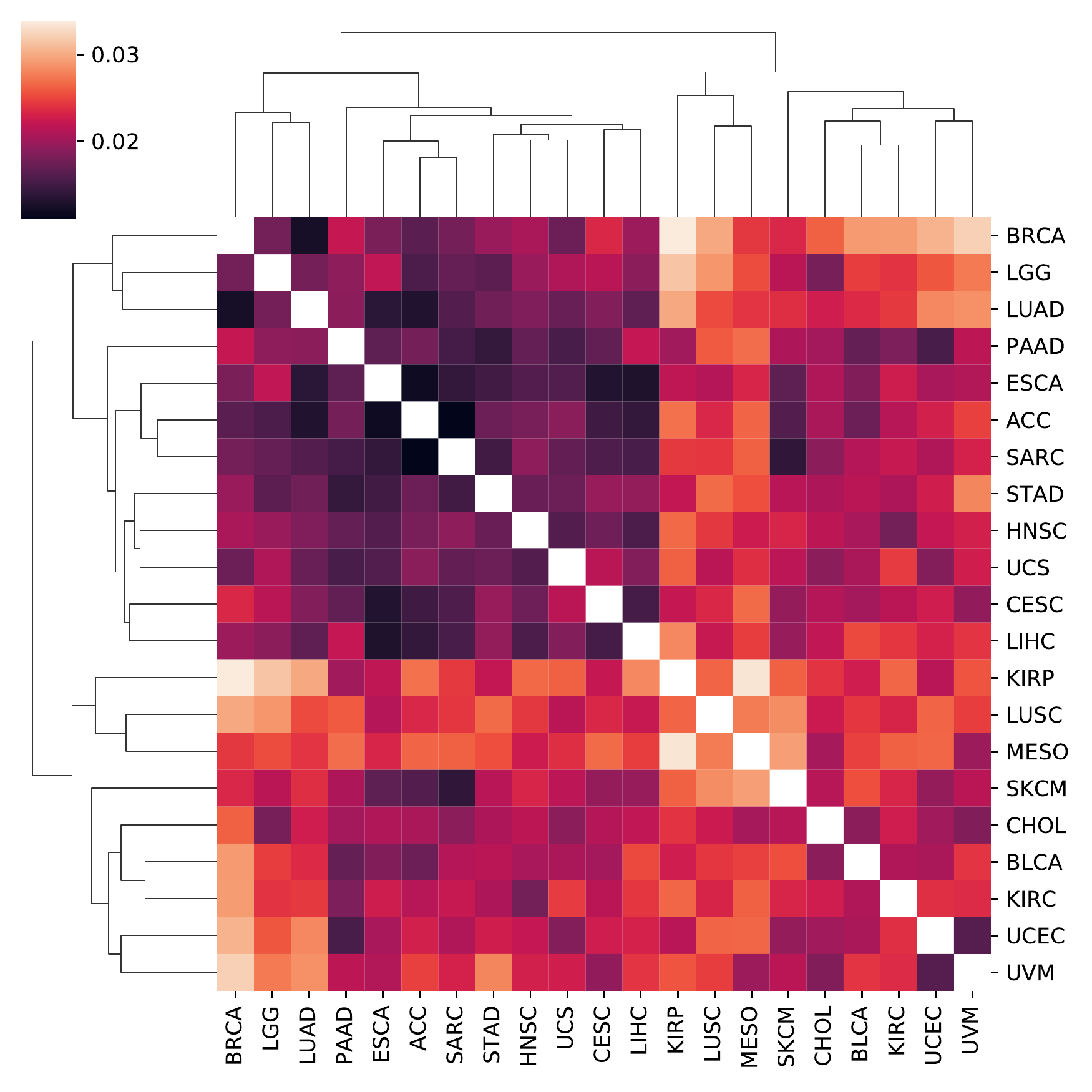}
\caption{Heatmap of the matrix computed from the learned weights' distances on the miRNA data.}
\label{fig:heatmap_weights_miRNA}
\end{figure}

\begin{table}
\centering
\begin{tabular}{|p{.07\textwidth}p{.07\textwidth}p{.07\textwidth}p{.07\textwidth}p{.07\textwidth}p{.07\textwidth}p{.07\textwidth}|}
\toprule
\rot{cancer}& \rot{MSSDA} & \rot{MDAN} & \rot{\shortstack[l]{KuiperUB\\-KM}} & \rot{MMD} & \rot{MMD-KM}  & \rot{D-index} \\

\midrule
ACC&\textbf{.784} \tiny{(.008)}&.729 \tiny{(.004)}&.690 \tiny{(.005)}&.688 \tiny{(.006)}&.682 \tiny{(.005)}&.702 \tiny{(.008)}\\
BLCA&\textbf{.538} \tiny{(.007)}&.507 \tiny{(.003)}&.505 \tiny{(.001)}&.513 \tiny{(.001)}&.507 \tiny{(.001)}&.520 \tiny{(.009)}\\
BRCA&\textbf{.614} \tiny{(.018)}&.560 \tiny{(.001)}&.538 \tiny{(.003)}&.542 \tiny{(.000)}&.534 \tiny{(.001)}&.562 \tiny{(.011)}\\
CESC&\textbf{.671} \tiny{(.005)}&.620 \tiny{(.004)}&.615 \tiny{(.002)}&.606 \tiny{(.002)}&.617 \tiny{(.002)}&.632 \tiny{(.013)}\\
CHOL&\textbf{.639} \tiny{(.011)}&.553 \tiny{(.005)}&.579 \tiny{(.007)}&.598 \tiny{(.002)}&.582 \tiny{(.007)}&.582 \tiny{(.009)}\\
ESCA&\textbf{.600} \tiny{(.008)}&.557 \tiny{(.005)}&.555 \tiny{(.004)}&.561 \tiny{(.004)}&.553 \tiny{(.004)}&.589 \tiny{(.016)}\\
HNSC&\textbf{.599} \tiny{(.004)}&.579 \tiny{(.001)}&.561 \tiny{(.001)}&.564 \tiny{(.001)}&.560 \tiny{(.000)}&.576 \tiny{(.010)}\\
KIRC&\textbf{.606} \tiny{(.008)}&.571 \tiny{(.003)}&.575 \tiny{(.003)}&.583 \tiny{(.002)}&.576 \tiny{(.003)}&.588 \tiny{(.010)}\\
KIRP&\textbf{.782} \tiny{(.004)}&.707 \tiny{(.003)}&.677 \tiny{(.002)}&.669 \tiny{(.005)}&.676 \tiny{(.003)}&.691 \tiny{(.013)}\\
LGG&\textbf{.635} \tiny{(.011)}&.566 \tiny{(.003)}&.553 \tiny{(.004)}&.542 \tiny{(.005)}&.554 \tiny{(.004)}&.565 \tiny{(.013)}\\
LIHC&\textbf{.595} \tiny{(.003)}&.554 \tiny{(.001)}&.546 \tiny{(.002)}&.554 \tiny{(.002)}&.542 \tiny{(.002)}&.561 \tiny{(.011)}\\
LUAD&\textbf{.604} \tiny{(.004)}&.566 \tiny{(.002)}&.547 \tiny{(.001)}&.544 \tiny{(.001)}&.545 \tiny{(.001)}&.559 \tiny{(.008)}\\
LUSC&\textbf{.569} \tiny{(.003)}&.554 \tiny{(.001)}&.539 \tiny{(.004)}&.538 \tiny{(.004)}&.537 \tiny{(.004)}&.554 \tiny{(.007)}\\
MESO&.621 \tiny{(.002)}&\textbf{.632} \tiny{(.003)}&.600 \tiny{(.003)}&.588 \tiny{(.002)}&.595 \tiny{(.002)}&.610 \tiny{(.006)}\\
PAAD&\textbf{.582} \tiny{(.004)}&.568 \tiny{(.001)}&.555 \tiny{(.001)}&.553 \tiny{(.001)}&.553 \tiny{(.001)}&.570 \tiny{(.005)}\\
SARC&\textbf{.601} \tiny{(.006)}&.573 \tiny{(.008)}&.571 \tiny{(.005)}&.573 \tiny{(.004)}&.571 \tiny{(.004)}&.583 \tiny{(.010)}\\
SKCM&\textbf{.663} \tiny{(.011)}&.595 \tiny{(.010)}&.551 \tiny{(.003)}&.533 \tiny{(.004)}&.545 \tiny{(.005)}&.566 \tiny{(.010)}\\
STAD&.539 \tiny{(.006)}&.515 \tiny{(.004)}&.527 \tiny{(.003)}&.531 \tiny{(.003)}&.526 \tiny{(.003)}&\textbf{.543} \tiny{(.011)}\\
UCEC&\textbf{.657} \tiny{(.007)}&.547 \tiny{(.005)}&.529 \tiny{(.006)}&.531 \tiny{(.004)}&.526 \tiny{(.005)}&.548 \tiny{(.014)}\\
UCS&\textbf{.524} \tiny{(.007)}&.496 \tiny{(.003)}&.496 \tiny{(.003)}&.493 \tiny{(.003)}&.494 \tiny{(.005)}&.504 \tiny{(.006)}\\
UVM&\textbf{.696} \tiny{(.010)}&.537 \tiny{(.013)}&.551 \tiny{(.003)}&.534 \tiny{(.006)}&.553 \tiny{(.004)}&.586 \tiny{(.026)}\\

\bottomrule
Rank& \textbf{1.1}& 3.19 & 4.57 & 4.62 & 5.1 & 2.43\\
\bottomrule
\end{tabular}
\caption{The performance comparison in the ablation analysis on the miRNA data in terms of $\text{C-index}$. The numbers in brackets depict the standard error. The last row shows the rank of each method.}\label{tab:miRNA-C-index-ablation}
\end{table}

\subsection{Explanation of Learned Weights}\label{subsec:xai}
Finally, we would like to investigate if our method has learned any meaningful relations between the different cancer types. Therefore, we compute the matrix of pair-wise Euclidean distance between each pair of cancer types $i$ and $j$ by removing the $i$-th and $j$-th entries from their learned weight vectors. After that, we perform hierarchical clustering on the computed matrix, as shown in Figure~\ref{fig:heatmap_weights_miRNA}. We notice two major groups of cancer types in the resulting clustering. Following the classification of the solid tumor types in~\cite{hoadley2018cell},
the figure shows closeness in the hierarchical clustering between cancers from the same solid tumor types. For example, for the urologic type, we find that BLCA and KIRC are clustered together, and KIRP also belongs to the same major group that contains BLCA and KIRC. We observe the same for the thoracic type (LUSC and MESO). ESCA and STAD, from the core gastrointestinal type, are within a small distance. The same applies to the types: cancers affecting melanocytes in skin and eye (UVM and SKCM) and soft tissues (SARC and UCS). We find a weaker confirmation for the gynecologic types where BRCA and CESC are in the same major cluster. The same can be observed for cancers in the developmental gastrointestinal type (LICH and PAAD).

Moreover, we found overlaps and similarities when comparing with the unsupervised clustering performed by~Hoadley et al. on the DNA methylation. For example, HNSC, CESC, and ESCA were clustered within small proximity by MSSDA and belong to the same clusters (METH2 and MET3)~\cite{hoadley2018cell}. The same observation can be made for ESCA and STAD that we find to be within a small distance and belong to the same branch of clusters



Our observations are of high importance since our system learned the relations between the cancer types by only fitting the risk functions of unlabeled targets and not directly from the data as in~\cite{hoadley2018cell}.

\subsection{Ablation Analysis}\label{subsec:ablation}
In this section, we perform an ablation study by replacing the proposed  ($SDI$) with the following domain-invariant distances and regularizers:
\textit{(i)} MDAN, a domain classifier as proposed in~\cite{fernandez2019maximum}, 
\textit{(ii)} KuiperUB-KM which tightens the upper bound of the p-value of the two-sample Kuiper test~\cite{kuiper1960tests} that is applied on the Kaplan–Meier (KM) curve~\cite{kaplan1958nonparametric},
\textit{(iii)} MMD, the maximum mean discrepancy~\cite{gretton2006kernel} (which does not take censoring into consideration),
\textit{(iv)} MMD-KM,  the maximum mean discrepancy on the KM curve, 
and \textit{(v)} the D-index. Fernandez and Gretton propose in~\cite{fernandez2019maximum} an adaptation to the maximum mean discrepancy (MMD) for data with censored cases. We couldn't compare with this distance since it is a one-sample test against the uniform distribution.

Results in Table~\ref{tab:miRNA-C-index-ablation} show the superiority and benefit of $SDI$ over the other methods in forcing the representation's conditional invariance. This is mainly because $SDI$ takes censoring into consideration (which is ignored by MDAN and MMD), aligns  the conditional distributions (which is ignored also by MDAN and MMD), and guarantees symmetry (symmetry is not guaranteed in KuiperUB-KM and D-index).

\section{Related Work}

\textbf{Multi-Source Domain Adaptation (MSDA)}
Ben-David et al.~\cite{ben2007analysis} define the distance $d_A$ between two distributions and prove a VC dimension-based generalization bound for domain adaptation in binary tasks. Mansour, Mohri, and
Rostamizadeh~\cite{mansour2009domain} generalized this bound further to a broader set of problems and used it in a tighter bound with the Rademacher complexity. Ben-David et al.~\cite{ben2010theory} introduce the $H \Delta H$ as a discrepancy measure between distributions and show how to approximate it merely from a finite sample of unlabeled target data. Cortes and Mohri~\cite{cortes2014domain} define the discrepancy measure $D_{disc}$ between distributions regardless of the true labeling function and present an algorithm for adaptation using Discrepancy minimization. Most MSDA methods employ bounds based on these seminal works; for example, domain adversarial neural network
(DANN)~\cite{ganin2016domain} performs distribution matching by a min-max game; this work was extended to the multiple domains in MDAN~\cite{zhao2017multiple}. Li et al.~\cite{li2016transfer} show a tighter bound using a Wasserstein-like distance extending the $H \Delta H$ divergence. Richard et al.~\cite{richard2020unsupervised} employ the $D_{disc}$ for regression target domains. Shaker, Yu, and Onoro-Rubio~\cite{shaker2022learningVCD} propose to align the conditional distributions in the multi-source domain adaptation setting using a symmetric form of the conditional von Neumann divergence~\cite{yu2020measuring}.
Our proposed method can be interpreted as aligning the conditional distributions in the feature space while conditioning on the rankings in the output space.

\textbf{Machine Learning for Survival Analysis }
While the non-parametric methods, such as the Kaplan-Meier (KM) estimator~\cite{Kaplan:1958:NEIO}, can be efficient for moderate data volumes, they have a major limitation in relating the survival function to the covariates. Cox proportional hazards models~\cite{Cox:1972:RMLT,Cox:1984:ASD} assume the proportionality of hazards between instances and model the risk by a log-linear function of the instance's covariates. 
A broad spectrum of machine learning methods has been adapted to deal with the challenge of censoring. Ridge-Cox~\cite{tibshirani1997lasso} and lasso-Cox~\cite{verweij1994penalized} add $l_1$ and $l_2$ regularization terms to the original Cox model, respectively.
Wang, Li, and Reddy~\cite{wang2019machine} reveal a recent survey on the intersection between survival analysis and machine learning research. Survival random forest (RSF) adopts ensemble learning to cope with censored cases~\cite{ishwaran2007random,ishwaran2008random}, and Khan and Zubek~\cite{khan2008support} introduce support vector regression for censored data (SVRC). In \cite{shaker2014survival,krempl2014open}, the authors propose the continuous and adaptive learning of parallel hazard functions in non-stationary environments under the instantaneous PH assumption, whereas, Lee et al.~\cite{lee2018deephit} deal with learning time-variant survival functions while allowing multiple events and risks per patient, thus, relaxing the PH assumption.
Knowledge-transfer between survival models has been the focus of transfer~\cite{li2016transfer} and multi-task learning~\cite{li2016multi,wang2017multi} for survival analysis. DeepSurv~\cite{katzman2018deepsurv} implement the PH assumption using a deep neural network. The work in \cite{mouli2019deep}~defines a clustering objective over survival distributions of samples by tightening the upper bound of the p-value of the two-sample Kuiper test~\cite{kuiper1960tests}. In \cite{nagpal2021deep}, individual survival distributions are fit as a mixture of Cox regression functions.
Despite these advancements in research, there is still the need for methods that perform adaptation between survival domains. This work is the first attempt to fill this gap.

\section{Conclusion}
We presented multi-source survival domain adaptation (MSSDA), which is, to the best of our knowledge, the first multi-source domain adaptation work for survival domains. Adapting to a particular target survival domain is essential for rare or new illness types. 
In survival analysis, we are faced with the additional difficulty of censored data. To not lose this partial information about survival, we define a new symmetric index for survival data that can handle censored data, show that it is a metric, and use it to bound the generalization error on target domains. This bound is explicitly employed in our method MSSDA. We confirm in experimental results that: (1) our method outperforms existing methods on target survival domains in terms of survival ranking; (2) it can offer better treatment recommendations; (3) it allows us to inspect how different domains relate, offering medical professionals additional insights. We hope our method can aid in identifying better treatments for rare or new illnesses. In the future, we hope to extend our method so that medical professionals can better understand its predictions to improve precision medicine for individuals.

\section{Acknowledgments}
We thank Brandon Malone and Anja Moesch for their feedback on the paper and the insightful discussions.




\bibliographystyle{abbrv}
\bibliography{SubmissionArxiv.bib}

\clearpage
\pagebreak

\appendix
\section{Ethics Statement and Potential Societal Impacts} 
Our work aims to introduce domain adaptation to the field of survival data.
This line of work can have a positive impact on saving human life by providing precision medicine in the form of personalized treatment recommendations and a better understanding of how diseases could be related and correlated with each other. However, our method is still in the research stage. Therefore, we do not recommend its use in a medical setting without first extensively verifying that a learned model performs as expected. Ultimately all medical decisions should remain in the hands of a medical professional, who is better qualified to judge whether an AI model's prediction should be followed or not.

In addition, domain adaptation, with knowledge transfer, helps learn from fewer data. This positively affects the environment by reducing computational power and run-time to train models, hence, less electricity consumption and less $CO_2$ emissions.

\section{Mathematical proofs} 
In this section, we provide the proves for Theorems~\ref{theorem:SDI-index} and~~\ref{theorem:SDI_bound}.

\subsection{Proof of Theorem~\ref{theorem:SDI-index}}
\begin{proof} [Proof of Theorem~\ref{theorem:SDI-index}]
We know from Eq.~(\ref{eq:SDI}) that the $SDI$ is a metric when $D$ contains no censoring instances.
We begin by creating the subsets:
\begin{itemize}
    \item $C_{ev} = \{(\boldsymbol{x}_i,t_i,\delta_i)| (\boldsymbol{x}_i,t_i,\delta_i)\in D \land \delta_i=1\}$ which contains all non-censored instances from $D$. On $C_{ev}$'s instances, define the permutations $\tau_{r_1}$,$\tau_{r_2}$ that rank the instances based on the risks estimated by the functions $r_1$ and $r_2$, respectively.
    $\kappa(\tau_{r_1},\tau_{r_2})$ measures the discrepancy between the two rankings $\tau_{r_1}$ and $\tau_{r_2}$ Eq. (\ref{eq:kindalltau1}).
    \item For each censored instance $(\boldsymbol{x},t,\delta)\in D$ and each risk function $r$, define the set $C_{r,x} = \{(\boldsymbol{x}_j,t_j,\delta_j)| (\boldsymbol{x},t,\delta)\in D \land \delta = 1 \land r(x_j)>r(x) \}$. 
\end{itemize}
From the previously defined sets and quantities, let 
\begin{align}
m =& \kappa(\tau_{r_1},\tau_{r_2})
+ \sum_{\substack{(\boldsymbol{x}_i,t_i,\delta_i) \in D\\ \delta_i = 0 }} \frac{|C_{r_1,x_i} \triangle C_{r_2,x_i} |}{|C_{r_1,x_i} \cup C_{r_2,x_i}|}
 \label{eq:D-index_measure1} \\
 =& \kappa(\tau_{r_1},\tau_{r_2})
+  \sum_{\substack{(\boldsymbol{x}_i,t_i,\delta_i) \in D\\ \delta_i = 0 }} 1 - \frac{|C_{r_1,x_i} \cap C_{r_2,x_i} |}{|C_{r_1,x_i} \cup C_{r_2,x_i}|}
 \label{eq:D-index_measure2} \enspace .
\end{align}\noindent
Notice that the second term of (\ref{eq:D-index_measure2}) is indeed the sum over the Jaccard metric distance on $C_{r_1,x_i}$ and $C_{r_2,x_i}$\footnote{This follows from the property$|A \triangle B|=|(A\cup B)\setminus (A\cap B)| =|(A\cup B)|-|(A\cup B)\cap(A\cap B)|=|(A\cup B)|-|(A\cap B)|$.}. Therefore, $m$ is a metric since it is a rescaling and a sum of the metrics: \textit{(i)} $\kappa(\tau_{r_1},\tau_{r_2})$ which is the Kendall tau metric on the rankings of the set $C_{ev}$, and \textit{(ii)} the sum of the Jaccard metrics on 
$C_{r_1,x_i}$ and $C_{r_2,x_i}$ for each censored instance
$(\boldsymbol{x}_i,t_i,\delta_i)\in D$.

Notice that $m$ is equivalent to the $SDI(r_1,r_2;D)$ as defined in Eq. (\ref{eq:SDI}), before taking the weighted average of its two parts by $\alpha_1$ and $\alpha_2$. 
Since these weights are independent of the ranking functions $\tau_1$ and $\tau_2$, they form a simple rescaling of summed metrics. 
Hence, $SDI$ defines a metric on the set $D$ with respect to the risk estimators $r_1,r_2:\mathbb{R}^d \rightarrow \mathbb{R}^+$.
\end{proof}

\subsection{Proof of Theorem~\ref{theorem:SDI_bound}}
\begin{proof} [Proof of Theorem~\ref{theorem:SDI_bound}]	
For the single source $D_{s_i}$ with true mapping function $f_{s_i}$, the following bound holds for each $h \in \mathcal{H}$:
\begin{align} 
&SDI(r_h,f_t; M_t) \leq SDI(r_h,f_{s_i}; M_{s_i}) + 
 \left|SDI(r_h,f_t; M_t) - SDI(r_h,f_{s_i}; M_{s_i})  \right| \label{eq:singlesource1} \\
& \leq SDI(r_h,f_{s_i}; M_{s_i})+  \color{black}{\left| SDI(r_h,r_{h^*}; M_t)-  SDI(r_h,f_t; M_t) \right|} \nonumber \\
& + \color{black}{\left|SDI(r_h,r_{h^*}; M_{s_i})-SDI(r_h,f_{s_i}; M_{s_i})\right|} 
 + \color{black}{\left|SDI(r_h,r_{h^*}; M_{s_i})-SDI(r_h,r_{h^*}; M_t)\right|} \label{eq:singlesource2}  \\
&\leq SDI(r_h,f_{s_i}; M_{s_i})+  \color{black}{SDI(r_{h^*},f_t; M_t)} \color{black}{+} 
\color{black}{SDI(r_{h^*},f_{s_i}; M_{s_i})} + \color{black}{D_{SDI-disc}(P_{s_i},P_t)} \enspace . \label{eq:singlesource3} 
\end{align}
Inequality~(\ref{eq:singlesource1}) holds since $SDI$ is non-negative. Inequality~(\ref{eq:singlesource3}) follows from the triangular inequality of $SDI$ i.e., 
\textcolor{black}{$\left| SDI(r_h,r_{h^*}; M_t)-  SDI(r_h,f_t; M_t) \right|\leq   SDI(r_{h^*},f_t; M_t) \enspace $} and\\ 
\textcolor{black}{$\left|SDI(r_h,r_{h^*}; M_{s_i})-SDI(r_h,f_{s_i}; M_{s_i}) \right| \leq SDI(r_{h^*},f_{s_i}; M_{s_i}),$}\\
where $h^*$ is a hypothesis satisfying:
\begin{align} 
h^* \in \arg_{h \in \mathcal{H}}\min
SDI(r_{h},f_t; M_t)+ \sum_{i=1}^k w_i \cdot SDI(r_{h},f_{s_i}; M_{s_i}) \enspace .
\end{align}
Note also that $\textcolor{black}{\left|SDI(r_h,r_{h^*}; M_{s_i})-SDI(r_h,r_{h^*}; M_t)\right|} \leq D_{SDI-disc}(P_{s_i},P_t)$ by the Definition \ref{def:discordance_hypothesis_divergence}.

For the multiple domains in $S$, we multiply both sides of (\ref{eq:singlesource3}) by $w_i$ for each source $s_i \in S$ to obtain the $k$ inequalities: 
{\fontsize{9.5pt}{10.8pt} \selectfont  \begin{align} \label{eq:multisource} 
&w_i \cdot SDI(r_h,f_t; M_t)   \leq w_i \cdot \bigg(
SDI(r_h,f_{s_i}; M_{s_i})+ SDI(r_{h^*},f_t; M_t) 
+SDI(r_{h^*},f_{s_i}; M_{s_i}) + D_{SDI-disc}(P_{s_i},P_t)
\bigg) \enspace .
\end{align}}
By summing (\ref{eq:multisource}) over all $s_i \in S$, we obtain
\begin{align} \label{eq:multisource2} 
&SDI(r_h,f_t; M_t) \leq \sum_{i=1}^k w_i \cdot \bigg(SDI(r_h,f_{s_i}; M_{s_i}) + 
D_{SDI-disc}(P_{s_i},P_t) \bigg)+ \eta_{D}(f_S,f_t) \enspace ,
\end{align}\noindent
where $\eta_{D}(f_{S},f_t)= \min_{h^{*} \in \mathcal{H}}
SDI(r_{h^*},f_t; M_t)+ \sum_{i=1}^k w_i \cdot SDI(r_{h^*},f_{s_i}; M_{s_i})$ is the minimum joint empirical losses on source $D_{s_i}$ and the target $D_t$, achieved by an optimal hypothesis $h^{*}$. Equation (\ref{eq:multisource2}) concludes the proof.
\end{proof}

\section{Hyperparameter Search}
Since our problem is survival domain adaption where the target domains should remain unlabeled, we try to search for the best hyper-parameters that should perform well on the the source domains. This in turn implicitly leads to a better generalization on the target domains as proven in Theorem~\ref{theorem:SDI_bound}.

To ensure a fair comparison, we start with a grid-based hyperparameter search by selecting five random domains from miRNA, and three from mRNA. We perform the gird search on the labeled domains while considering only $1/3$ of the data. The performance is measured in terms of the $\text{C-index}$.\\ 	

\begin{itemize}
	\item DeepSurv~\cite{katzman2018deepsurv} found hyperparameters:
	\begin{itemize}
	\item learning rate: $  \{$(miRNA, mRNA) 0.0001, 0.001, 0.01,0.1 $\}$ 
	\item dropout rate: $\{$0, (miRNA) 0.05, (mRNA) 0.07, 0.1, 0.2$\}$
	\item library: \url{https://github.com/jaredleekatzman/DeepSurv}
	\end{itemize}
	\item Random Survival Forest~\cite{ishwaran2007random,ishwaran2008random} found hyperparameters:
	\begin{itemize}
	\item max\_depth:  $\{$(mRNA) 5, (miRNA) 7, 10, 12 $\}$ 
	\item min\_node\_size:  $\{$5, 7, 10, (mRNA) 12, (miRNA) 20 $\}$ 
	\item library: \url{https://square.github.io/pysurvival}
	\end{itemize}
	
	\item TransferCox~\cite{li2016transfer} found hyperparameters:
	\begin{itemize}
	\item weight of target dataset: $\{$0.5, 1 (mRNA), 2 (miRNA), 5 $\}$ 
	\item number of $\lambda$ to search for: 10
	\item smallest searching $\lambda$: 0.06
	\item library: \url{https://github.com/MLSurvival/TransferCox}
	\end{itemize}
    \item Coy~\cite{Cox:1972:RMLT}. Here, we apply PCA and select the first 15 components and decrease this number in case of colinarities.
\end{itemize}
	
Without any further tuning, we adopt the same found parameters for DeepSurv to our proposed method, MSSDA.

\section{Further Results} 

\subsection{$\text{C-index}$ versus $\text{C-index}$${^\prime}$}
To show the advantage $\text{C-index}$${^\prime}$ over $\text{C-index}$ on the target domain, let us assume a survival domain containing $n+c$ instances where $n$ and $c$ are the numbers of the non-censored and censored instances, respectively. Assume also that $\lambda_s \in ]0,1[$ is the percentage of instances used for the supervision.

The $\text{C-index}$${^\prime}$ inspects $n^2+\frac{1}{2}n\cdot c$ pairs\footnote{Under the assumption that censored cases are uniformly distributed among the non-censored ones.}, whereas the $\text{C-index}$ would inspect $(1-\lambda_s)^2(n^2+\frac{1}{2}n\cdot c)$. Hence, for $\lambda_s= 0.25$, only 6.25\% would be reused in $\text{C-index}$${^\prime}$. More importantly, the $\text{C-index}$ misses $\lambda_s(1-\lambda_s)(n^2 + n\cdot c)$ of all pairs; each of these pairs includes one sample from the supervision set and one that is not used in the supervision set. These pairs do not include any leaked information since the target here is the ranking of samples (pair-wise comparison) and not the absolute risks.

\subsection{Data Sets}
In this section, we describe the used data set and partially repeat what is explained in the main manuscript. In our experiments we use two data sets from The Cancer Genome Atlas (TCGA) project: \textit{(i)} The Messenger RNA data (mRNA)~\cite{li2016transfer}, which includes eight cancer types. Each patient is represented by 19171 binary features; see Table~\ref{tab:mRNA}.
\textit{(ii)} The micro-RNA data (miRNA) that includes 21 cancer types~\cite{wang2017multi}; each has a varying number of patients. Table~\ref{tab:miRNA} depicts the total number of patients for each cancer and the number of patients that experienced the event (died) during the time of the clinical study ($\delta=1$).
We also extract the treatment performed for each cancer type (if available). Table~\ref{tab:Supp_miRNA} includes the full name and the site of each cancer type.

\subsection{Evaluation of Survival Prediction}
In the following, we present a list of Figures and Tables that depict the performance and rank of the different methods on the mRNA and miRNA data.
\begin{enumerate}
    \item Figure~\ref{fig:all_mRNA_C-index_rem_strict} depicts the performance comparison, on the mRNA data, in terms of the $\text{C-index}$. This figure summarizes Tables~\ref{tab:Supp-mRNA-C-index-rem-1-RSF} and~\ref{tab:Supp-mRNA-C-index-rem-2-RSF}.
    \item Figure~\ref{fig:all_mRNA_C-index_strict} depicts the performance comparison, on the mRNA data, in terms of the $\text{C-index}^\prime$.
    This figure summarizes Tables~\ref{tab:Supp-mRNA-C-index-1-RSF} and~\ref{tab:Supp-mRNA-C-index-2-RSF}.
    \item Figure~\ref{fig:all_miRNA_C-index_rem_strict} depicts the performance comparison, on the miRNA data, in terms of the $\text{C-index}$.
    This figure summarizes Tables~\ref{tab:Supp-miRNA-C-index-rem-1-RSF} and~\ref{tab:Supp-miRNA-C-index-rem-2-RSF}.
    \item Figure~\ref{fig:all_miRNA_C-index_strict} depicts performance comparison, on the miRNA data, in terms of the $\text{C-index}^\prime$.
    This figure summarizes Tables~\ref{tab:Supp-miRNA-C-index-1-RSF} and~\ref{tab:Supp-miRNA-C-index-2-RSF}.
    \item Figure~\ref{fig:Supp_miRNA_mRNA_Rank_C-index_and_rem_strict_C-index} shows the performance comparison in terms of ranking, on the miRNA and mRNA data for both measures $\text{C-index}^\prime$ and $\text{C-index}$. This figure shows an aggregation of results in Tables~\ref{tab:Supp-mRNA-C-index-rem-1-RSF}, \ref{tab:Supp-mRNA-C-index-rem-2-RSF}, \ref{tab:Supp-mRNA-C-index-1-RSF}, \ref{tab:Supp-mRNA-C-index-2-RSF}, \ref{tab:Supp-miRNA-C-index-rem-1-RSF}, \ref{tab:Supp-miRNA-C-index-rem-2-RSF}, \ref{tab:Supp-miRNA-C-index-1-RSF}, \ref{tab:Supp-miRNA-C-index-2-RSF}.
    
    \item Table~\ref{tab:Supp-mRNA-C-index-rem-1-RSF} presents performance on the mRNA data in terms of $\text{C-index}$ for the no supervision, 5\%, and 10\% supervision settings. 
    \item Table~\ref{tab:Supp-mRNA-C-index-rem-2-RSF} presents the performance on the mRNA data in terms of $\text{C-index}$ for 15\%, 20\%, and 25\% supervision settings.
    \item Table~\ref{tab:Supp-mRNA-C-index-1-RSF} presents performance on the mRNA data in terms of $\text{C-index}^\prime$ for the no supervision, 5\%, and 10\% supervision settings. 
    \item Table~\ref{tab:Supp-mRNA-C-index-2-RSF} presents the performance on the mRNA data in terms of $\text{C-index}^\prime$ for 15\%, 20\%, and 25\% supervision settings.

    \item Table~\ref{tab:Supp-miRNA-C-index-rem-1-RSF} presents performance on the miRNA data in terms of $\text{C-index}$ for the no supervision, 5\%, and 10\% supervision settings. 
    \item Table~\ref{tab:Supp-miRNA-C-index-rem-2-RSF} presents the performance on the miRNA data in terms of $\text{C-index}$ for 15\%, 20\%, and 25\% supervision settings.
    \item Table~\ref{tab:Supp-miRNA-C-index-1-RSF} presents performance on the miRNA data in terms of $\text{C-index}^\prime$ for the no supervision, 5\%, and 10\% supervision settings. 
    \item Table~\ref{tab:Supp-miRNA-C-index-2-RSF} presents the performance on the miRNA data in terms of $\text{C-index}^\prime$ for 15\%, 20\%, and 25\% supervision settings.
\end{enumerate}

Besides the discussion mentioned in the Section~\ref{subsec:eval}, we would like to raise awareness of the fact that all methods often manage to exploit the available labeled target data and improve in terms of $\text{C-index}$${^\prime}$, see Tables~\ref{tab:Supp-mRNA-C-index-1-RSF}, \ref{tab:Supp-mRNA-C-index-2-RSF}, \ref{tab:Supp-miRNA-C-index-1-RSF}, and ~\ref{tab:Supp-miRNA-C-index-2-RSF}. While the same is expected when employing the $\text{C-index}$, this is not the case. The reason for that is the different test sets used for each supervision setting when evaluating the $\text{C-index}$ due to the removal of the samples used in the supervision.

\begin{table*}
\footnotesize
      \centering
    \begin{tabular}{|c ccccc|cc|cc|c|}
        \toprule
Task ID&Cancer name&Primary Site& Acronym&\multicolumn{2}{c}{Instances}&\multicolumn{2}{|c|}{Pharmaceutical}&\multicolumn{2}{c|}{Radiation}&TR\\
&&&&count&$\delta=1$&count&$\delta=1$&count&$\delta=1$&\\

        \midrule
1&Adrenocortical Carcinom&Adrenal Gland&ACC&80&29&34&13&2&0&\\
2&Bladder Urothelial Carcinoma&Bladder&BLCA&407&178&111&40&25&15&X\\
3&Breast Invasive Carcinoma&Breast&BRCA&754&105&238&15&31&2&X\\
4&Cervical Squamous Cell Carcinoma&Cervix&CESC&307&72&4&2&35&9&\\
&and Endocervical Adenocarcinoma&&&&&&&&&\\
5&Cholangiocarcinoma&Bile Duct&CHOL&36&18&13&7&1&1&\\
6&Esophageal Carcinoma&Esophagus&ESCA&184&77&12&5&22&5&X\\
7&Head and Neck Squamous Cell Carcinoma&Head and Neck&HNSC&484&203&6&2&134&46&\\
8&Kidney Renal Clear Cell Carcinoma&Kidney&KIRC&254&76&24&16&7&3&\\
9&Kidney Renal Papillary Cell Carcinoma&Kidney&KIRP&290&44&18&13&4&4&\\
10&Brain Lower Grade Glioma&Brain&LGG&510&124&50&10&70&16&X\\
11&Liver Hepatocellular Carcinoma&Liver&LIHC&371&128&39&18&12&4&X\\
12&Lung Adenocarcinoma&Lung&LUAD&441&157&103&36&34&24&X\\
13&Lung Squamous Cell Carcinoma&Lung&LUSC&338&137&72&20&19&13&X\\
14&Mesothelioma&Pleura&MESO&86&73&29&26&2&1&\\
15&Pancreatic Adenocarcinoma&Pancreas&PAAD&178&93&75&41&-&-&\\
16&Sarcoma&Soft Tissue&SARC&259&98&46&22&48&15&X\\
17&Skin Cutaneous Melanoma&Skin&SKCM&97&26&22&9&1&0&\\
18&Stomach Adenocarcinoma&Stomach&STAD&382&147&106&42&1&0&\\
19&Uterine Corpus Endometrial Carcinoma&Uterus&UCEC&410&72&55&18&82&10&X\\
20&Uterine Carcinosarcoma&Uterus&UCS&56&34&15&12&5&5&\\
21&Uveal Melanoma&Eye&UVM&80&23&11&6&4&2&\\
        \hline
    \end{tabular}
        \caption{Properties of the miRNA data. The treatment columns (Pharmaceutical and Radiation) are collected by matching the data with The Cancer Genome Atlas (TCGA). The TR column indicates whether or not the cancer type is used for the treatment recommendation.}\label{tab:Supp_miRNA}
\end{table*}

\begin{figure*}
     \centering
     \begin{subfigure}[b]{.49\textwidth}
         \centering
         \includegraphics[width=.98\textwidth]{figures/mRNANo_Supervision_RSF_rem.pdf}
         \caption{$\text{C-index}$ on mRNA with no supervision.}
         \label{fig:all_mRNANo_Supervision_RSF_rem}
     \end{subfigure}
     \begin{subfigure}[b]{.49\textwidth}
         \centering
         \includegraphics[width=.98\textwidth]{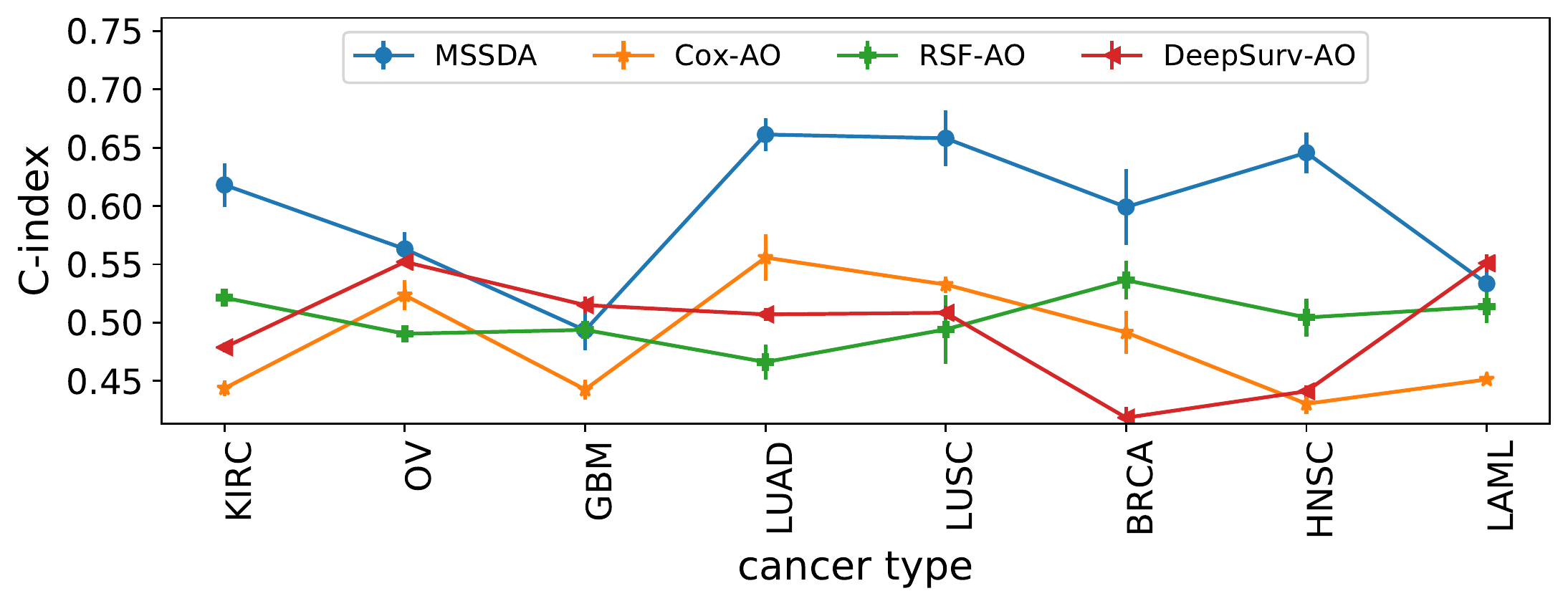}
         \caption{$\text{C-index}$ on mRNA with 5\% supervision.}
         \label{fig:all_mRNAPartial_Supervision_0.05_RSF_rem}
     \end{subfigure}

     \begin{subfigure}[b]{0.49\textwidth}
         \centering
         \includegraphics[width=.98\linewidth]{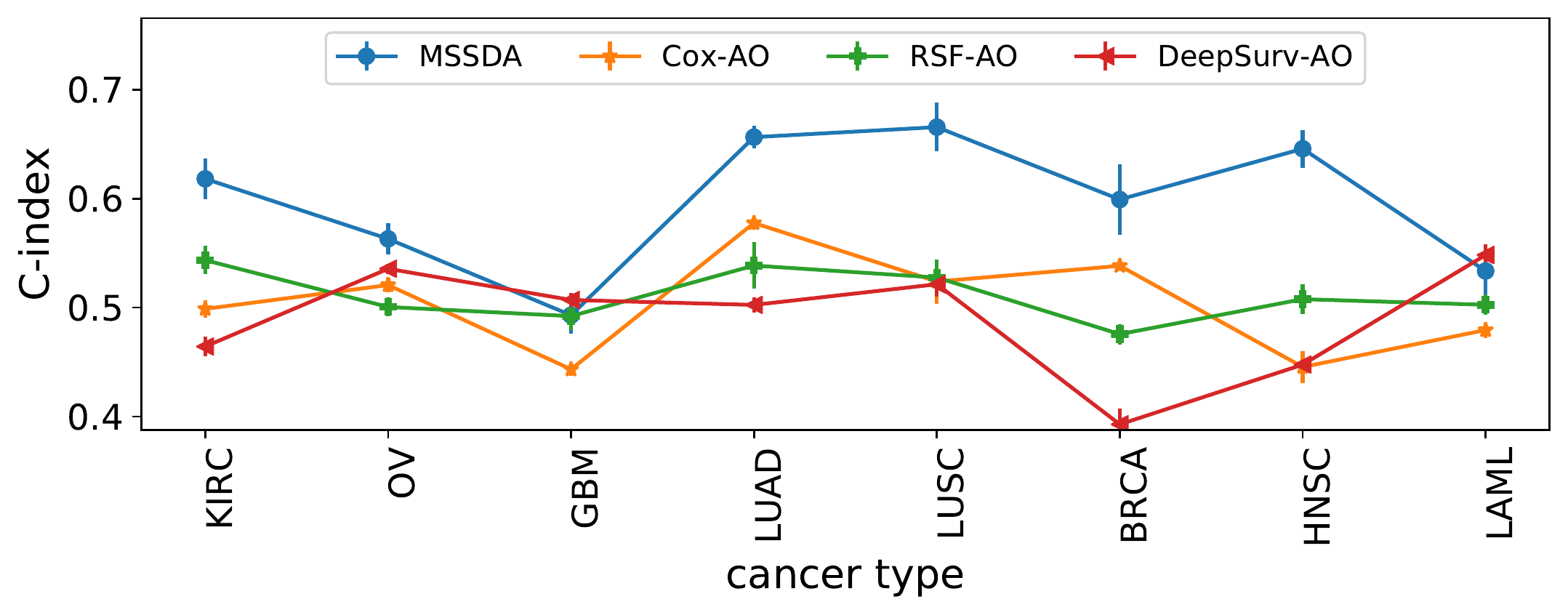}
         \caption{$\text{C-index}$ on mRNA with 10\% supervision.}
         \label{fig:all_mRNAPartial_Supervision_0.1_RSF_rem}
     \end{subfigure}
     \begin{subfigure}[b]{0.49\textwidth}
         \centering
         \includegraphics[width=.98\linewidth]{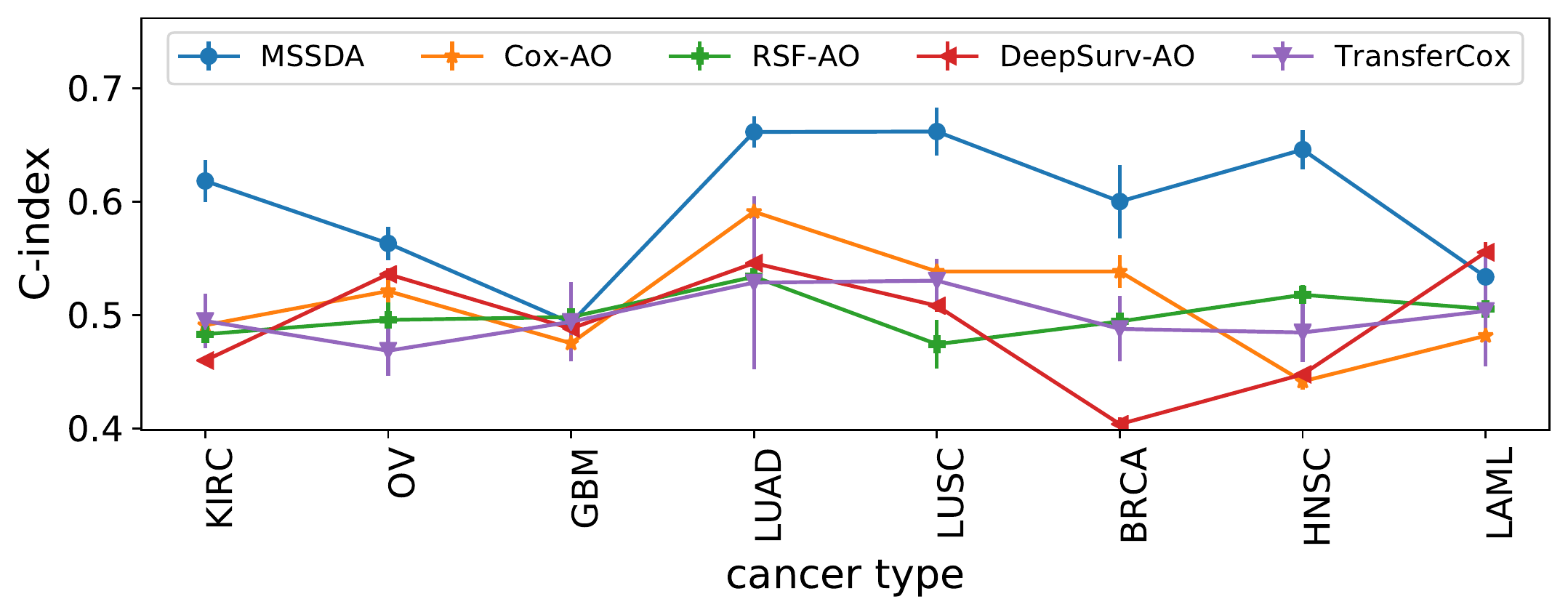}
         \caption{$\text{C-index}$ on mRNA with 15\% supervision.}
         \label{fig:all_mRNAPartial_Supervision_0.15_RSF_rem}
     \end{subfigure}

     \begin{subfigure}[b]{0.49\textwidth}
         \centering
         \includegraphics[width=.98\linewidth]{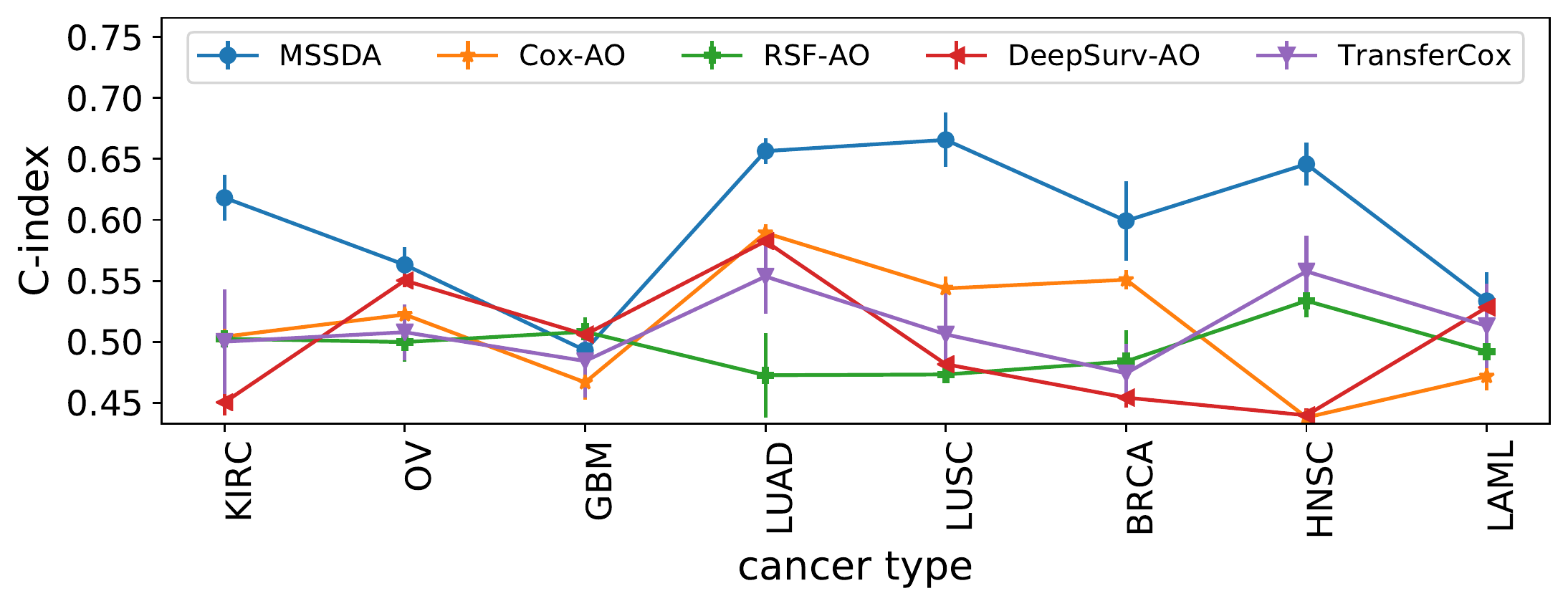}
         \caption{$\text{C-index}$ on mRNA with 20\% supervision.}
         \label{fig:all_mRNAPartial_Supervision_0.2_RSF_rem}
     \end{subfigure}
     \begin{subfigure}[b]{0.49\textwidth}
         \centering
         \includegraphics[width=.98\linewidth]{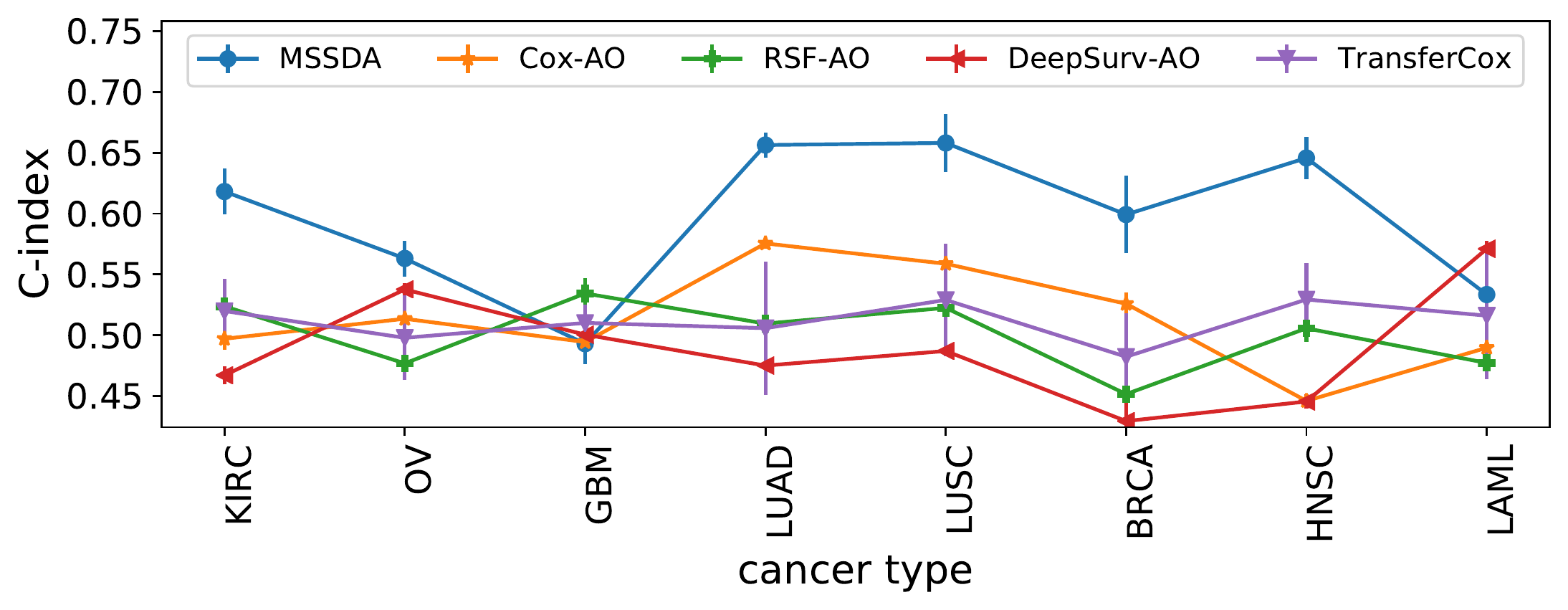}
         \caption{$\text{C-index}$ on mRNA with 25\% supervision.}
         \label{fig:all_mRNAPartial_Supervision_0.25_RSF_rem}
     \end{subfigure}     
        \caption{Performance comparison, on the mRNA data, in terms of the $\text{C-index}$.}
        \label{fig:all_mRNA_C-index_rem_strict}

\end{figure*}

\begin{figure*}
     \centering
     \begin{subfigure}[b]{.49\textwidth}
         \centering
         \includegraphics[width=.98\textwidth]{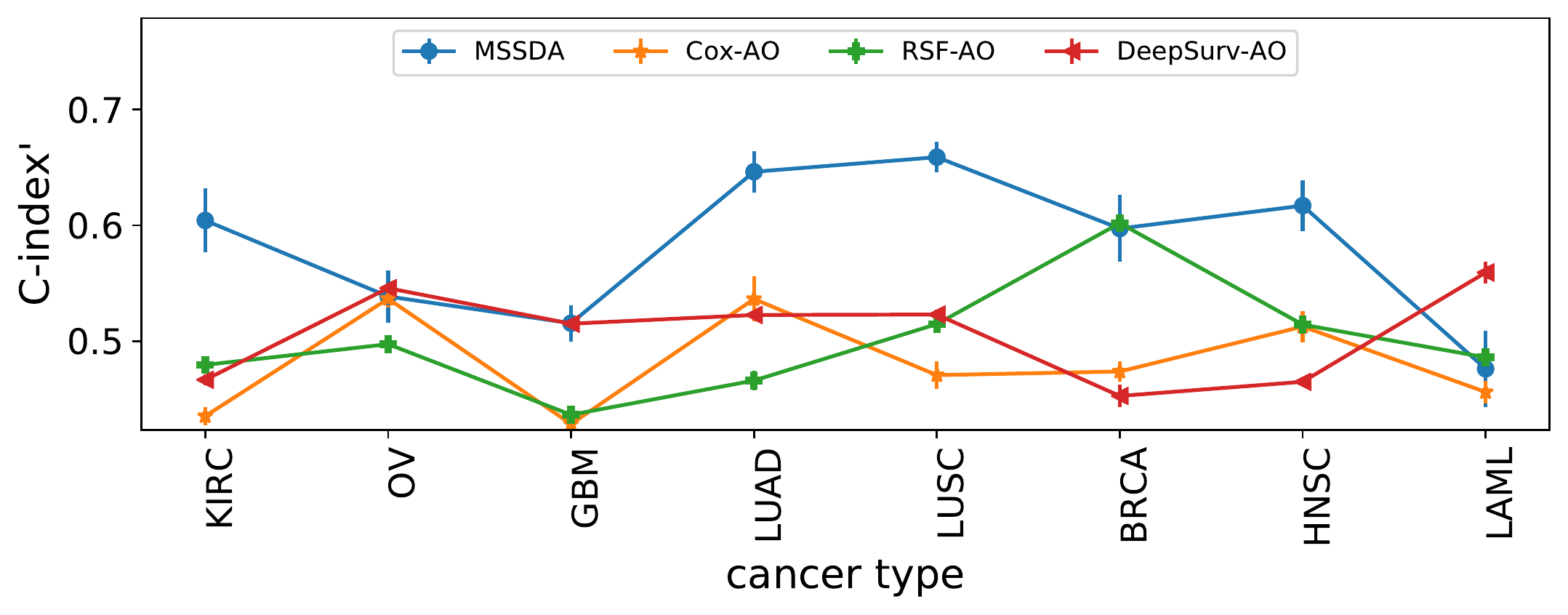}
         \caption{$\text{C-index}^\prime$ on mRNA with no supervision.}
         \label{fig:all_mRNANo_Supervision_RSF}
     \end{subfigure}
     \begin{subfigure}[b]{.49\textwidth}
         \centering
         \includegraphics[width=.98\textwidth]{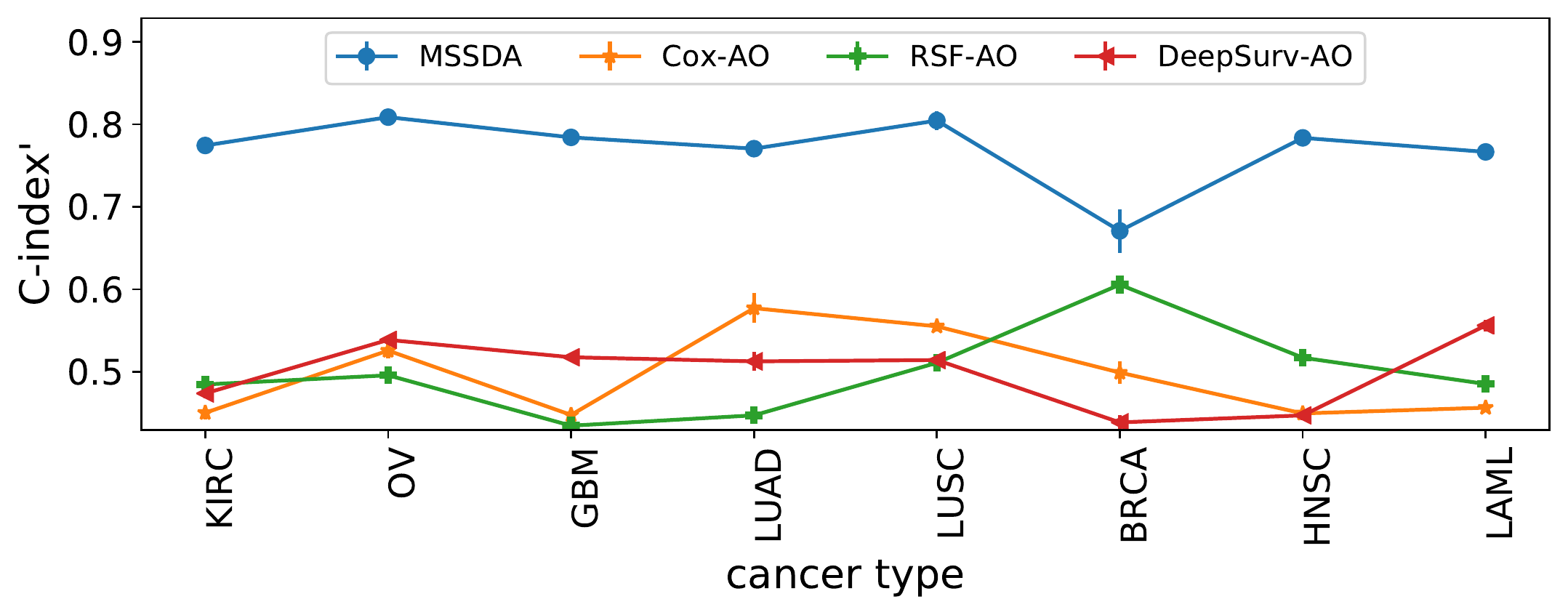}
         \caption{$\text{C-index}^\prime$ on mRNA with 5\% supervision.}
         \label{fig:all_mRNAPartial_Supervision_0.05_RSF}
     \end{subfigure}

     \begin{subfigure}[b]{0.49\textwidth}
         \centering
         \includegraphics[width=.98\linewidth]{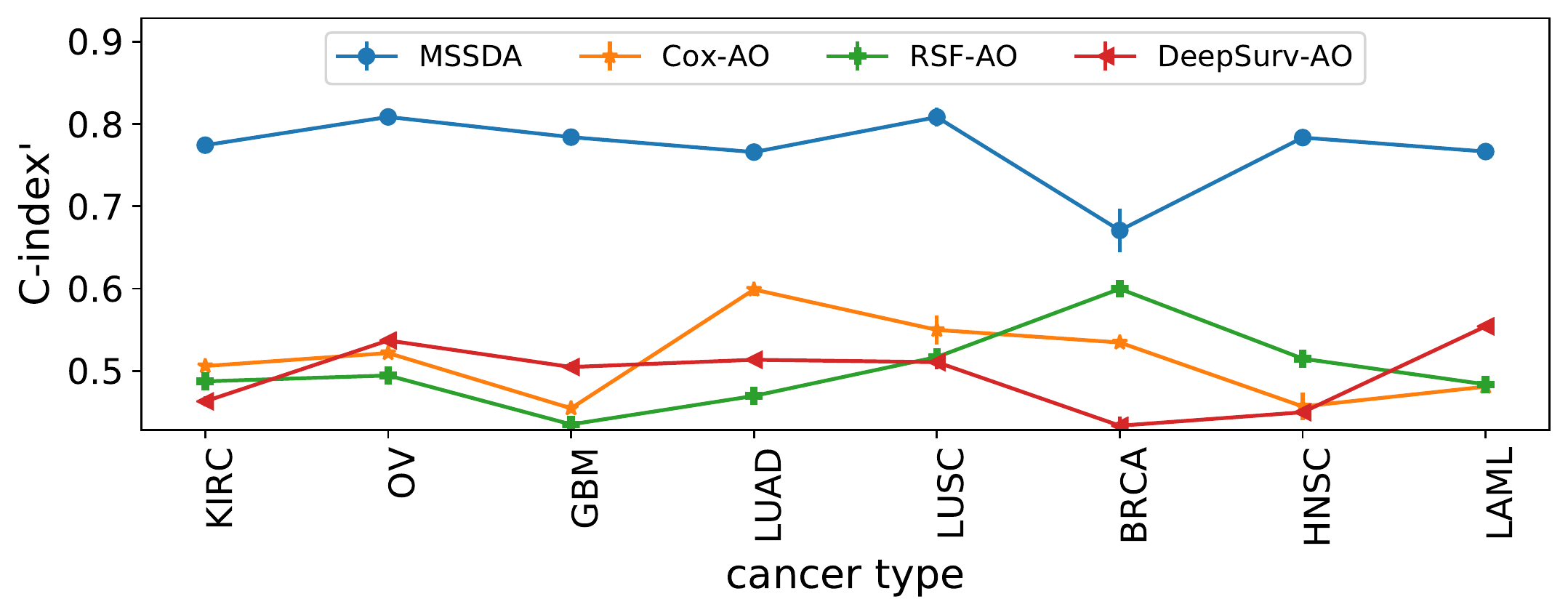}
         \caption{$\text{C-index}^\prime$ on mRNA with 10\% supervision.}
         \label{fig:all_mRNAPartial_Supervision_0.1_RSF}
     \end{subfigure}
     \begin{subfigure}[b]{0.49\textwidth}
         \centering
         \includegraphics[width=.98\linewidth]{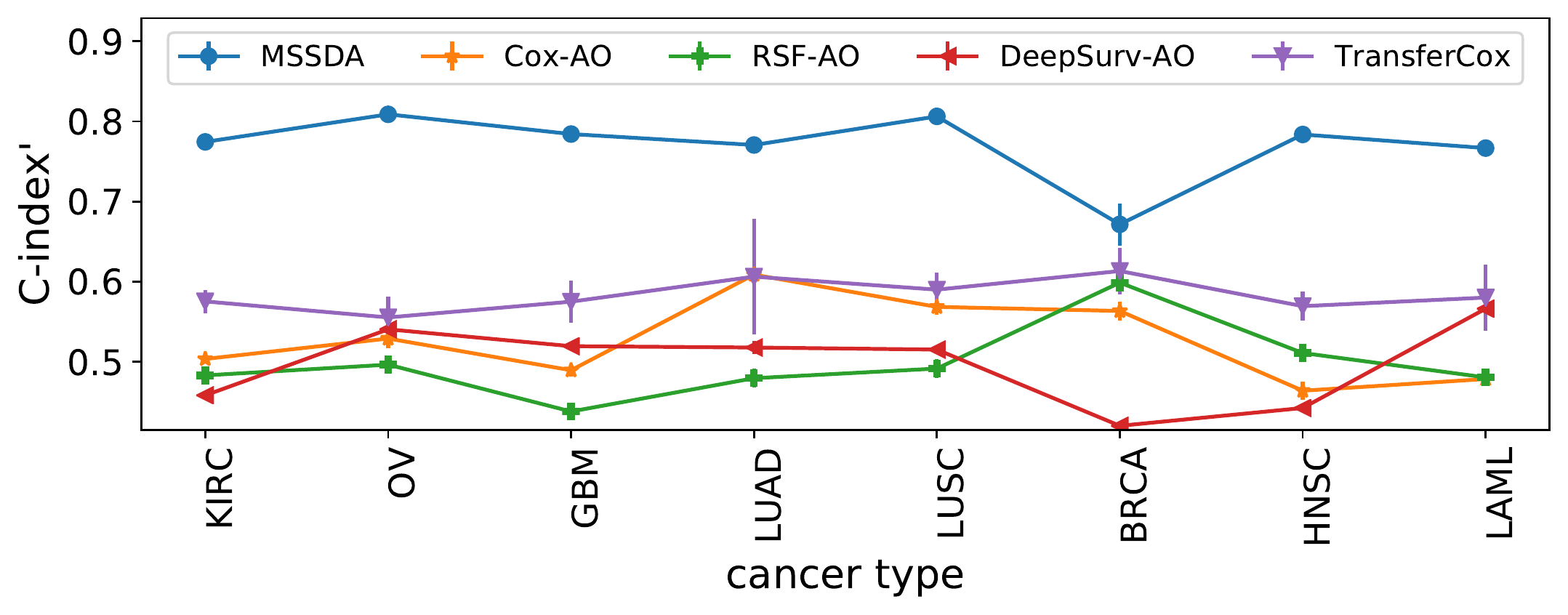}
         \caption{$\text{C-index}^\prime$ on mRNA with 15\% supervision.}
         \label{fig:all_mRNAPartial_Supervision_0.15_RSF}
     \end{subfigure}

     \begin{subfigure}[b]{0.49\textwidth}
         \centering
         \includegraphics[width=.98\linewidth]{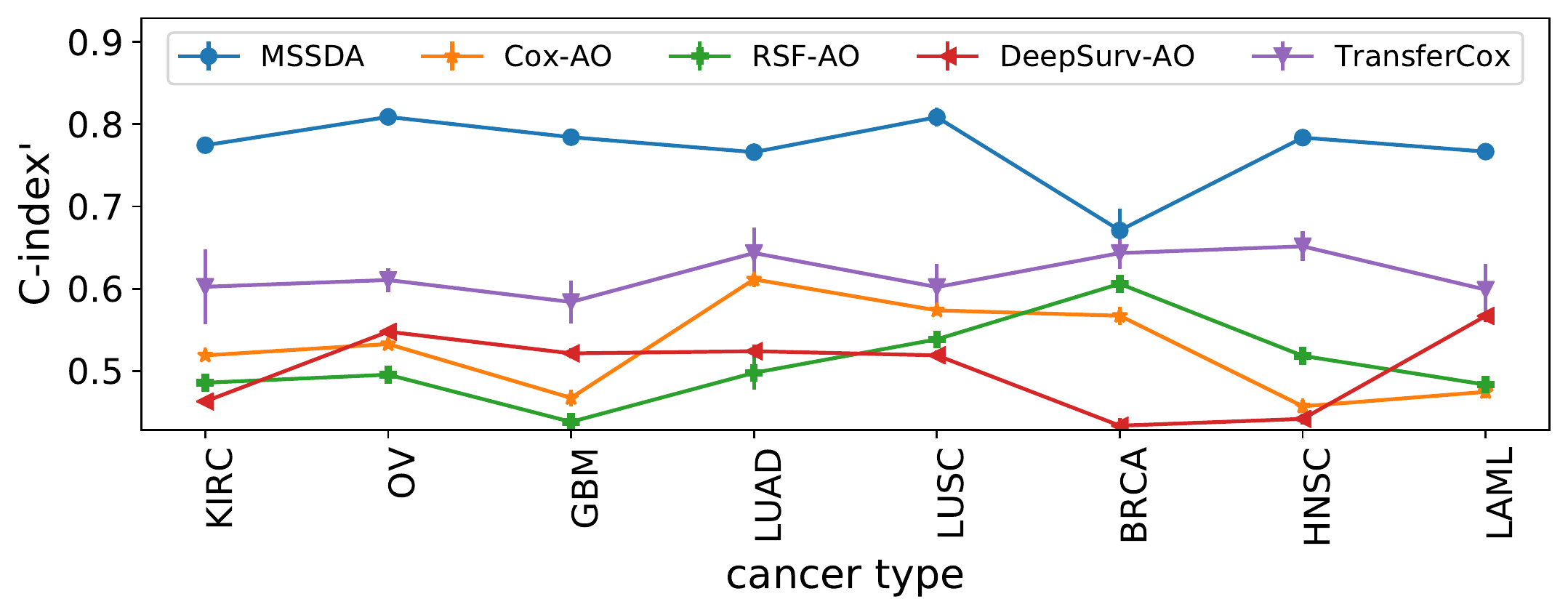}
         \caption{$\text{C-index}^\prime$ on mRNA with 20\% supervision.}
         \label{fig:all_mRNAPartial_Supervision_0.2_RSF}
     \end{subfigure}
     \begin{subfigure}[b]{0.49\textwidth}
         \centering
         \includegraphics[width=.98\linewidth]{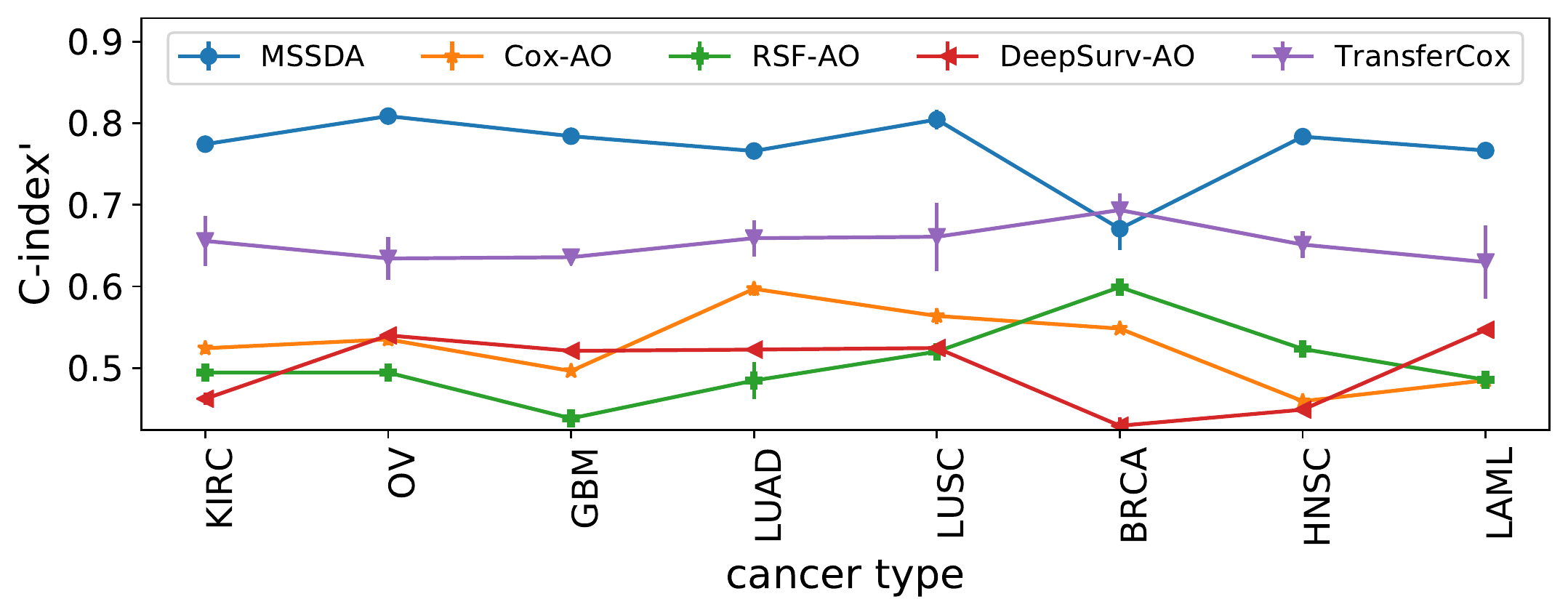}
         \caption{$\text{C-index}^\prime$ on mRNA with 25\% supervision.}
         \label{fig:all_mRNAPartial_Supervision_0.25_RSF}
     \end{subfigure}     
        \caption{Performance comparison, on the mRNA data, in terms of the $\text{C-index}^\prime$.}
        \label{fig:all_mRNA_C-index_strict}

\end{figure*}
\begin{figure*}
     \centering
     \begin{subfigure}[b]{.49\textwidth}
         \centering
         \includegraphics[width=.98\textwidth]{figures/miRNANo_Supervision_RSF_rem.pdf}
         \caption{$\text{C-index}$ on miRNA with no supervision.}
         \label{fig:all_miRNANo_Supervision_RSF_rem}
     \end{subfigure}
     \begin{subfigure}[b]{.49\textwidth}
         \centering
         \includegraphics[width=.98\textwidth]{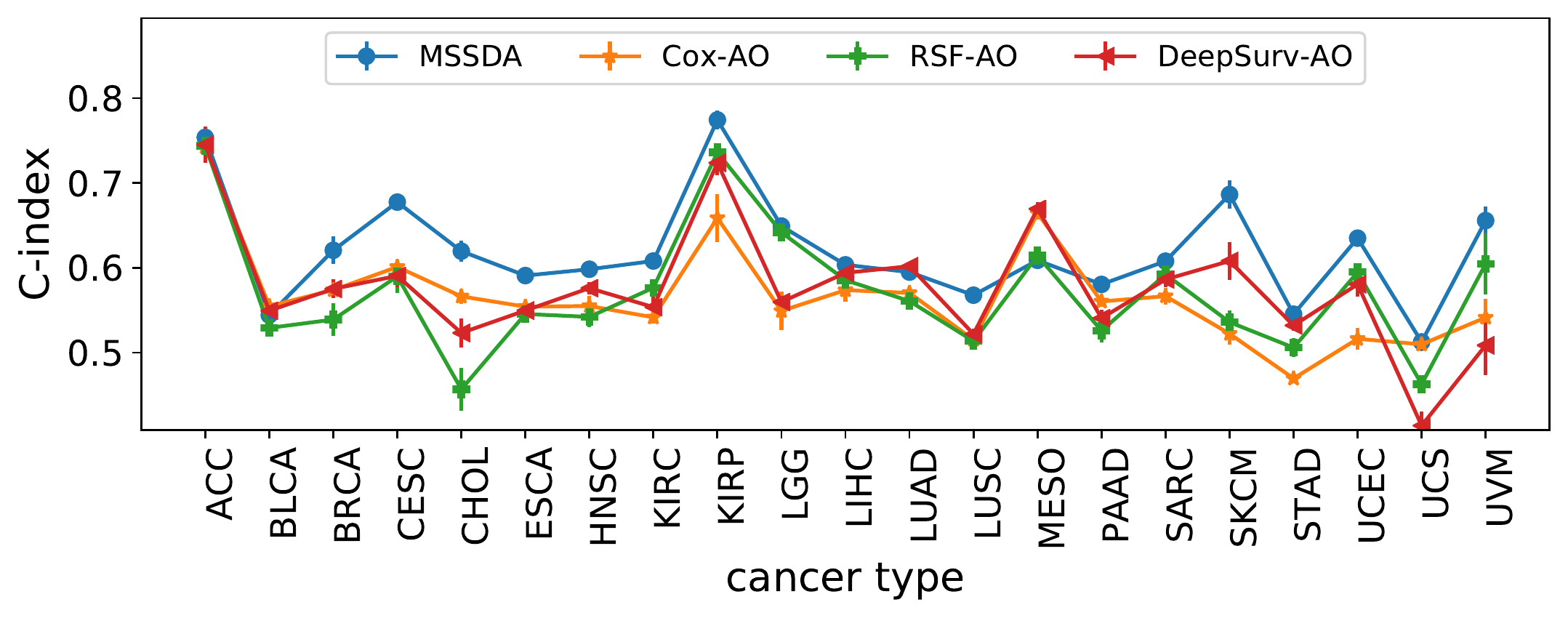}
         \caption{$\text{C-index}$ on miRNA with 5\% supervision.}
         \label{fig:all_miRNAPartial_Supervision_0.05_RSF_rem}
     \end{subfigure}

     \begin{subfigure}[b]{0.49\textwidth}
         \centering
         \includegraphics[width=.98\linewidth]{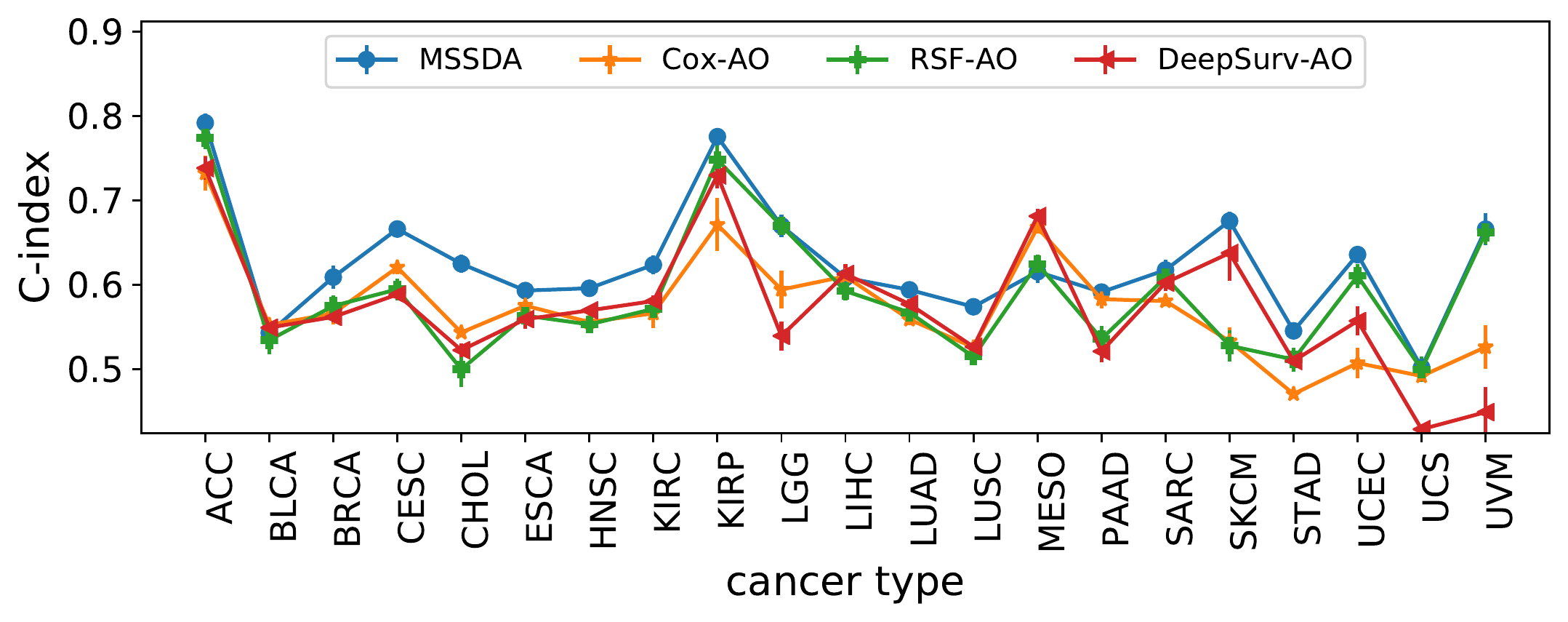}
         \caption{$\text{C-index}$ on miRNA with 10\% supervision.}
         \label{fig:all_miRNAPartial_Supervision_0.1_RSF_rem}
     \end{subfigure}
     \begin{subfigure}[b]{0.49\textwidth}
         \centering
         \includegraphics[width=.98\linewidth]{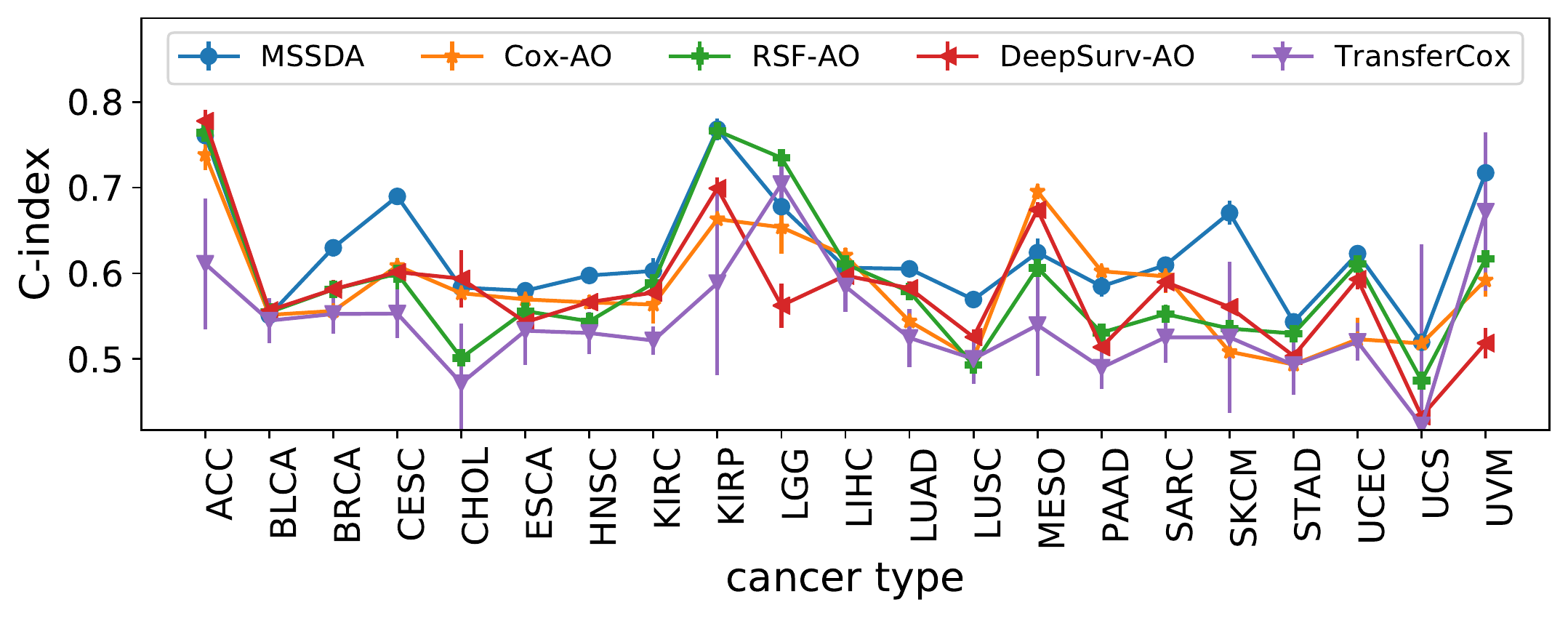}
         \caption{$\text{C-index}$ on miRNA with 15\% supervision.}
         \label{fig:all_miRNAPartial_Supervision_0.15_RSF_rem}
     \end{subfigure}

     \begin{subfigure}[b]{0.49\textwidth}
         \centering
         \includegraphics[width=.98\linewidth]{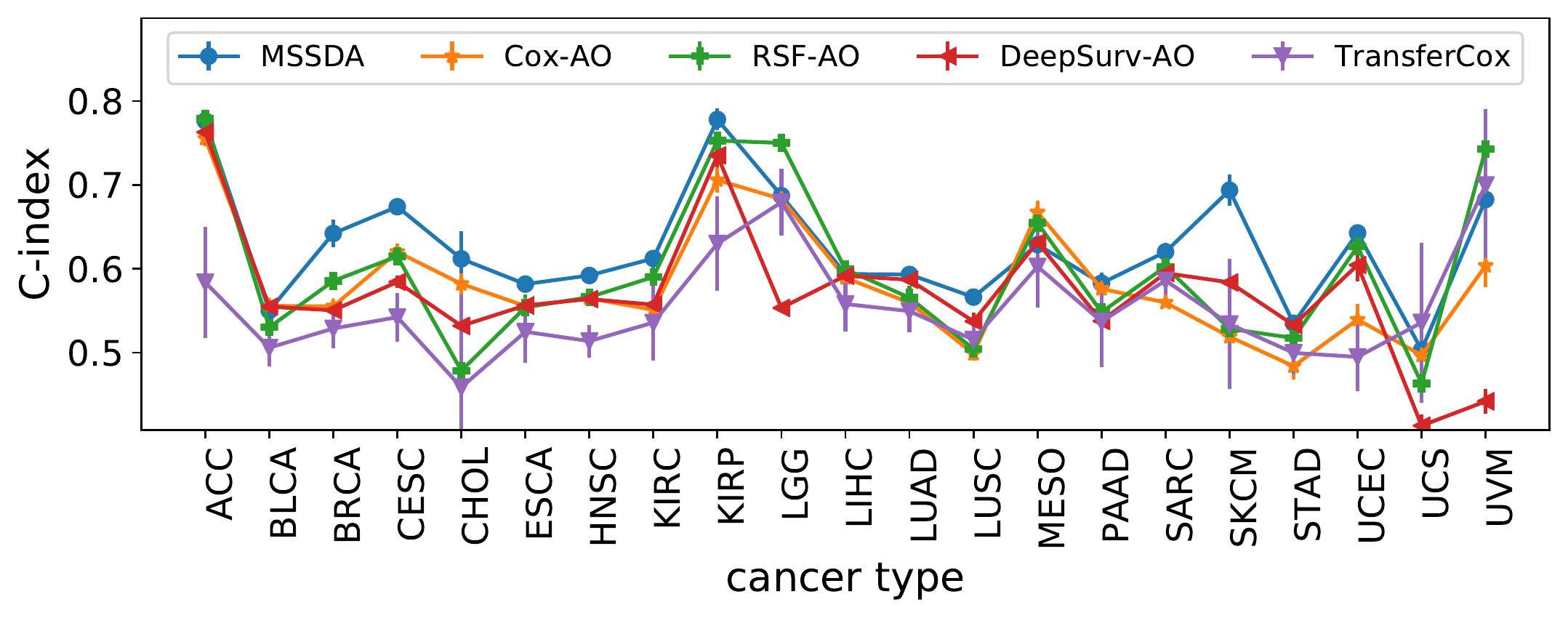}
         \caption{$\text{C-index}$ on miRNA with 20\% supervision.}
         \label{fig:all_miRNAPartial_Supervision_0.2_RSF_rem}
     \end{subfigure}
     \begin{subfigure}[b]{0.49\textwidth}
         \centering
         \includegraphics[width=.98\linewidth]{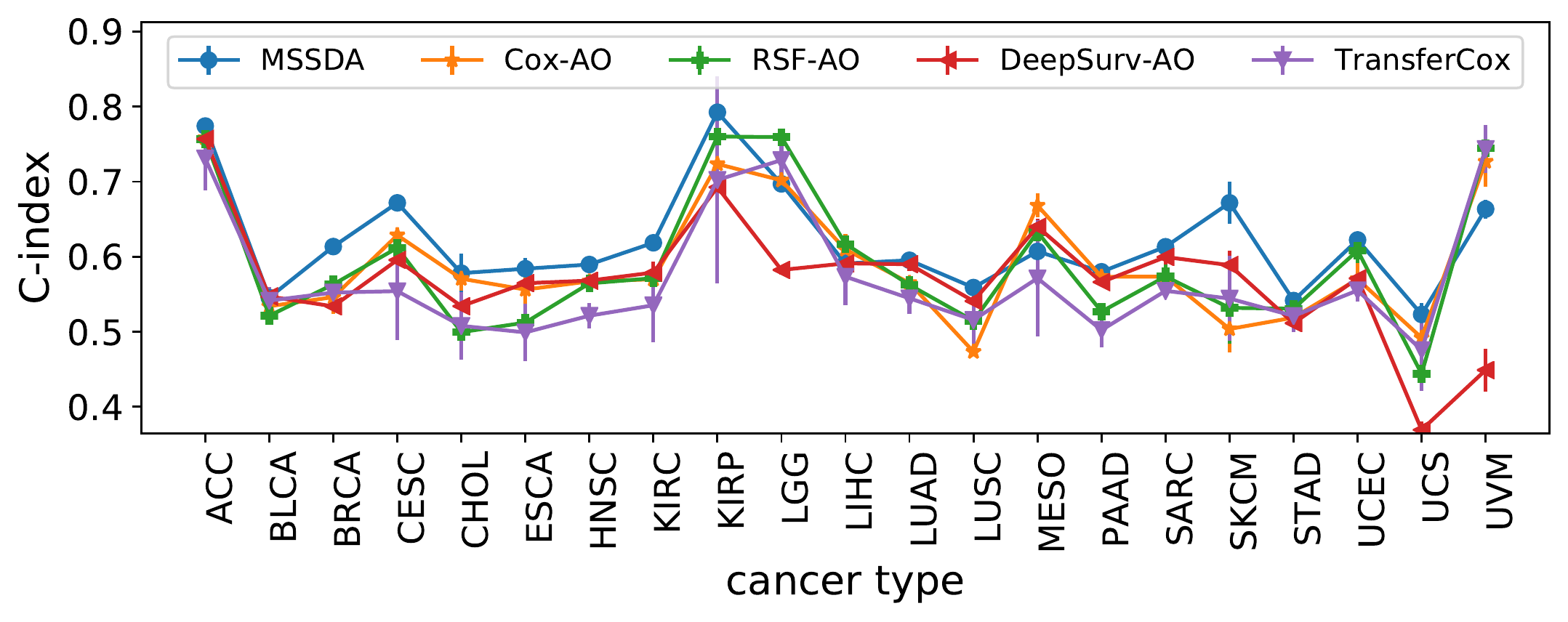}
         \caption{$\text{C-index}$ on miRNA with 25\% supervision.}
         \label{fig:all_miRNAPartial_Supervision_0.25_RSF_rem}
     \end{subfigure}     
        \caption{Performance comparison, on the miRNA data, in terms of the $\text{C-index}$.}
        \label{fig:all_miRNA_C-index_rem_strict}

\end{figure*}

\begin{figure*}
     \centering
     \begin{subfigure}[b]{.49\textwidth}
         \centering
         \includegraphics[width=.98\textwidth]{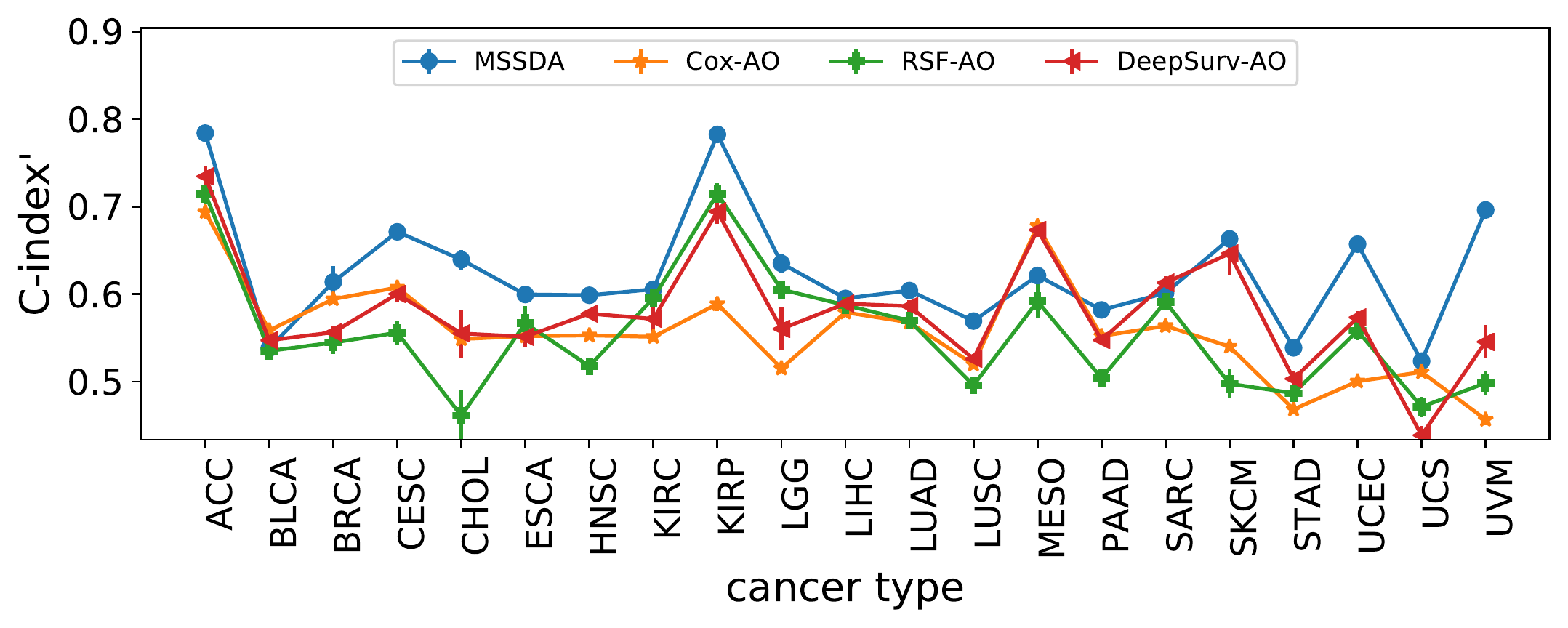}
         \caption{$\text{C-index}^\prime$ on miRNA with no supervision.}
         \label{fig:all_miRNANo_Supervision_RSF}
     \end{subfigure}
     \begin{subfigure}[b]{.49\textwidth}
         \centering
         \includegraphics[width=.98\textwidth]{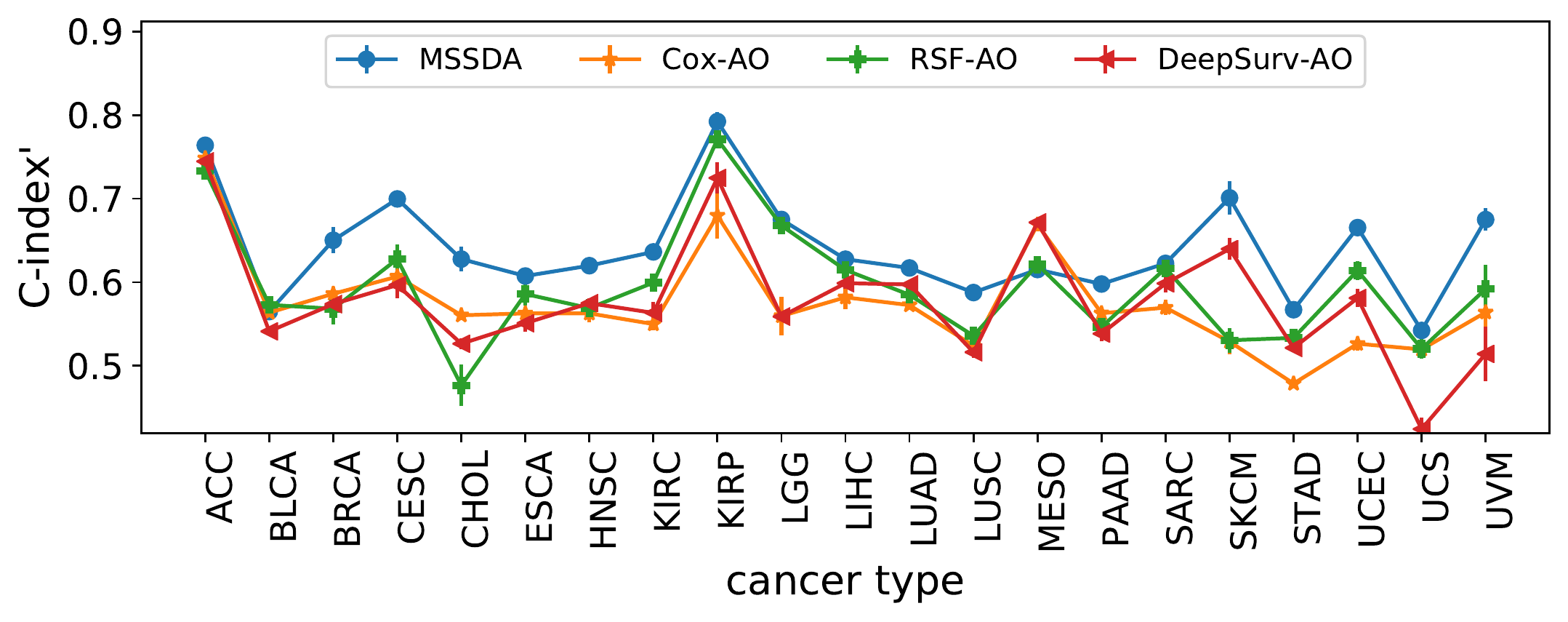}
         \caption{$\text{C-index}^\prime$ on miRNA with 5\% supervision.}
         \label{fig:all_miRNAPartial_Supervision_0.05_RSF}
     \end{subfigure}

     \begin{subfigure}[b]{0.49\textwidth}
         \centering
         \includegraphics[width=.98\linewidth]{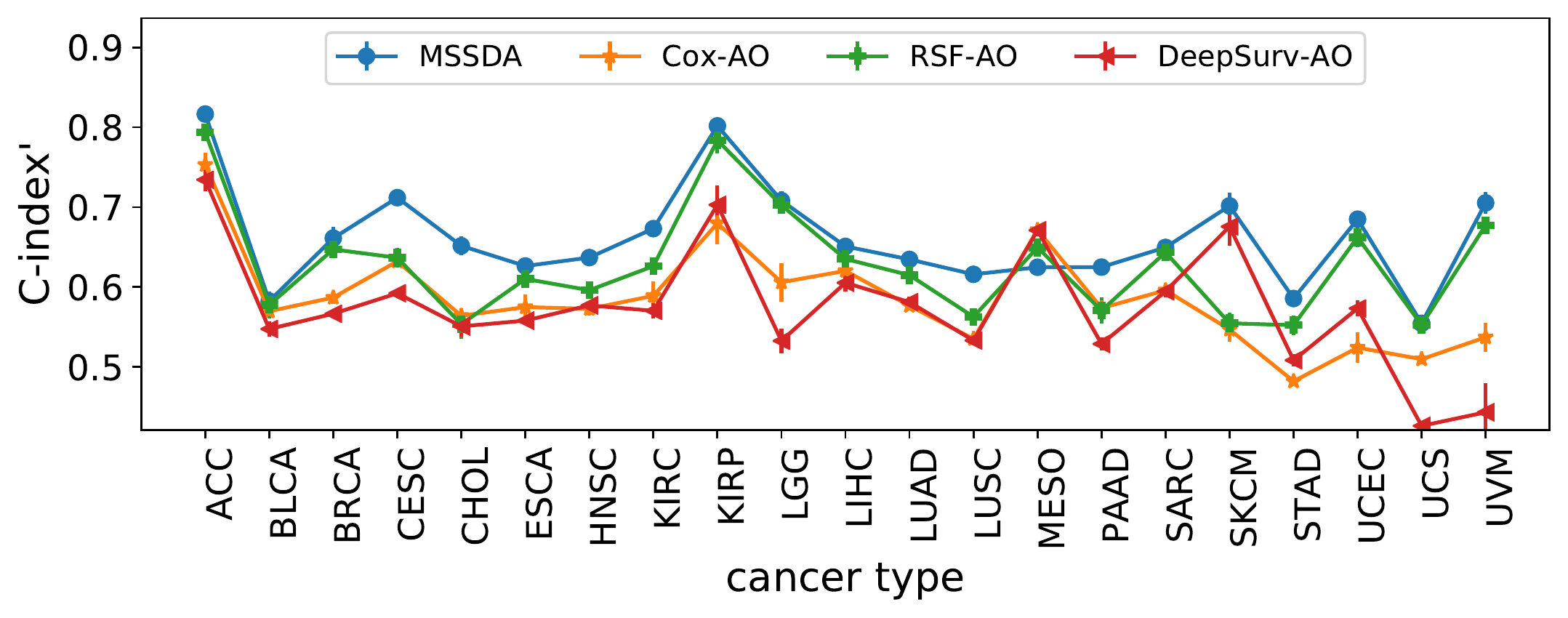}
         \caption{$\text{C-index}^\prime$ on miRNA with 10\% supervision.}
         \label{fig:all_miRNAPartial_Supervision_0.1_RSF}
     \end{subfigure}
     \begin{subfigure}[b]{0.49\textwidth}
         \centering
         \includegraphics[width=.98\linewidth]{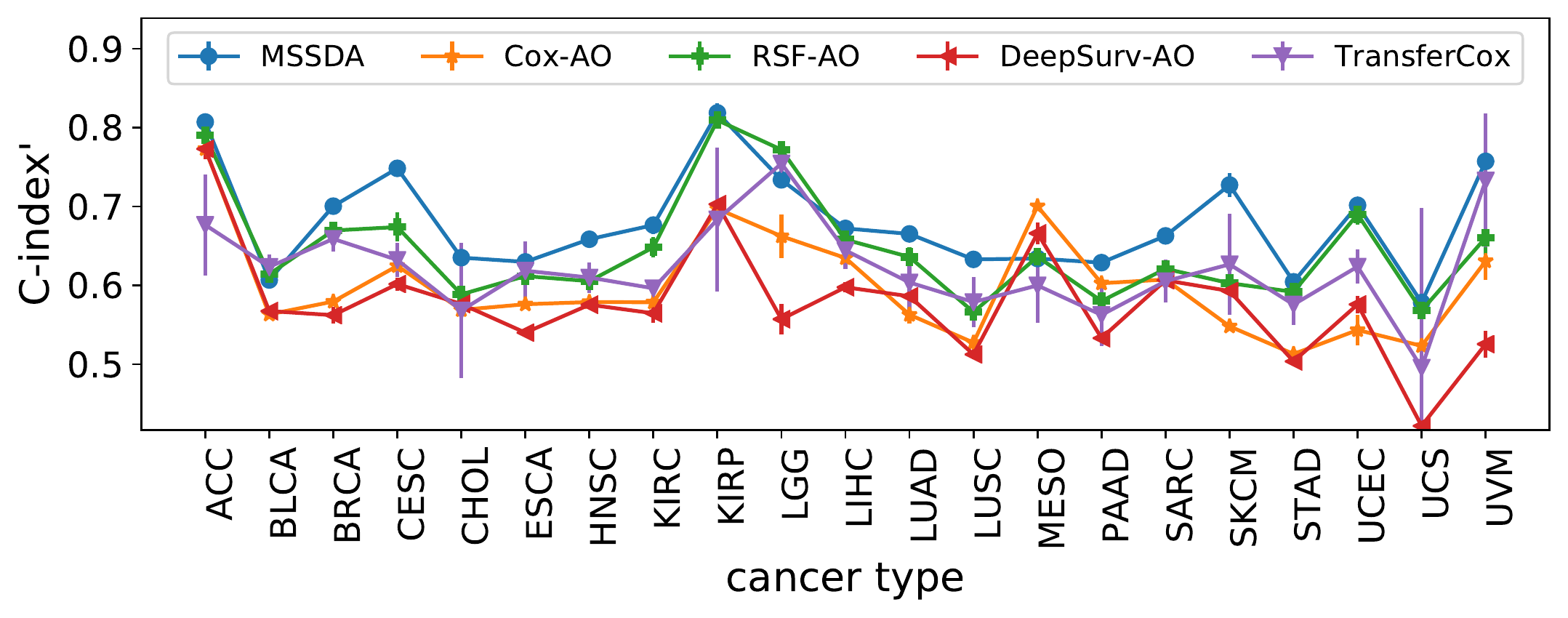}
         \caption{$\text{C-index}^\prime$ on miRNA with 15\% supervision.}
         \label{fig:all_miRNAPartial_Supervision_0.15_RSF}
     \end{subfigure}

     \begin{subfigure}[b]{0.49\textwidth}
         \centering
         \includegraphics[width=.98\linewidth]{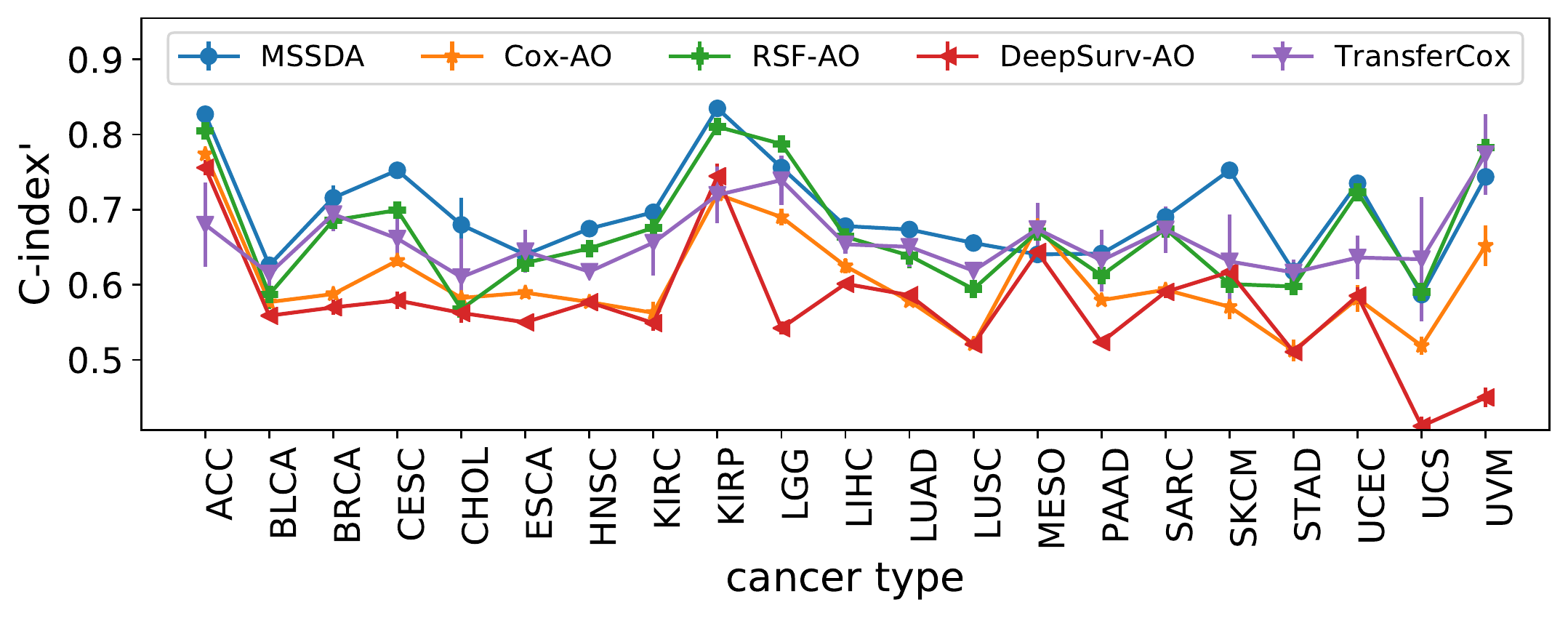}
         \caption{$\text{C-index}^\prime$ on miRNA with 20\% supervision.}
         \label{fig:all_miRNAPartial_Supervision_0.2_RSF}
     \end{subfigure}
     \begin{subfigure}[b]{0.49\textwidth}
         \centering
         \includegraphics[width=.98\linewidth]{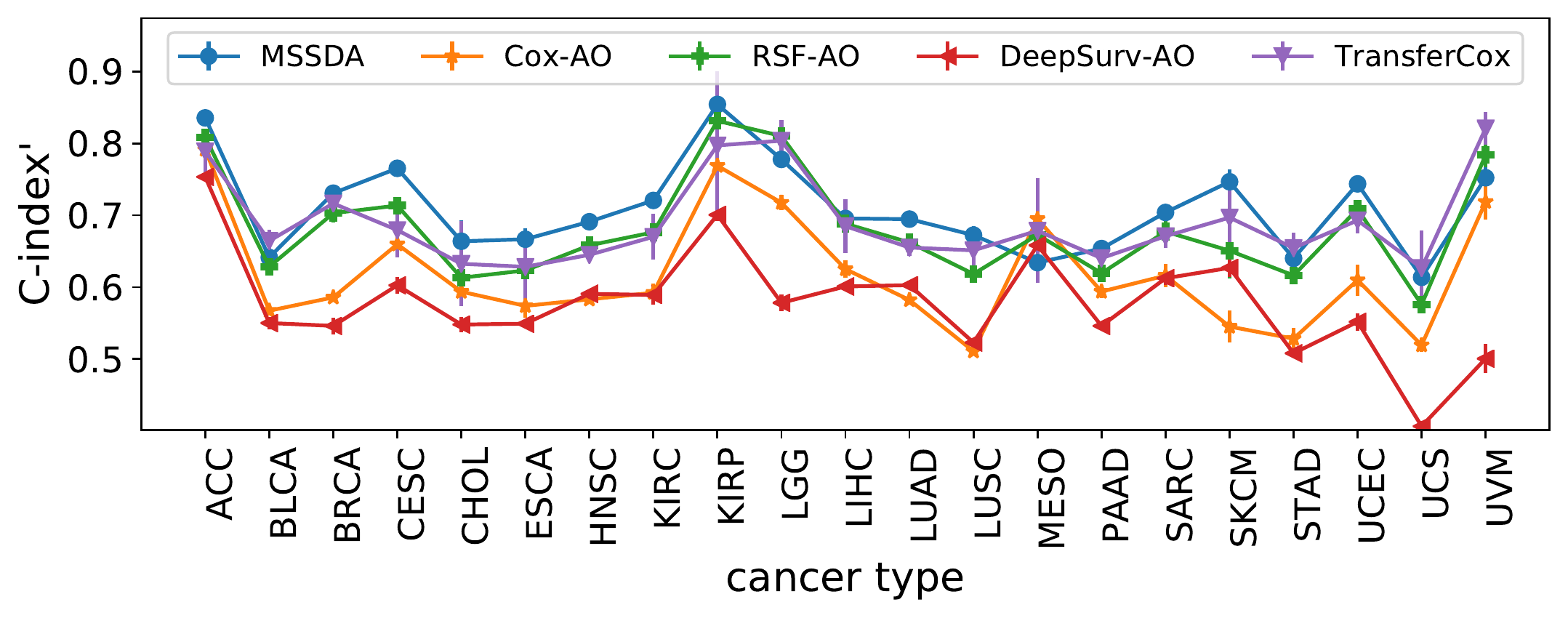}
         \caption{$\text{C-index}^\prime$ on miRNA with 25\% supervision.}
         \label{fig:all_miRNAPartial_Supervision_0.25_RSF}
     \end{subfigure}     
        \caption{Performance comparison, on the miRNA data, in terms of the $\text{C-index}^\prime$.}
        \label{fig:all_miRNA_C-index_strict}

\end{figure*}
\begin{figure*}
     \centering
     \begin{subfigure}[b]{0.49\textwidth}
         \centering
         \includegraphics[width=.98\linewidth]{figures/mRNA_RSF_Rank_c_index_rem.pdf}
         \caption{Rank of the different methods based on the $\text{C-index}$ on mRNA.}
         \label{fig:Supp_mRNA_RSF_Rank_c_index_rem}
     \end{subfigure}
     \begin{subfigure}[b]{0.49\textwidth}
         \centering
         \includegraphics[width=.98\linewidth]{figures/miRNA_RSF_Rank_c_index_rem.pdf}
         \caption{Rank of the different methods based on the $\text{C-index}$ on miRNA.}
         \label{fig:Supp_miRNA_RSF_Rank_c_index_rem}
     \end{subfigure}

     \begin{subfigure}[b]{0.49\textwidth}
         \centering
         \includegraphics[width=.98\linewidth]{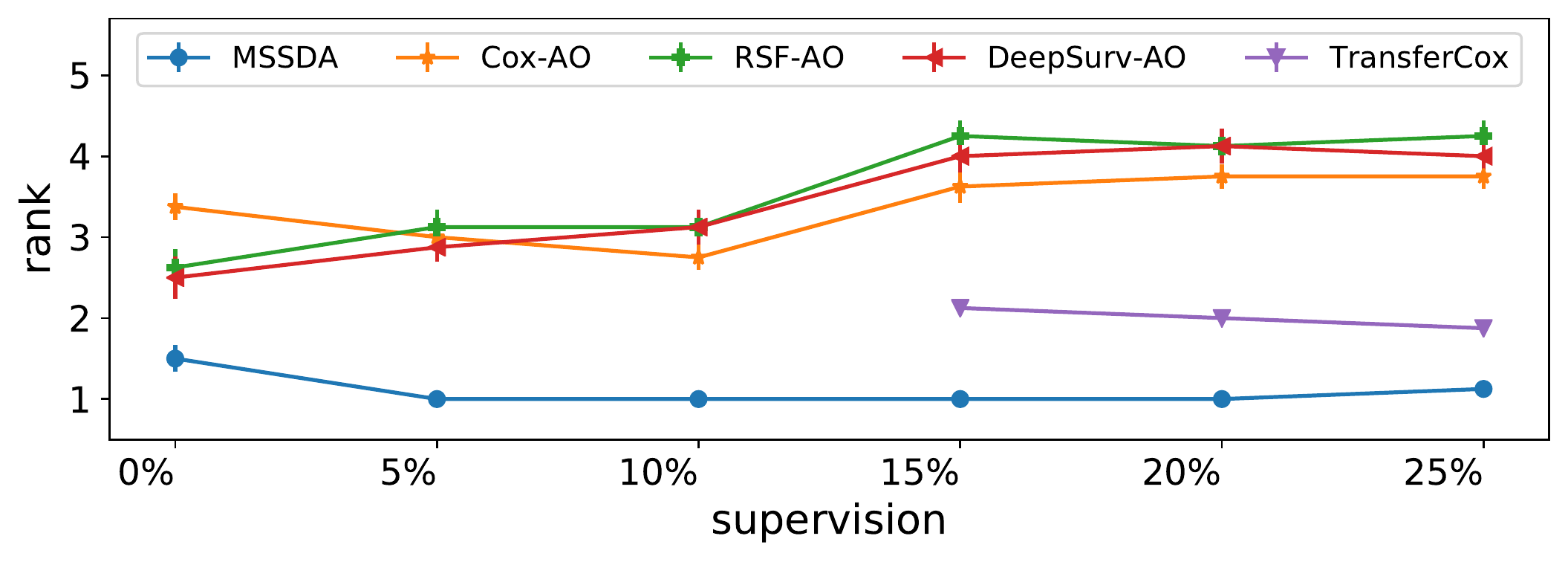}
         \caption{Rank of the different methods based on the $\text{C-index}^\prime$ on mRNA.}
         \label{fig:Supp_mRNA_RSF_Rank_c_index}
     \end{subfigure}     
     \begin{subfigure}[b]{0.49\textwidth}
         \centering
         \includegraphics[width=.98\linewidth]{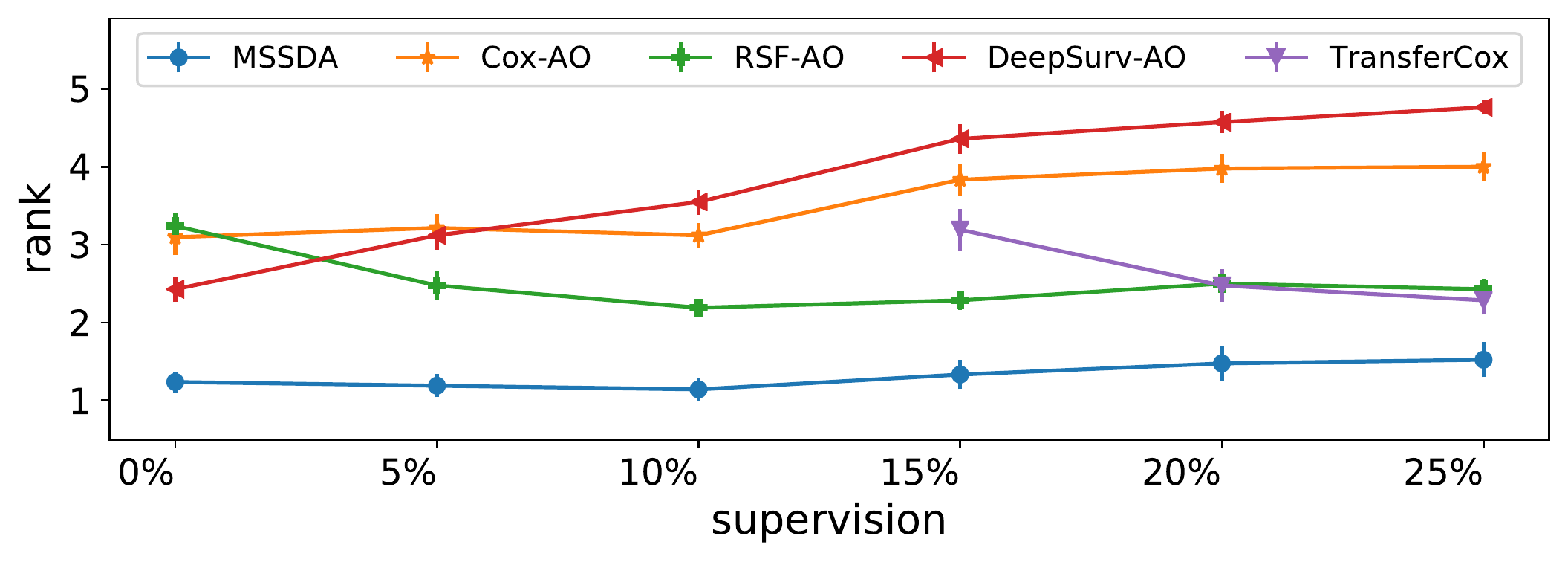}
         \caption{Rank of the different methods based on the $\text{C-index}^\prime$ on miRNA.}
         \label{fig:Supp_miRNA_RSF_Rank_c_index}
     \end{subfigure}     
        \caption{Performance comparison in terms of ranking, on the miRNA and mRNA data.}
        \label{fig:Supp_miRNA_mRNA_Rank_C-index_and_rem_strict_C-index}
\end{figure*}
\begin{table*}
\centering
\begin{tabular}{|lp{0.035\textwidth}p{0.035\textwidth}p{0.035\textwidth}p{0.035\textwidth}|p{0.035\textwidth}p{0.035\textwidth}p{0.035\textwidth}p{0.035\textwidth}|p{0.035\textwidth}p{0.035\textwidth}p{0.035\textwidth}p{0.035\textwidth}|}

\toprule
superv.&\multicolumn{4}{c|}{.00\%} & \multicolumn{4}{c|}{5\%} & \multicolumn{4}{c|}{10\%}  \\
method& \rot{MSSDA} & \rot{Cox-AO} & \rot{RSF-AO} & \rot{DeepSurv-AO}  & \rot{MSSDA} & \rot{Cox-AO} & \rot{RSF-AO} & \rot{DeepSurv-AO} & \rot{MSSDA} & \rot{Cox-AO} & \rot{RSF-AO} & \rot{DeepSurv-AO}  \\
\midrule

KIRC &  \textbf{.604} &         .435 &           .480 &              .466 &  \textbf{.618} &         .444 &         .521 &              .479 &  \textbf{.618} &         .499 &         .543 &              .464 \\
&    \tiny{(.028)} &         \tiny{(.008)} &         \tiny{(.002)} &              \tiny{(.007)} &    \tiny{(.019)} &         \tiny{(.007)} &         \tiny{(.004)} &              \tiny{(.004)} &    \tiny{(.019)} &         \tiny{(.008)} &         \tiny{(.013)} &              \tiny{(.009)} \\
OV   &  \textbf{.539} &         .537 &           .497 &              .538 &  \textbf{.563} &         .523 &         .490 &              .552 &  \textbf{.563} &         .521 &         .501 &              .536 \\
&    \tiny{(.023)} &         \tiny{(.008)} &         \tiny{(.002)} &              \tiny{(.002)} &    \tiny{(.015)} &         \tiny{(.013)} &         \tiny{(.006)} &              \tiny{(.002)} &    \tiny{(.015)} &         \tiny{(.007)} &         \tiny{(.009)} &              \tiny{(.005)} \\
GBM  &  \textbf{.516} &         .428 &           .436 &              .511 &           .493 &         .443 &         .494 &     \textbf{.515} &           .493 &         .443 &         .492 &     \textbf{.507} \\
&    \tiny{(.016)} &         \tiny{(.004)} &         \tiny{(.002)} &              \tiny{(.006)} &    \tiny{(.017)} &         \tiny{(.009)} &         \tiny{(.008)} &              \tiny{(.007)} &    \tiny{(.017)} &         \tiny{(.006)} &         \tiny{(.013)} &              \tiny{(.006)} \\
LUAD &  \textbf{.646} &         .536 &           .466 &              .530 &  \textbf{.661} &         .556 &         .466 &              .507 &  \textbf{.656} &         .578 &         .539 &              .502 \\
&    \tiny{(.018)} &         \tiny{(.020)} &         \tiny{(.008)} &              \tiny{(.006)} &    \tiny{(.014)} &         \tiny{(.020)} &         \tiny{(.015)} &              \tiny{(.005)} &    \tiny{(.010)} &         \tiny{(.006)} &         \tiny{(.021)} &              \tiny{(.007)} \\
LUSC &  \textbf{.659} &         .471 &           .515 &              .520 &  \textbf{.658} &         .533 &         .494 &              .508 &  \textbf{.666} &         .524 &         .528 &              .521 \\
&    \tiny{(.013)} &         \tiny{(.012)} &         \tiny{(.005)} &              \tiny{(.005)} &    \tiny{(.024)} &         \tiny{(.007)} &         \tiny{(.029)} &              \tiny{(.006)} &    \tiny{(.022)} &         \tiny{(.020)} &         \tiny{(.017)} &              \tiny{(.012)} \\
BRCA &           .597 &         .474 &  \textbf{.602} &              .437 &  \textbf{.599} &         .492 &         .536 &              .418 &  \textbf{.599} &         .538 &         .476 &              .393 \\
&    \tiny{(.029)} &         \tiny{(.009)} &         \tiny{(.004)} &              \tiny{(.007)} &    \tiny{(.033)} &         \tiny{(.019)} &         \tiny{(.017)} &              \tiny{(.009)} &    \tiny{(.033)} &         \tiny{(.004)} &         \tiny{(.009)} &              \tiny{(.014)} \\
HNSC &  \textbf{.617} &         .513 &           .514 &              .458 &  \textbf{.646} &         .430 &         .504 &              .441 &  \textbf{.646} &         .445 &         .508 &              .448 \\
&    \tiny{(.022)} &         \tiny{(.013)} &         \tiny{(.003)} &              \tiny{(.003)} &    \tiny{(.017)} &         \tiny{(.009)} &         \tiny{(.016)} &              \tiny{(.005)} &    \tiny{(.017)} &         \tiny{(.015)} &         \tiny{(.014)} &              \tiny{(.004)} \\
LAML &           .476 &         .456 &           .486 &     \textbf{.570} &           .533 &         .451 &         .514 &     \textbf{.551} &           .534 &         .479 &         .503 &     \textbf{.548} \\
&    \tiny{(.033)} &         \tiny{(.010)} &         \tiny{(.002)} &              \tiny{(.010)} &    \tiny{(.024)} &         \tiny{(.003)} &         \tiny{(.014)} &              \tiny{(.008)} &    \tiny{(.024)} &         \tiny{(.008)} &         \tiny{(.009)} &              \tiny{(.010)} \\
\bottomrule
P-value&
&\scriptsize{5e-3}&\scriptsize{2e-2}&\scriptsize{2e-2}&
&\scriptsize{3e-3}&\scriptsize{9e-3}&\scriptsize{9e-3}&
&\scriptsize{1e-2}&\scriptsize{1e-2}&\scriptsize{9e-3}\\
\bottomrule
Rank& \textbf{1.38}& 3.38& 2.63& 2.63& \textbf{1.38}& 3.25& 2.88& 2.50 & \textbf{1.25}& 3.12& 2.75& 2.88 \\
\bottomrule
\end{tabular}
\caption{The performance comparison on the mRNA data in terms of $\text{C-index}$. The following settings were used: no supervision, 5\%, and 10\% supervision. The numbers in brackets depict the standard error. The last row shows the rank in each supervision group.
The p-value row depicts the p-value for the upper-tailed Wilcoxon signed-ranks test between each method and MDSSA.}\label{tab:Supp-mRNA-C-index-rem-1-RSF}
\end{table*}

\begin{table*}
\centering
\begin{tabular}{|lp{0.035\textwidth}p{0.035\textwidth}p{0.035\textwidth}p{0.035\textwidth}p{0.035\textwidth}|p{0.035\textwidth}p{0.035\textwidth}p{0.035\textwidth}p{0.035\textwidth}p{0.035\textwidth}|p{0.035\textwidth}p{0.035\textwidth}p{0.035\textwidth}p{0.035\textwidth}p{0.035\textwidth}|}
\toprule
superv.&\multicolumn{5}{c|}{15\%} & \multicolumn{5}{c|}{20\%} & \multicolumn{5}{c|}{25\%} \\
method& \rot{MSSDA} & \rot{TransferCox} & \rot{Cox-AO} & \rot{RSF-AO} & \rot{DeepSurv-AO} &\rot{MSSDA} & \rot{TransferCox} & \rot{Cox-AO} & \rot{RSF-AO} & \rot{DeepSurv-AO} &\rot{MSSDA} & \rot{TransferCox} & \rot{Cox-AO} & \rot{RSF-AO} & \rot{DeepSurv-AO}  \\
\midrule
KIRC &  \textbf{.618} &       .495 &         .491 &           .483 &              .460 &  \textbf{.618} &       .500 &         .504 &           .503 &              .450 &  \textbf{.618} &       .520 &         .497 &           .524 &              .467 \\
&    \tiny{(.019)} &       \tiny{(.024)} &         \tiny{(.009)} &         \tiny{(.011)} &              \tiny{(.004)} &    \tiny{(.019)} &       \tiny{(.043)} &         \tiny{(.006)} &         \tiny{(.001)} &              \tiny{(.011)} &    \tiny{(.019)} &       \tiny{(.026)} &         \tiny{(.009)} &         \tiny{(.021)} &              \tiny{(.007)} \\
OV   &  \textbf{.563} &       .468 &         .521 &           .496 &              .536 &  \textbf{.563} &       .508 &         .522 &           .500 &              .550 &  \textbf{.563} &       .498 &         .514 &           .477 &              .538 \\
&    \tiny{(.015)} &       \tiny{(.022)} &         \tiny{(.013)} &         \tiny{(.016)} &              \tiny{(.005)} &    \tiny{(.015)} &       \tiny{(.023)} &         \tiny{(.007)} &         \tiny{(.016)} &              \tiny{(.005)} &    \tiny{(.015)} &       \tiny{(.035)} &         \tiny{(.014)} &         \tiny{(.008)} &              \tiny{(.005)} \\
GBM  &           .493 &       .494 &         .475 &  \textbf{.498} &              .488 &           .493 &       .484 &         .467 &  \textbf{.508} &              .506 &           .493 &       .510 &         .494 &  \textbf{.534} &              .500 \\
&    \tiny{(.017)} &       \tiny{(.035)} &         \tiny{(.005)} &         \tiny{(.010)} &              \tiny{(.005)} &    \tiny{(.017)} &       \tiny{(.030)} &         \tiny{(.014)} &         \tiny{(.012)} &              \tiny{(.007)} &    \tiny{(.017)} &       \tiny{(.015)} &         \tiny{(.006)} &         \tiny{(.013)} &              \tiny{(.004)} \\
LUAD &  \textbf{.661} &       .528 &         .591 &           .534 &              .546 &  \textbf{.656} &       .554 &         .589 &           .473 &              .583 &  \textbf{.656} &       .506 &         .575 &           .510 &              .475 \\
&    \tiny{(.014)} &       \tiny{(.076)} &         \tiny{(.006)} &         \tiny{(.022)} &              \tiny{(.010)} &    \tiny{(.010)} &       \tiny{(.031)} &         \tiny{(.007)} &         \tiny{(.035)} &              \tiny{(.007)} &    \tiny{(.010)} &       \tiny{(.055)} &         \tiny{(.004)} &         \tiny{(.030)} &              \tiny{(.008)} \\
LUSC &  \textbf{.662} &       .530 &         .538 &           .474 &              .508 &  \textbf{.666} &       .506 &         .544 &           .473 &              .482 &  \textbf{.658} &       .529 &         .559 &           .522 &              .487 \\
&    \tiny{(.021)} &       \tiny{(.019)} &         \tiny{(.009)} &         \tiny{(.021)} &              \tiny{(.005)} &    \tiny{(.022)} &       \tiny{(.035)} &         \tiny{(.010)} &         \tiny{(.007)} &              \tiny{(.006)} &    \tiny{(.024)} &       \tiny{(.046)} &         \tiny{(.009)} &         \tiny{(.025)} &              \tiny{(.004)} \\
BRCA &  \textbf{.600} &       .488 &         .538 &           .494 &              .404 &  \textbf{.599} &       .474 &         .551 &           .484 &              .454 &  \textbf{.599} &       .482 &         .526 &           .452 &              .429 \\
&    \tiny{(.032)} &       \tiny{(.029)} &         \tiny{(.014)} &         \tiny{(.016)} &              \tiny{(.006)} &    \tiny{(.033)} &       \tiny{(.024)} &         \tiny{(.008)} &         \tiny{(.025)} &              \tiny{(.008)} &    \tiny{(.032)} &       \tiny{(.036)} &         \tiny{(.009)} &         \tiny{(.016)} &              \tiny{(.017)} \\
HNSC &  \textbf{.646} &       .484 &         .441 &           .518 &              .448 &  \textbf{.646} &       .558 &         .438 &           .534 &              .440 &  \textbf{.646} &       .529 &         .446 &           .506 &              .446 \\
&    \tiny{(.017)} &       \tiny{(.026)} &         \tiny{(.007)} &         \tiny{(.009)} &              \tiny{(.003)} &    \tiny{(.017)} &       \tiny{(.029)} &         \tiny{(.006)} &         \tiny{(.014)} &              \tiny{(.006)} &    \tiny{(.017)} &       \tiny{(.030)} &         \tiny{(.006)} &         \tiny{(.011)} &              \tiny{(.006)} \\
LAML &           .534 &       .503 &         .482 &           .505 &     \textbf{.555} &  \textbf{.533} &       .513 &         .472 &           .492 &              .528 &           .533 &       .516 &         .490 &           .477 &     \textbf{.571} \\
&    \tiny{(.024)} &       \tiny{(.049)} &         \tiny{(.004)} &         \tiny{(.017)} &              \tiny{(.009)} &    \tiny{(.024)} &       \tiny{(.035)} &         \tiny{(.012)} &         \tiny{(.010)} &              \tiny{(.008)} &    \tiny{(.024)} &       \tiny{(.052)} &         \tiny{(.008)} &         \tiny{(.013)} &              \tiny{(.006)} \\
\bottomrule
P-value&
&\scriptsize{1e-2}&\scriptsize{9e-3}&\scriptsize{9e-3}&\scriptsize{5e-3}&
&\scriptsize{2e-2}&\scriptsize{5e-3}&\scriptsize{9e-3}&\scriptsize{1e-2}&
&\scriptsize{2e-2}&\scriptsize{6e-3}&\scriptsize{6e-3}&\scriptsize{9e-3}\\
\bottomrule
Rank& \textbf{1.38}& 3.50& 3.38& 3.25& 3.50& \textbf{1.25}& 3.50& 3.25& 3.63& 3.38&  \textbf{1.63}& 3.00& 3.19& 3.38& 3.81 \\
\bottomrule
\end{tabular}
\caption{The performance comparison on the mRNA data in terms of $\text{C-index}$. The following settings were used: 15\%, 20\%, and 25\% supervision. The numbers in brackets depict the standard error. The last row shows the rank in each supervision group.
The p-value row depicts the p-value for the upper-tailed Wilcoxon signed-ranks test between each method and MDSSA.}\label{tab:Supp-mRNA-C-index-rem-2-RSF}
\end{table*}

\begin{table*}
\centering
\begin{tabular}{|lp{0.035\textwidth}p{0.035\textwidth}p{0.035\textwidth}p{0.035\textwidth}|p{0.035\textwidth}p{0.035\textwidth}p{0.035\textwidth}p{0.035\textwidth}|p{0.035\textwidth}p{0.035\textwidth}p{0.035\textwidth}p{0.035\textwidth}|}

\toprule
superv.&\multicolumn{4}{c|}{.00\%} & \multicolumn{4}{c|}{5\%} & \multicolumn{4}{c|}{10\%}  \\
method& \rot{MSSDA} & \rot{Cox-AO} & \rot{RSF-AO} & \rot{DeepSurv-AO}  & \rot{MSSDA} & \rot{Cox-AO} & \rot{RSF-AO} & \rot{DeepSurv-AO} & \rot{MSSDA} & \rot{Cox-AO} & \rot{RSF-AO} & \rot{DeepSurv-AO}  \\
\midrule

KIRC &  \textbf{.604} &         .435 &           .480 &              .467 &  \textbf{.774} &         .450 &         .485 &              .474 &  \textbf{.774} &         .506 &         .487 &              .463 \\
&    \tiny{(.028)} &         \tiny{(.008)} &         \tiny{(.002)} &              \tiny{(.005)} &    \tiny{(.010)} &         \tiny{(.008)} &         \tiny{(.001)} &              \tiny{(.006)} &    \tiny{(.010)} &         \tiny{(.006)} &         \tiny{(.003)} &              \tiny{(.006)} \\
OV   &           .539 &         .537 &           .497 &     \textbf{.546} &  \textbf{.809} &         .526 &         .496 &              .539 &  \textbf{.809} &         .522 &         .495 &              .537 \\
&    \tiny{(.023)} &         \tiny{(.008)} &         \tiny{(.002)} &              \tiny{(.004)} &    \tiny{(.009)} &         \tiny{(.009)} &         \tiny{(.001)} &              \tiny{(.004)} &    \tiny{(.009)} &         \tiny{(.005)} &         \tiny{(.002)} &              \tiny{(.004)} \\
GBM  &  \textbf{.516} &         .428 &           .436 &              .515 &  \textbf{.784} &         .448 &         .435 &              .518 &  \textbf{.784} &         .455 &         .435 &              .505 \\
&    \tiny{(.016)} &         \tiny{(.004)} &         \tiny{(.002)} &              \tiny{(.004)} &    \tiny{(.008)} &         \tiny{(.008)} &         \tiny{(.001)} &              \tiny{(.005)} &    \tiny{(.008)} &         \tiny{(.004)} &         \tiny{(.003)} &              \tiny{(.006)} \\
LUAD &  \textbf{.646} &         .536 &           .466 &              .523 &  \textbf{.770} &         .577 &         .447 &              .513 &  \textbf{.766} &         .599 &         .470 &              .514 \\
&    \tiny{(.018)} &         \tiny{(.020)} &         \tiny{(.008)} &              \tiny{(.004)} &    \tiny{(.008)} &         \tiny{(.018)} &         \tiny{(.007)} &              \tiny{(.011)} &    \tiny{(.005)} &         \tiny{(.007)} &         \tiny{(.002)} &              \tiny{(.005)} \\
LUSC &  \textbf{.659} &         .471 &           .515 &              .523 &  \textbf{.805} &         .555 &         .511 &              .514 &  \textbf{.808} &         .550 &         .517 &              .511 \\
&    \tiny{(.013)} &         \tiny{(.012)} &         \tiny{(.005)} &              \tiny{(.004)} &    \tiny{(.012)} &         \tiny{(.006)} &         \tiny{(.010)} &              \tiny{(.005)} &    \tiny{(.012)} &         \tiny{(.018)} &         \tiny{(.010)} &              \tiny{(.009)} \\
BRCA &           .597 &         .474 &  \textbf{.602} &              .453 &  \textbf{.671} &         .499 &         .606 &              .439 &  \textbf{.671} &         .535 &         .600 &              .433 \\
&    \tiny{(.029)} &         \tiny{(.009)} &         \tiny{(.004)} &              \tiny{(.010)} &    \tiny{(.026)} &         \tiny{(.013)} &         \tiny{(.003)} &              \tiny{(.009)} &    \tiny{(.026)} &         \tiny{(.006)} &         \tiny{(.004)} &              \tiny{(.011)} \\
HNSC &  \textbf{.617} &         .513 &           .514 &              .465 &  \textbf{.784} &         .449 &         .517 &              .447 &  \textbf{.784} &         .457 &         .515 &              .450 \\
&    \tiny{(.022)} &         \tiny{(.013)} &         \tiny{(.003)} &              \tiny{(.004)} &    \tiny{(.009)} &         \tiny{(.006)} &         \tiny{(.005)} &              \tiny{(.006)} &    \tiny{(.009)} &         \tiny{(.017)} &         \tiny{(.004)} &              \tiny{(.002)} \\
LAML &           .476 &         .456 &           .486 &     \textbf{.559} &  \textbf{.767} &         .456 &         .485 &              .556 &  \textbf{.767} &         .481 &         .484 &              .554 \\
&    \tiny{(.033)} &         \tiny{(.010)} &         \tiny{(.002)} &              \tiny{(.009)} &    \tiny{(.010)} &         \tiny{(.003)} &         \tiny{(.002)} &              \tiny{(.007)} &    \tiny{(.010)} &         \tiny{(.003)} &         \tiny{(.003)} &              \tiny{(.002)} \\
\bottomrule
P-value&
&\scriptsize{2e-3}&\scriptsize{8e-3}&\scriptsize{1e-2}&
&\scriptsize{4e-4}&\scriptsize{4e-4}&\scriptsize{4e-4}&
&\scriptsize{4e-4}&\scriptsize{4e-4}&\scriptsize{4e-4}\\
\bottomrule
Rank& \textbf{1.50}& 3.36& 2.62& 2.50& \textbf{1.00}& 3.00& 3.12& 2.88 & \textbf{1.00}& 2.75& 3.13& 3.13 \\
\bottomrule
\end{tabular}
\caption{The performance comparison on the mRNA data in terms of $\text{C-index}^\prime$. The following settings were used: no supervision, 5\%, and 10\% supervision. The numbers in brackets depict the standard error. The last row shows the rank in each supervision group.
The p-value row depicts the p-value for the upper-tailed Wilcoxon signed-ranks test between each method and MDSSA.}\label{tab:Supp-mRNA-C-index-1-RSF}
\end{table*}
\begin{table*}
\centering
\begin{tabular}{|lp{0.035\textwidth}p{0.035\textwidth}p{0.035\textwidth}p{0.035\textwidth}p{0.035\textwidth}|p{0.035\textwidth}p{0.035\textwidth}p{0.035\textwidth}p{0.035\textwidth}p{0.035\textwidth}|p{0.035\textwidth}p{0.035\textwidth}p{0.035\textwidth}p{0.035\textwidth}p{0.035\textwidth}|}

\toprule
superv.&\multicolumn{5}{c|}{15\%} & \multicolumn{5}{c|}{20\%} & \multicolumn{5}{c|}{25\%} \\
method& \rot{MSSDA} & \rot{TransferCox} & \rot{Cox-AO} & \rot{RSF-AO} & \rot{DeepSurv-AO} &\rot{MSSDA} & \rot{TransferCox} & \rot{Cox-AO} & \rot{RSF-AO} & \rot{DeepSurv-AO} &\rot{MSSDA} & \rot{TransferCox} & \rot{Cox-AO} & \rot{RSF-AO} & \rot{DeepSurv-AO}  \\
\midrule

KIRC &  \textbf{.774} &       .575 &         .503 &         .483 &              .458 &  \textbf{.774} &       .602 &         .519 &         .486 &              .463 &  \textbf{.774} &           .656 &         .524 &         .495 &              .462 \\
&    \tiny{(.010)} &       \tiny{(.014)} &         \tiny{(.005)} &         \tiny{(.002)} &              \tiny{(.003)} &    \tiny{(.010)} &       \tiny{(.046)} &         \tiny{(.005)} &         \tiny{(.002)} &              \tiny{(.004)} &    \tiny{(.010)} &       \tiny{(.031)} &         \tiny{(.007)} &         \tiny{(.003)} &              \tiny{(.008)} \\
OV   &  \textbf{.809} &       .555 &         .529 &         .496 &              .540 &  \textbf{.809} &       .611 &         .533 &         .495 &              .548 &  \textbf{.809} &           .634 &         .535 &         .494 &              .540 \\
&    \tiny{(.009)} &       \tiny{(.026)} &         \tiny{(.012)} &         \tiny{(.002)} &              \tiny{(.004)} &    \tiny{(.009)} &       \tiny{(.015)} &         \tiny{(.006)} &         \tiny{(.002)} &              \tiny{(.003)} &    \tiny{(.009)} &       \tiny{(.026)} &         \tiny{(.007)} &         \tiny{(.002)} &              \tiny{(.004)} \\
GBM  &  \textbf{.784} &       .575 &         .489 &         .438 &              .519 &  \textbf{.784} &       .584 &         .467 &         .438 &              .521 &  \textbf{.784} &           .636 &         .496 &         .438 &              .521 \\
&    \tiny{(.008)} &       \tiny{(.027)} &         \tiny{(.008)} &         \tiny{(.002)} &              \tiny{(.004)} &    \tiny{(.008)} &       \tiny{(.026)} &         \tiny{(.010)} &         \tiny{(.005)} &              \tiny{(.006)} &    \tiny{(.008)} &       \tiny{(.008)} &         \tiny{(.005)} &         \tiny{(.002)} &              \tiny{(.004)} \\
LUAD &  \textbf{.770} &       .606 &         .609 &         .479 &              .518 &  \textbf{.766} &       .643 &         .611 &         .498 &              .524 &  \textbf{.766} &           .659 &         .597 &         .485 &              .523 \\
&    \tiny{(.008)} &       \tiny{(.072)} &         \tiny{(.004)} &         \tiny{(.012)} &              \tiny{(.008)} &    \tiny{(.005)} &       \tiny{(.031)} &         \tiny{(.010)} &         \tiny{(.020)} &              \tiny{(.008)} &    \tiny{(.005)} &       \tiny{(.022)} &         \tiny{(.008)} &         \tiny{(.022)} &              \tiny{(.004)} \\
LUSC &  \textbf{.806} &       .590 &         .568 &         .491 &              .515 &  \textbf{.808} &       .602 &         .574 &         .538 &              .519 &  \textbf{.805} &           .661 &         .564 &         .520 &              .525 \\
&    \tiny{(.011)} &       \tiny{(.021)} &         \tiny{(.009)} &         \tiny{(.012)} &              \tiny{(.005)} &    \tiny{(.012)} &       \tiny{(.028)} &         \tiny{(.006)} &         \tiny{(.004)} &              \tiny{(.006)} &    \tiny{(.012)} &       \tiny{(.042)} &         \tiny{(.010)} &         \tiny{(.010)} &              \tiny{(.005)} \\
BRCA &  \textbf{.671} &       .613 &         .563 &         .599 &              .420 &  \textbf{.671} &       .643 &         .567 &         .606 &              .433 &           .671 &  \textbf{.694} &         .548 &         .599 &              .429 \\
&    \tiny{(.026)} &       \tiny{(.029)} &         \tiny{(.012)} &         \tiny{(.005)} &              \tiny{(.004)} &    \tiny{(.026)} &       \tiny{(.020)} &         \tiny{(.011)} &         \tiny{(.005)} &              \tiny{(.009)} &    \tiny{(.026)} &       \tiny{(.020)} &         \tiny{(.004)} &         \tiny{(.004)} &              \tiny{(.010)} \\
HNSC &  \textbf{.784} &       .569 &         .464 &         .511 &              .442 &  \textbf{.784} &       .651 &         .457 &         .518 &              .442 &  \textbf{.784} &           .651 &         .459 &         .523 &              .449 \\
&    \tiny{(.009)} &       \tiny{(.018)} &         \tiny{(.012)} &         \tiny{(.011)} &              \tiny{(.004)} &    \tiny{(.009)} &       \tiny{(.018)} &         \tiny{(.008)} &         \tiny{(.008)} &              \tiny{(.007)} &    \tiny{(.009)} &       \tiny{(.016)} &         \tiny{(.006)} &         \tiny{(.007)} &              \tiny{(.005)} \\
LAML &  \textbf{.767} &       .580 &         .478 &         .480 &              .566 &  \textbf{.767} &       .599 &         .475 &         .483 &              .567 &  \textbf{.767} &           .630 &         .485 &         .486 &              .547 \\
&    \tiny{(.010)} &       \tiny{(.042)} &         \tiny{(.002)} &         \tiny{(.004)} &              \tiny{(.009)} &    \tiny{(.010)} &       \tiny{(.031)} &         \tiny{(.008)} &         \tiny{(.003)} &              \tiny{(.007)} &    \tiny{(.010)} &       \tiny{(.045)} &         \tiny{(.006)} &         \tiny{(.002)} &              \tiny{(.005)} \\
\bottomrule
P-value&
&\scriptsize{4e-4}&\scriptsize{4e-4}&\scriptsize{4e-4}&\scriptsize{4e-4}&
&\scriptsize{4e-4}&\scriptsize{4e-4}&\scriptsize{4e-4}&\scriptsize{4e-4}&
&\scriptsize{4e-4}&\scriptsize{4e-4}&\scriptsize{4e-4}&\scriptsize{6e-4}\\
\bottomrule
Rank& \textbf{1.00}& 2.13& 3.63& 4.25& 4.00& \textbf{1.00}& 2.00& 3.75& 4.12& 4.12& \textbf{1.12}& 1.88& 3.75& 4.25& 4.00 \\
\bottomrule
\end{tabular}
\caption{The performance comparison on the mRNA data in terms of $\text{C-index}^\prime$. The following settings were used: 15\%, 20\%, and 25\% supervision. The numbers in brackets depict the standard error. The last row shows the rank in each supervision group.
The p-value row depicts the p-value for the upper-tailed Wilcoxon signed-ranks test between each method and MDSSA.}\label{tab:Supp-mRNA-C-index-2-RSF}
\end{table*}
\begin{table*}
\small
\centering
\begin{tabular}{|lp{.035\textwidth}p{.035\textwidth}p{.035\textwidth}p{.035\textwidth}|p{.035\textwidth}p{.035\textwidth}p{.035\textwidth}p{.035\textwidth}|p{.035\textwidth}p{.035\textwidth}p{.035\textwidth}p{.035\textwidth}|}
\toprule
superv.&\multicolumn{4}{c|}{.00\%} & \multicolumn{4}{c|}{5\%} & \multicolumn{4}{c|}{10\%}  \\
method& \rot{MSSDA} & \rot{Cox-AO} & \rot{RSF-AO} & \rot{DeepSurv-AO}  & \rot{MSSDA} & \rot{Cox-AO} & \rot{RSF-AO} & \rot{DeepSurv-AO} & \rot{MSSDA} & \rot{Cox-AO} & \rot{RSF-AO} & \rot{DeepSurv-AO}  \\
\midrule
ACC  &  \textbf{.784} &           .694 &         .715 &              .731 &  \textbf{.754} &           .742 &         .744 &              .745 &  \textbf{.792} &           .732 &           .774 &              .738 \\
&    \tiny{(.008)} &         \tiny{(.008)} &         \tiny{(.012)} &              \tiny{(.012)} &    \tiny{(.008)} &         \tiny{(.017)} &         \tiny{(.005)} &              \tiny{(.022)} &    \tiny{(.011)} &         \tiny{(.021)} &         \tiny{(.013)} &              \tiny{(.014)} \\
BLCA &           .538 &  \textbf{.559} &         .535 &              .546 &           .544 &  \textbf{.555} &         .529 &              .549 &           .543 &  \textbf{.552} &           .534 &              .549 \\
&    \tiny{(.007)} &         \tiny{(.001)} &         \tiny{(.003)} &              \tiny{(.005)} &    \tiny{(.005)} &         \tiny{(.001)} &         \tiny{(.005)} &              \tiny{(.003)} &    \tiny{(.002)} &         \tiny{(.008)} &         \tiny{(.016)} &              \tiny{(.007)} \\
BRCA &  \textbf{.614} &           .594 &         .545 &              .555 &  \textbf{.621} &           .574 &         .539 &              .575 &  \textbf{.609} &           .566 &           .575 &              .562 \\
&    \tiny{(.018)} &         \tiny{(.001)} &         \tiny{(.013)} &              \tiny{(.006)} &    \tiny{(.017)} &         \tiny{(.008)} &         \tiny{(.019)} &              \tiny{(.011)} &    \tiny{(.014)} &         \tiny{(.013)} &         \tiny{(.013)} &              \tiny{(.006)} \\
CESC &  \textbf{.671} &           .608 &         .556 &              .601 &  \textbf{.677} &           .601 &         .590 &              .590 &  \textbf{.666} &           .621 &           .595 &              .589 \\
&    \tiny{(.005)} &         \tiny{(.001)} &         \tiny{(.014)} &              \tiny{(.010)} &    \tiny{(.008)} &         \tiny{(.005)} &         \tiny{(.020)} &              \tiny{(.013)} &    \tiny{(.005)} &         \tiny{(.008)} &         \tiny{(.013)} &              \tiny{(.005)} \\
CHOL &  \textbf{.639} &           .549 &         .461 &              .564 &  \textbf{.620} &           .566 &         .457 &              .523 &  \textbf{.625} &           .543 &           .500 &              .523 \\
&    \tiny{(.011)} &         \tiny{(.004)} &         \tiny{(.029)} &              \tiny{(.022)} &    \tiny{(.013)} &         \tiny{(.009)} &         \tiny{(.026)} &              \tiny{(.017)} &    \tiny{(.010)} &         \tiny{(.004)} &         \tiny{(.021)} &              \tiny{(.008)} \\
ESCA &  \textbf{.600} &           .552 &         .567 &              .559 &  \textbf{.591} &           .554 &         .545 &              .550 &  \textbf{.593} &           .575 &           .563 &              .559 \\
&    \tiny{(.008)} &         \tiny{(.003)} &         \tiny{(.019)} &              \tiny{(.014)} &    \tiny{(.011)} &         \tiny{(.005)} &         \tiny{(.010)} &              \tiny{(.009)} &    \tiny{(.007)} &         \tiny{(.018)} &         \tiny{(.009)} &              \tiny{(.011)} \\
HNSC &  \textbf{.599} &           .553 &         .518 &              .579 &  \textbf{.598} &           .555 &         .542 &              .576 &  \textbf{.596} &           .556 &           .553 &              .570 \\
&    \tiny{(.004)} &         \tiny{(.002)} &         \tiny{(.010)} &              \tiny{(.001)} &    \tiny{(.004)} &         \tiny{(.012)} &         \tiny{(.012)} &              \tiny{(.007)} &    \tiny{(.003)} &         \tiny{(.007)} &         \tiny{(.010)} &              \tiny{(.005)} \\
KIRC &  \textbf{.606} &           .551 &         .595 &              .569 &  \textbf{.608} &           .542 &         .576 &              .554 &  \textbf{.624} &           .566 &           .571 &              .581 \\
&    \tiny{(.008)} &         \tiny{(.001)} &         \tiny{(.018)} &              \tiny{(.012)} &    \tiny{(.010)} &         \tiny{(.007)} &         \tiny{(.008)} &              \tiny{(.014)} &    \tiny{(.011)} &         \tiny{(.018)} &         \tiny{(.005)} &              \tiny{(.007)} \\
KIRP &  \textbf{.782} &           .589 &         .715 &              .690 &  \textbf{.774} &           .659 &         .736 &              .724 &  \textbf{.776} &           .671 &           .748 &              .730 \\
&    \tiny{(.004)} &         \tiny{(.008)} &         \tiny{(.012)} &              \tiny{(.010)} &    \tiny{(.011)} &         \tiny{(.028)} &         \tiny{(.010)} &              \tiny{(.015)} &    \tiny{(.006)} &         \tiny{(.031)} &         \tiny{(.019)} &              \tiny{(.016)} \\
LGG  &  \textbf{.635} &           .515 &         .605 &              .562 &  \textbf{.649} &           .549 &         .642 &              .560 &  \textbf{.670} &           .594 &  \textbf{.670} &              .539 \\
&    \tiny{(.011)} &         \tiny{(.004)} &         \tiny{(.008)} &              \tiny{(.024)} &    \tiny{(.004)} &         \tiny{(.023)} &         \tiny{(.004)} &              \tiny{(.007)} &    \tiny{(.014)} &         \tiny{(.022)} &         \tiny{(.006)} &              \tiny{(.017)} \\
LIHC &           .595 &           .579 &         .587 &     \textbf{.598} &  \textbf{.603} &           .574 &         .585 &              .594 &           .608 &           .610 &           .593 &     \textbf{.613} \\
&    \tiny{(.003)} &         \tiny{(.006)} &         \tiny{(.002)} &              \tiny{(.007)} &    \tiny{(.008)} &         \tiny{(.014)} &         \tiny{(.003)} &              \tiny{(.002)} &    \tiny{(.009)} &         \tiny{(.013)} &         \tiny{(.011)} &              \tiny{(.012)} \\
LUAD &  \textbf{.604} &           .568 &         .570 &              .587 &           .595 &           .570 &         .561 &     \textbf{.602} &  \textbf{.594} &           .559 &           .567 &              .577 \\
&    \tiny{(.004)} &         \tiny{(.002)} &         \tiny{(.008)} &              \tiny{(.006)} &    \tiny{(.006)} &         \tiny{(.005)} &         \tiny{(.008)} &              \tiny{(.003)} &    \tiny{(.005)} &         \tiny{(.006)} &         \tiny{(.008)} &              \tiny{(.006)} \\
LUSC &  \textbf{.569} &           .520 &         .496 &              .530 &  \textbf{.568} &           .515 &         .514 &              .522 &  \textbf{.574} &           .526 &           .515 &              .527 \\
&    \tiny{(.003)} &         \tiny{(.001)} &         \tiny{(.005)} &              \tiny{(.006)} &    \tiny{(.005)} &         \tiny{(.008)} &         \tiny{(.009)} &              \tiny{(.006)} &    \tiny{(.005)} &         \tiny{(.004)} &         \tiny{(.007)} &              \tiny{(.006)} \\
MESO &           .621 &           .678 &         .592 &     \textbf{.681} &           .609 &           .664 &         .614 &     \textbf{.669} &           .615 &           .668 &           .625 &     \textbf{.681} \\
&    \tiny{(.002)} &         \tiny{(.004)} &         \tiny{(.020)} &              \tiny{(.007)} &    \tiny{(.011)} &         \tiny{(.007)} &         \tiny{(.008)} &              \tiny{(.008)} &    \tiny{(.013)} &         \tiny{(.007)} &         \tiny{(.011)} &              \tiny{(.008)} \\
PAAD &  \textbf{.582} &           .552 &         .504 &              .552 &  \textbf{.580} &           .560 &         .526 &              .541 &  \textbf{.591} &           .583 &           .536 &              .521 \\
&    \tiny{(.004)} &         \tiny{(.002)} &         \tiny{(.010)} &              \tiny{(.004)} &    \tiny{(.007)} &         \tiny{(.006)} &         \tiny{(.013)} &              \tiny{(.010)} &    \tiny{(.010)} &         \tiny{(.011)} &         \tiny{(.015)} &              \tiny{(.013)} \\
SARC &           .601 &           .564 &         .591 &     \textbf{.617} &  \textbf{.608} &           .567 &         .592 &              .586 &  \textbf{.618} &           .581 &           .609 &              .602 \\
&    \tiny{(.006)} &         \tiny{(.004)} &         \tiny{(.005)} &              \tiny{(.007)} &    \tiny{(.005)} &         \tiny{(.010)} &         \tiny{(.008)} &              \tiny{(.010)} &    \tiny{(.012)} &         \tiny{(.007)} &         \tiny{(.009)} &              \tiny{(.010)} \\
SKCM &  \textbf{.663} &           .540 &         .498 &              .629 &  \textbf{.686} &           .522 &         .536 &              .608 &  \textbf{.676} &           .533 &           .528 &              .638 \\
&    \tiny{(.011)} &         \tiny{(.004)} &         \tiny{(.016)} &              \tiny{(.022)} &    \tiny{(.017)} &         \tiny{(.013)} &         \tiny{(.014)} &              \tiny{(.023)} &    \tiny{(.011)} &         \tiny{(.016)} &         \tiny{(.019)} &              \tiny{(.033)} \\
STAD &  \textbf{.539} &           .468 &         .487 &              .507 &  \textbf{.545} &           .469 &         .506 &              .532 &  \textbf{.545} &           .471 &           .511 &              .510 \\
&    \tiny{(.006)} &         \tiny{(.001)} &         \tiny{(.004)} &              \tiny{(.009)} &    \tiny{(.006)} &         \tiny{(.005)} &         \tiny{(.011)} &              \tiny{(.007)} &    \tiny{(.005)} &         \tiny{(.005)} &         \tiny{(.014)} &              \tiny{(.008)} \\
UCEC &  \textbf{.657} &           .500 &         .558 &              .574 &  \textbf{.635} &           .516 &         .595 &              .581 &  \textbf{.636} &           .507 &           .611 &              .557 \\
&    \tiny{(.007)} &         \tiny{(.003)} &         \tiny{(.011)} &              \tiny{(.008)} &    \tiny{(.010)} &         \tiny{(.012)} &         \tiny{(.011)} &              \tiny{(.014)} &    \tiny{(.010)} &         \tiny{(.018)} &         \tiny{(.014)} &              \tiny{(.017)} \\
UCS  &  \textbf{.524} &           .511 &         .471 &              .443 &  \textbf{.513} &           .510 &         .463 &              .414 &  \textbf{.502} &           .492 &           .500 &              .429 \\
&    \tiny{(.007)} &         \tiny{(.001)} &         \tiny{(.012)} &              \tiny{(.007)} &    \tiny{(.010)} &         \tiny{(.008)} &         \tiny{(.010)} &              \tiny{(.016)} &    \tiny{(.008)} &         \tiny{(.003)} &         \tiny{(.015)} &              \tiny{(.004)} \\
UVM  &  \textbf{.696} &           .457 &         .498 &              .535 &  \textbf{.656} &           .541 &         .605 &              .509 &  \textbf{.666} &           .526 &           .662 &              .449 \\
&    \tiny{(.010)} &         \tiny{(.007)} &         \tiny{(.013)} &              \tiny{(.020)} &    \tiny{(.017)} &         \tiny{(.022)} &         \tiny{(.036)} &              \tiny{(.035)} &    \tiny{(.019)} &         \tiny{(.026)} &         \tiny{(.014)} &              \tiny{(.030)} \\
\bottomrule
P-value&
&\scriptsize{9e-4}&\scriptsize{7e-4}&\scriptsize{2e-2}&
&\scriptsize{2e-3}&\scriptsize{6e-3}&\scriptsize{1e-2}&
&\scriptsize{9e-3}&\scriptsize{3e-2}&\scriptsize{1e-2}\\
\bottomrule
Rank& \textbf{1.29}& 3.21& 3.24& 2.26& \textbf{1.29}& 3.05& 3.12& 2.55& \textbf{1.36}& 2.95& 2.83& 2.86\\
\bottomrule
\end{tabular}
\caption{The performance comparison on the miRNA data in terms of $\text{C-index}$. The following settings were used: no supervision, 5\%, and 10\% supervision. The numbers in brackets depict the standard error. The last row shows the rank in each supervision group.
The p-value row depicts the p-value for the upper-tailed Wilcoxon signed-ranks test between each method and MDSSA.}\label{tab:Supp-miRNA-C-index-rem-1-RSF}
\end{table*}
\begin{table*}
\small
\centering
\begin{tabular}{|lp{.035\textwidth}p{.035\textwidth}p{.035\textwidth}p{.035\textwidth}p{.035\textwidth}|p{.035\textwidth}p{.035\textwidth}p{.035\textwidth}p{.035\textwidth}p{.035\textwidth}|p{.035\textwidth}p{.035\textwidth}p{.035\textwidth}p{.035\textwidth}p{.035\textwidth}|}
\toprule
superv.&\multicolumn{5}{c|}{15\%} & \multicolumn{5}{c|}{20\%} & \multicolumn{5}{c|}{25\%} \\
method& \rot{MSSDA} & \rot{TransferCox} & \rot{Cox-AO} & \rot{RSF-AO} & \rot{DeepSurv-AO} &\rot{MSSDA} & \rot{TransferCox} & \rot{Cox-AO} & \rot{RSF-AO} & \rot{DeepSurv-AO} &\rot{MSSDA} & \rot{TransferCox} & \rot{Cox-AO} & \rot{RSF-AO} & \rot{DeepSurv-AO}  \\
\midrule
ACC  &           .761 &       .611 &           .738 &           .764 &     \textbf{.778} &           .776 &           .584 &           .755 &  \textbf{.779} &              .763 &  \textbf{.774} &           .731 &           .758 &           .757 &              .757 \\
&    \tiny{(.005)} &       \tiny{(.077)} &         \tiny{(.018)} &         \tiny{(.012)} &              \tiny{(.013)} &    \tiny{(.010)} &       \tiny{(.066)} &         \tiny{(.009)} &         \tiny{(.005)} &              \tiny{(.015)} &    \tiny{(.006)} &       \tiny{(.043)} &         \tiny{(.017)} &         \tiny{(.003)} &              \tiny{(.008)} \\
BLCA &           .551 &       .545 &           .551 &           .555 &     \textbf{.557} &           .550 &           .506 &  \textbf{.556} &           .530 &              .555 &           .546 &           .541 &           .534 &           .521 &     \textbf{.547} \\
&    \tiny{(.009)} &       \tiny{(.026)} &         \tiny{(.004)} &         \tiny{(.004)} &              \tiny{(.007)} &    \tiny{(.005)} &       \tiny{(.023)} &         \tiny{(.007)} &         \tiny{(.008)} &              \tiny{(.006)} &    \tiny{(.006)} &       \tiny{(.018)} &         \tiny{(.008)} &         \tiny{(.008)} &              \tiny{(.013)} \\
BRCA &  \textbf{.630} &       .553 &           .556 &           .581 &              .581 &  \textbf{.642} &           .529 &           .555 &           .585 &              .550 &  \textbf{.614} &           .552 &           .546 &           .563 &              .534 \\
&    \tiny{(.010)} &       \tiny{(.023)} &         \tiny{(.007)} &         \tiny{(.006)} &              \tiny{(.011)} &    \tiny{(.017)} &       \tiny{(.023)} &         \tiny{(.006)} &         \tiny{(.011)} &              \tiny{(.009)} &    \tiny{(.010)} &       \tiny{(.022)} &         \tiny{(.023)} &         \tiny{(.012)} &              \tiny{(.006)} \\
CESC &  \textbf{.690} &       .553 &           .609 &           .600 &              .601 &  \textbf{.674} &           .542 &           .620 &           .615 &              .584 &  \textbf{.672} &           .554 &           .629 &           .612 &              .596 \\
&    \tiny{(.007)} &       \tiny{(.028)} &         \tiny{(.005)} &         \tiny{(.018)} &              \tiny{(.007)} &    \tiny{(.007)} &       \tiny{(.029)} &         \tiny{(.010)} &         \tiny{(.006)} &              \tiny{(.007)} &    \tiny{(.008)} &       \tiny{(.065)} &         \tiny{(.010)} &         \tiny{(.014)} &              \tiny{(.016)} \\
CHOL &           .583 &       .472 &           .577 &           .501 &     \textbf{.594} &  \textbf{.612} &           .459 &           .582 &           .478 &              .532 &  \textbf{.578} &           .508 &           .571 &           .499 &              .534 \\
&    \tiny{(.012)} &       \tiny{(.070)} &         \tiny{(.011)} &         \tiny{(.018)} &              \tiny{(.034)} &    \tiny{(.033)} &       \tiny{(.111)} &         \tiny{(.012)} &         \tiny{(.028)} &              \tiny{(.013)} &    \tiny{(.026)} &       \tiny{(.045)} &         \tiny{(.016)} &         \tiny{(.017)} &              \tiny{(.012)} \\
ESCA &  \textbf{.580} &       .533 &           .570 &           .556 &              .543 &  \textbf{.582} &           .525 &           .556 &           .554 &              .556 &  \textbf{.584} &           .499 &           .556 &           .512 &              .565 \\
&    \tiny{(.008)} &       \tiny{(.040)} &         \tiny{(.010)} &         \tiny{(.013)} &              \tiny{(.008)} &    \tiny{(.007)} &       \tiny{(.037)} &         \tiny{(.007)} &         \tiny{(.015)} &              \tiny{(.007)} &    \tiny{(.015)} &       \tiny{(.038)} &         \tiny{(.021)} &         \tiny{(.012)} &              \tiny{(.007)} \\
HNSC &  \textbf{.597} &       .530 &           .566 &           .544 &              .566 &  \textbf{.592} &           .514 &           .565 &           .566 &              .564 &  \textbf{.590} &           .521 &           .567 &           .564 &              .568 \\
&    \tiny{(.006)} &       \tiny{(.025)} &         \tiny{(.006)} &         \tiny{(.008)} &              \tiny{(.008)} &    \tiny{(.007)} &       \tiny{(.019)} &         \tiny{(.008)} &         \tiny{(.005)} &              \tiny{(.004)} &    \tiny{(.006)} &       \tiny{(.017)} &         \tiny{(.006)} &         \tiny{(.011)} &              \tiny{(.007)} \\
KIRC &  \textbf{.603} &       .522 &           .564 &           .589 &              .577 &  \textbf{.612} &           .536 &           .551 &           .590 &              .557 &  \textbf{.619} &           .535 &           .570 &           .571 &              .579 \\
&    \tiny{(.015)} &       \tiny{(.017)} &         \tiny{(.022)} &         \tiny{(.011)} &              \tiny{(.014)} &    \tiny{(.007)} &       \tiny{(.046)} &         \tiny{(.017)} &         \tiny{(.014)} &              \tiny{(.011)} &    \tiny{(.010)} &       \tiny{(.049)} &         \tiny{(.012)} &         \tiny{(.008)} &              \tiny{(.015)} \\
KIRP &  \textbf{.768} &       .589 &           .663 &           .766 &              .699 &  \textbf{.778} &           .630 &           .706 &           .753 &              .735 &  \textbf{.793} &           .702 &           .724 &           .760 &              .693 \\
&    \tiny{(.013)} &       \tiny{(.108)} &         \tiny{(.020)} &         \tiny{(.006)} &              \tiny{(.013)} &    \tiny{(.013)} &       \tiny{(.056)} &         \tiny{(.015)} &         \tiny{(.010)} &              \tiny{(.014)} &    \tiny{(.009)} &       \tiny{(.138)} &         \tiny{(.005)} &         \tiny{(.009)} &              \tiny{(.015)} \\
LGG  &           .678 &       .704 &           .654 &  \textbf{.735} &              .562 &           .687 &           .679 &           .683 &  \textbf{.750} &              .553 &           .697 &           .729 &           .702 &  \textbf{.759} &              .582 \\
&    \tiny{(.012)} &       \tiny{(.023)} &         \tiny{(.031)} &         \tiny{(.010)} &              \tiny{(.026)} &    \tiny{(.008)} &       \tiny{(.040)} &         \tiny{(.017)} &         \tiny{(.009)} &              \tiny{(.005)} &    \tiny{(.009)} &       \tiny{(.030)} &         \tiny{(.011)} &         \tiny{(.007)} &              \tiny{(.006)} \\
LIHC &           .607 &       .585 &  \textbf{.621} &           .611 &              .597 &           .594 &           .558 &           .589 &  \textbf{.599} &              .592 &           .591 &           .573 &           .610 &  \textbf{.617} &              .591 \\
&    \tiny{(.016)} &       \tiny{(.029)} &         \tiny{(.008)} &         \tiny{(.012)} &              \tiny{(.006)} &    \tiny{(.010)} &       \tiny{(.032)} &         \tiny{(.011)} &         \tiny{(.011)} &              \tiny{(.008)} &    \tiny{(.014)} &       \tiny{(.038)} &         \tiny{(.021)} &         \tiny{(.005)} &              \tiny{(.010)} \\
LUAD &  \textbf{.605} &       .525 &           .544 &           .579 &              .583 &  \textbf{.593} &           .549 &           .559 &           .566 &              .587 &  \textbf{.595} &           .545 &           .565 &           .562 &              .590 \\
&    \tiny{(.005)} &       \tiny{(.034)} &         \tiny{(.014)} &         \tiny{(.010)} &              \tiny{(.008)} &    \tiny{(.007)} &       \tiny{(.026)} &         \tiny{(.010)} &         \tiny{(.014)} &              \tiny{(.003)} &    \tiny{(.004)} &       \tiny{(.021)} &         \tiny{(.008)} &         \tiny{(.010)} &              \tiny{(.009)} \\
LUSC &  \textbf{.569} &       .500 &           .500 &           .493 &              .526 &  \textbf{.566} &           .515 &           .499 &           .504 &              .537 &  \textbf{.559} &           .516 &           .473 &           .515 &              .541 \\
&    \tiny{(.008)} &       \tiny{(.029)} &         \tiny{(.009)} &         \tiny{(.004)} &              \tiny{(.009)} &    \tiny{(.005)} &       \tiny{(.021)} &         \tiny{(.008)} &         \tiny{(.003)} &              \tiny{(.011)} &    \tiny{(.004)} &       \tiny{(.039)} &         \tiny{(.003)} &         \tiny{(.005)} &              \tiny{(.005)} \\
MESO &           .624 &       .540 &  \textbf{.695} &           .606 &              .674 &           .629 &           .603 &  \textbf{.667} &           .655 &              .632 &           .607 &           .571 &  \textbf{.668} &           .633 &              .640 \\
&    \tiny{(.016)} &       \tiny{(.059)} &         \tiny{(.009)} &         \tiny{(.011)} &              \tiny{(.009)} &    \tiny{(.021)} &       \tiny{(.049)} &         \tiny{(.014)} &         \tiny{(.014)} &              \tiny{(.008)} &    \tiny{(.021)} &       \tiny{(.077)} &         \tiny{(.016)} &         \tiny{(.003)} &              \tiny{(.011)} \\
PAAD &           .585 &       .490 &  \textbf{.602} &           .531 &              .514 &  \textbf{.583} &           .537 &           .576 &           .548 &              .538 &  \textbf{.580} &           .502 &           .573 &           .527 &              .566 \\
&    \tiny{(.012)} &       \tiny{(.025)} &         \tiny{(.007)} &         \tiny{(.007)} &              \tiny{(.007)} &    \tiny{(.013)} &       \tiny{(.055)} &         \tiny{(.012)} &         \tiny{(.009)} &              \tiny{(.008)} &    \tiny{(.011)} &       \tiny{(.023)} &         \tiny{(.008)} &         \tiny{(.010)} &              \tiny{(.006)} \\
SARC &  \textbf{.610} &       .525 &           .596 &           .553 &              .590 &  \textbf{.620} &           .586 &           .560 &           .603 &              .595 &  \textbf{.613} &           .555 &           .573 &           .574 &              .600 \\
&    \tiny{(.006)} &       \tiny{(.030)} &         \tiny{(.020)} &         \tiny{(.011)} &              \tiny{(.012)} &    \tiny{(.011)} &       \tiny{(.025)} &         \tiny{(.006)} &         \tiny{(.012)} &              \tiny{(.008)} &    \tiny{(.011)} &       \tiny{(.009)} &         \tiny{(.012)} &         \tiny{(.020)} &              \tiny{(.010)} \\
SKCM &  \textbf{.670} &       .525 &           .508 &           .536 &              .561 &  \textbf{.694} &           .534 &           .519 &           .528 &              .584 &  \textbf{.672} &           .544 &           .504 &           .532 &              .589 \\
&    \tiny{(.014)} &       \tiny{(.088)} &         \tiny{(.008)} &         \tiny{(.031)} &              \tiny{(.023)} &    \tiny{(.019)} &       \tiny{(.078)} &         \tiny{(.018)} &         \tiny{(.021)} &              \tiny{(.013)} &    \tiny{(.028)} &       \tiny{(.057)} &         \tiny{(.031)} &         \tiny{(.047)} &              \tiny{(.020)} \\
STAD &  \textbf{.543} &       .493 &           .494 &           .530 &              .504 &  \textbf{.535} &           .500 &           .484 &           .517 &              .533 &  \textbf{.542} &           .521 &           .519 &           .531 &              .511 \\
&    \tiny{(.006)} &       \tiny{(.035)} &         \tiny{(.004)} &         \tiny{(.010)} &              \tiny{(.004)} &    \tiny{(.009)} &       \tiny{(.026)} &         \tiny{(.016)} &         \tiny{(.013)} &              \tiny{(.006)} &    \tiny{(.011)} &       \tiny{(.022)} &         \tiny{(.015)} &         \tiny{(.012)} &              \tiny{(.002)} \\
UCEC &  \textbf{.623} &       .520 &           .523 &           .611 &              .593 &  \textbf{.643} &           .495 &           .539 &           .627 &              .604 &  \textbf{.622} &           .555 &           .570 &           .608 &              .571 \\
&    \tiny{(.010)} &       \tiny{(.022)} &         \tiny{(.025)} &         \tiny{(.017)} &              \tiny{(.012)} &    \tiny{(.007)} &       \tiny{(.041)} &         \tiny{(.019)} &         \tiny{(.011)} &              \tiny{(.019)} &    \tiny{(.003)} &       \tiny{(.015)} &         \tiny{(.025)} &         \tiny{(.017)} &              \tiny{(.009)} \\
UCS  &  \textbf{.520} &       .422 &           .518 &           .475 &              .433 &           .503 &  \textbf{.536} &           .497 &           .463 &              .413 &  \textbf{.523} &           .476 &           .492 &           .444 &              .369 \\
&    \tiny{(.012)} &       \tiny{(.211)} &         \tiny{(.016)} &         \tiny{(.031)} &              \tiny{(.005)} &    \tiny{(.013)} &       \tiny{(.095)} &         \tiny{(.022)} &         \tiny{(.012)} &              \tiny{(.013)} &    \tiny{(.015)} &       \tiny{(.055)} &         \tiny{(.010)} &         \tiny{(.016)} &              \tiny{(.011)} \\
UVM  &  \textbf{.717} &       .672 &           .592 &           .617 &              .519 &           .683 &           .700 &           .604 &  \textbf{.743} &              .442 &           .664 &  \textbf{.744} &           .727 &  \textbf{.744} &              .449 \\
&    \tiny{(.015)} &       \tiny{(.093)} &         \tiny{(.019)} &         \tiny{(.028)} &              \tiny{(.018)} &    \tiny{(.006)} &       \tiny{(.091)} &         \tiny{(.026)} &         \tiny{(.016)} &              \tiny{(.015)} &    \tiny{(.013)} &       \tiny{(.032)} &         \tiny{(.034)} &         \tiny{(.022)} &              \tiny{(.028)} \\
\bottomrule
P-value&
&\scriptsize{3e-2}&\scriptsize{5e-2}&\scriptsize{1e-2}&\scriptsize{3e-4}&
&\scriptsize{2e-2}&\scriptsize{1e-1}&\scriptsize{9e-3}&\scriptsize{9e-4}&
&\scriptsize{9e-2}&\scriptsize{7e-2}&\scriptsize{2e-2}&\scriptsize{3e-3}\\
\bottomrule
Rank& \textbf{1.61}& 4.60& 3.07& 2.88& 2.86& \textbf{1.52}& 4.33& 3.36& 2.57& 3.21& \textbf{1.60}& 4.02& 3.10& 3.10& 3.19\\ 
\bottomrule
\end{tabular}
\caption{The performance comparison on the miRNA data in terms of $\text{C-index}$. The following settings were used: 15\%, 20\%, and 25\% supervision. The numbers in brackets depict the standard error. The last row shows the rank in each supervision group.
The p-value row depicts the p-value for the upper-tailed Wilcoxon signed-ranks test between each method and MDSSA.}\label{tab:Supp-miRNA-C-index-rem-2-RSF}
\end{table*}
\begin{table*}
\small
\centering
\begin{tabular}{|lp{.035\textwidth}p{.035\textwidth}p{.035\textwidth}p{.035\textwidth}|p{.035\textwidth}p{.035\textwidth}p{.035\textwidth}p{.035\textwidth}|p{.035\textwidth}p{.035\textwidth}p{.035\textwidth}p{.035\textwidth}|}
\toprule
superv.&\multicolumn{4}{c|}{.00\%} & \multicolumn{4}{c|}{5\%} & \multicolumn{4}{c|}{10\%}  \\
method& \rot{MSSDA} & \rot{Cox-AO} & \rot{RSF-AO} & \rot{DeepSurv-AO}  & \rot{MSSDA} & \rot{Cox-AO} & \rot{RSF-AO} & \rot{DeepSurv-AO} & \rot{MSSDA} & \rot{Cox-AO} & \rot{RSF-AO} & \rot{DeepSurv-AO}  \\
\midrule
ACC  &  \textbf{.784} &           .694 &         .715 &              .735 &  \textbf{.764} &         .748 &           .733 &              .745 &  \textbf{.817} &           .753 &         .794 &              .734 \\
&    \tiny{(.008)} &         \tiny{(.008)} &         \tiny{(.012)} &              \tiny{(.012)} &    \tiny{(.006)} &         \tiny{(.015)} &         \tiny{(.002)} &              \tiny{(.018)} &    \tiny{(.010)} &         \tiny{(.016)} &         \tiny{(.009)} &              \tiny{(.014)} \\
BLCA &           .538 &  \textbf{.559} &         .535 &              .547 &           .565 &         .564 &  \textbf{.573} &              .541 &  \textbf{.582} &           .570 &         .578 &              .548 \\
&    \tiny{(.007)} &         \tiny{(.001)} &         \tiny{(.003)} &              \tiny{(.003)} &    \tiny{(.006)} &         \tiny{(.003)} &         \tiny{(.010)} &              \tiny{(.005)} &    \tiny{(.003)} &         \tiny{(.004)} &         \tiny{(.017)} &              \tiny{(.009)} \\
BRCA &  \textbf{.614} &           .594 &         .545 &              .556 &  \textbf{.650} &         .586 &           .568 &              .574 &  \textbf{.661} &           .587 &         .647 &              .567 \\
&    \tiny{(.018)} &         \tiny{(.001)} &         \tiny{(.013)} &              \tiny{(.008)} &    \tiny{(.016)} &         \tiny{(.008)} &         \tiny{(.019)} &              \tiny{(.008)} &    \tiny{(.014)} &         \tiny{(.008)} &         \tiny{(.011)} &              \tiny{(.001)} \\
CESC &  \textbf{.671} &           .608 &         .556 &              .600 &  \textbf{.700} &         .607 &           .628 &              .597 &  \textbf{.712} &           .632 &         .637 &              .592 \\
&    \tiny{(.005)} &         \tiny{(.001)} &         \tiny{(.014)} &              \tiny{(.010)} &    \tiny{(.007)} &         \tiny{(.006)} &         \tiny{(.017)} &              \tiny{(.016)} &    \tiny{(.006)} &         \tiny{(.007)} &         \tiny{(.013)} &              \tiny{(.003)} \\
CHOL &  \textbf{.639} &           .549 &         .461 &              .555 &  \textbf{.628} &         .560 &           .476 &              .526 &  \textbf{.652} &           .564 &         .553 &              .551 \\
&    \tiny{(.011)} &         \tiny{(.004)} &         \tiny{(.029)} &              \tiny{(.027)} &    \tiny{(.015)} &         \tiny{(.006)} &         \tiny{(.025)} &              \tiny{(.007)} &    \tiny{(.012)} &         \tiny{(.010)} &         \tiny{(.018)} &              \tiny{(.013)} \\
ESCA &  \textbf{.600} &           .552 &         .567 &              .551 &  \textbf{.607} &         .563 &           .586 &              .551 &  \textbf{.626} &           .575 &         .610 &              .558 \\
&    \tiny{(.008)} &         \tiny{(.003)} &         \tiny{(.019)} &              \tiny{(.012)} &    \tiny{(.009)} &         \tiny{(.004)} &         \tiny{(.014)} &              \tiny{(.010)} &    \tiny{(.009)} &         \tiny{(.016)} &         \tiny{(.008)} &              \tiny{(.004)} \\
HNSC &  \textbf{.599} &           .553 &         .518 &              .578 &  \textbf{.620} &         .563 &           .569 &              .575 &  \textbf{.637} &           .573 &         .596 &              .577 \\
&    \tiny{(.004)} &         \tiny{(.002)} &         \tiny{(.010)} &              \tiny{(.001)} &    \tiny{(.004)} &         \tiny{(.011)} &         \tiny{(.012)} &              \tiny{(.006)} &    \tiny{(.003)} &         \tiny{(.008)} &         \tiny{(.009)} &              \tiny{(.005)} \\
KIRC &  \textbf{.606} &           .551 &         .595 &              .572 &  \textbf{.636} &         .550 &           .599 &              .563 &  \textbf{.673} &           .589 &         .626 &              .570 \\
&    \tiny{(.008)} &         \tiny{(.001)} &         \tiny{(.018)} &              \tiny{(.016)} &    \tiny{(.010)} &         \tiny{(.007)} &         \tiny{(.006)} &              \tiny{(.013)} &    \tiny{(.008)} &         \tiny{(.018)} &         \tiny{(.006)} &              \tiny{(.010)} \\
KIRP &  \textbf{.782} &           .589 &         .715 &              .694 &  \textbf{.792} &         .680 &           .771 &              .725 &  \textbf{.802} &           .680 &         .784 &              .703 \\
&    \tiny{(.004)} &         \tiny{(.008)} &         \tiny{(.012)} &              \tiny{(.014)} &    \tiny{(.011)} &         \tiny{(.028)} &         \tiny{(.007)} &              \tiny{(.019)} &    \tiny{(.007)} &         \tiny{(.026)} &         \tiny{(.016)} &              \tiny{(.025)} \\
LGG  &  \textbf{.635} &           .515 &         .605 &              .560 &  \textbf{.675} &         .559 &           .668 &              .559 &  \textbf{.708} &           .606 &         .702 &              .533 \\
&    \tiny{(.011)} &         \tiny{(.004)} &         \tiny{(.008)} &              \tiny{(.025)} &    \tiny{(.005)} &         \tiny{(.023)} &         \tiny{(.005)} &              \tiny{(.005)} &    \tiny{(.011)} &         \tiny{(.024)} &         \tiny{(.006)} &              \tiny{(.015)} \\
LIHC &  \textbf{.595} &           .579 &         .587 &              .589 &  \textbf{.628} &         .582 &           .615 &              .599 &  \textbf{.651} &           .620 &         .635 &              .605 \\
&    \tiny{(.003)} &         \tiny{(.006)} &         \tiny{(.002)} &              \tiny{(.007)} &    \tiny{(.006)} &         \tiny{(.014)} &         \tiny{(.003)} &              \tiny{(.006)} &    \tiny{(.003)} &         \tiny{(.010)} &         \tiny{(.009)} &              \tiny{(.011)} \\
LUAD &  \textbf{.604} &           .568 &         .570 &              .586 &  \textbf{.617} &         .573 &           .585 &              .597 &  \textbf{.635} &           .577 &         .615 &              .581 \\
&    \tiny{(.004)} &         \tiny{(.002)} &         \tiny{(.008)} &              \tiny{(.007)} &    \tiny{(.005)} &         \tiny{(.003)} &         \tiny{(.008)} &              \tiny{(.004)} &    \tiny{(.006)} &         \tiny{(.004)} &         \tiny{(.007)} &              \tiny{(.007)} \\
LUSC &  \textbf{.569} &           .520 &         .496 &              .526 &  \textbf{.588} &         .524 &           .536 &              .516 &  \textbf{.616} &           .535 &         .563 &              .533 \\
&    \tiny{(.003)} &         \tiny{(.001)} &         \tiny{(.005)} &              \tiny{(.005)} &    \tiny{(.005)} &         \tiny{(.007)} &         \tiny{(.006)} &              \tiny{(.006)} &    \tiny{(.004)} &         \tiny{(.005)} &         \tiny{(.005)} &              \tiny{(.004)} \\
MESO &           .621 &  \textbf{.678} &         .592 &              .673 &           .615 &         .669 &           .621 &     \textbf{.672} &           .625 &  \textbf{.671} &         .649 &     \textbf{.671} \\
&    \tiny{(.002)} &         \tiny{(.004)} &         \tiny{(.020)} &              \tiny{(.008)} &    \tiny{(.007)} &         \tiny{(.008)} &         \tiny{(.007)} &              \tiny{(.006)} &    \tiny{(.009)} &         \tiny{(.004)} &         \tiny{(.008)} &              \tiny{(.007)} \\
PAAD &  \textbf{.582} &           .552 &         .504 &              .548 &  \textbf{.598} &         .563 &           .546 &              .538 &  \textbf{.625} &           .574 &         .571 &              .529 \\
&    \tiny{(.004)} &         \tiny{(.002)} &         \tiny{(.010)} &              \tiny{(.003)} &    \tiny{(.007)} &         \tiny{(.004)} &         \tiny{(.013)} &              \tiny{(.009)} &    \tiny{(.007)} &         \tiny{(.008)} &         \tiny{(.016)} &              \tiny{(.008)} \\
SARC &           .601 &           .564 &         .591 &     \textbf{.613} &  \textbf{.623} &         .570 &           .617 &              .598 &  \textbf{.650} &           .596 &         .643 &              .595 \\
&    \tiny{(.006)} &         \tiny{(.004)} &         \tiny{(.005)} &              \tiny{(.008)} &    \tiny{(.005)} &         \tiny{(.009)} &         \tiny{(.005)} &              \tiny{(.010)} &    \tiny{(.010)} &         \tiny{(.007)} &         \tiny{(.008)} &              \tiny{(.007)} \\
SKCM &  \textbf{.663} &           .540 &         .498 &              .646 &  \textbf{.701} &         .529 &           .530 &              .640 &  \textbf{.702} &           .547 &         .555 &              .676 \\
&    \tiny{(.011)} &         \tiny{(.004)} &         \tiny{(.016)} &              \tiny{(.024)} &    \tiny{(.020)} &         \tiny{(.015)} &         \tiny{(.015)} &              \tiny{(.013)} &    \tiny{(.016)} &         \tiny{(.015)} &         \tiny{(.014)} &              \tiny{(.024)} \\
STAD &  \textbf{.539} &           .468 &         .487 &              .503 &  \textbf{.567} &         .479 &           .533 &              .521 &  \textbf{.586} &           .482 &         .552 &              .508 \\
&    \tiny{(.006)} &         \tiny{(.001)} &         \tiny{(.004)} &              \tiny{(.009)} &    \tiny{(.005)} &         \tiny{(.004)} &         \tiny{(.011)} &              \tiny{(.004)} &    \tiny{(.004)} &         \tiny{(.005)} &         \tiny{(.012)} &              \tiny{(.008)} \\
UCEC &  \textbf{.657} &           .500 &         .558 &              .573 &  \textbf{.665} &         .526 &           .614 &              .581 &  \textbf{.685} &           .524 &         .662 &              .573 \\
&    \tiny{(.007)} &         \tiny{(.003)} &         \tiny{(.011)} &              \tiny{(.007)} &    \tiny{(.006)} &         \tiny{(.008)} &         \tiny{(.011)} &              \tiny{(.011)} &    \tiny{(.008)} &         \tiny{(.019)} &         \tiny{(.013)} &              \tiny{(.010)} \\
UCS  &  \textbf{.524} &           .511 &         .471 &              .439 &  \textbf{.542} &         .519 &           .520 &              .424 &  \textbf{.555} &           .510 &         .553 &              .426 \\
&    \tiny{(.007)} &         \tiny{(.001)} &         \tiny{(.012)} &              \tiny{(.011)} &    \tiny{(.006)} &         \tiny{(.007)} &         \tiny{(.012)} &              \tiny{(.014)} &    \tiny{(.007)} &         \tiny{(.006)} &         \tiny{(.011)} &              \tiny{(.004)} \\
UVM  &  \textbf{.696} &           .457 &         .498 &              .546 &  \textbf{.675} &         .564 &           .592 &              .514 &  \textbf{.705} &           .537 &         .677 &              .443 \\
&    \tiny{(.010)} &         \tiny{(.007)} &         \tiny{(.013)} &              \tiny{(.019)} &    \tiny{(.013)} &         \tiny{(.025)} &         \tiny{(.029)} &              \tiny{(.032)} &    \tiny{(.014)} &         \tiny{(.018)} &         \tiny{(.011)} &              \tiny{(.036)} \\
\bottomrule
P-value&
&\scriptsize{4e-4}&\scriptsize{4e-4}&\scriptsize{1e-2}&
&\scriptsize{1e-4}&\scriptsize{5e-3}&\scriptsize{9e-4}&
&\scriptsize{1e-4}&\scriptsize{3e-2}&\scriptsize{2e-4}\\
\bottomrule
Rank& \textbf{1.24}& 3.10& 3.24& 2.43& \textbf{1.19}& 3.21& 2.48& 3.12 & \textbf{1.14}& 3.12& 2.19& 3.55\\
\bottomrule
\end{tabular}
\caption{The performance comparison on the miRNA data in terms of $\text{C-index}^\prime$. The following settings were used: no supervision, 5\%, and 10\% supervision. The numbers in brackets depict the standard error. The last row shows the rank in each supervision group.
The p-value row depicts the p-value for the upper-tailed Wilcoxon signed-ranks test between each method and MDSSA.}\label{tab:Supp-miRNA-C-index-1-RSF}
\end{table*}
\begin{table*}
\small
\centering
\begin{tabular}{|lp{.035\textwidth}p{.035\textwidth}p{.035\textwidth}p{.035\textwidth}p{.035\textwidth}|p{.035\textwidth}p{.035\textwidth}p{.035\textwidth}p{.035\textwidth}p{.035\textwidth}|p{.035\textwidth}p{.035\textwidth}p{.035\textwidth}p{.035\textwidth}p{.035\textwidth}|}
\toprule
superv.&\multicolumn{5}{c|}{15\%} & \multicolumn{5}{c|}{20\%} & \multicolumn{5}{c|}{25\%} \\
method& \rot{MSSDA} & \rot{TransferCox} & \rot{Cox-AO} & \rot{RSF-AO} & \rot{DeepSurv-AO} &\rot{MSSDA} & \rot{TransferCox} & \rot{Cox-AO} & \rot{RSF-AO} & \rot{DeepSurv-AO} &\rot{MSSDA} & \rot{TransferCox} & \rot{Cox-AO} & \rot{RSF-AO} & \rot{DeepSurv-AO}  \\
\midrule
ACC  &  \textbf{.807} &           .677 &           .773 &           .790 &              .773 &  \textbf{.827} &           .680 &           .774 &           .805 &              .756 &  \textbf{.836} &           .789 &           .792 &           .808 &              .753 \\
&    \tiny{(.003)} &       \tiny{(.064)} &         \tiny{(.010)} &         \tiny{(.011)} &              \tiny{(.013)} &    \tiny{(.007)} &       \tiny{(.056)} &         \tiny{(.005)} &         \tiny{(.004)} &              \tiny{(.010)} &    \tiny{(.003)} &       \tiny{(.030)} &         \tiny{(.009)} &         \tiny{(.008)} &              \tiny{(.006)} \\
BLCA &           .607 &  \textbf{.623} &           .563 &           .614 &              .567 &  \textbf{.626} &           .615 &           .577 &           .587 &              .559 &           .641 &  \textbf{.664} &           .567 &           .628 &              .550 \\
&    \tiny{(.006)} &       \tiny{(.016)} &         \tiny{(.004)} &         \tiny{(.005)} &              \tiny{(.005)} &    \tiny{(.002)} &       \tiny{(.023)} &         \tiny{(.006)} &         \tiny{(.019)} &              \tiny{(.003)} &    \tiny{(.005)} &       \tiny{(.015)} &         \tiny{(.005)} &         \tiny{(.007)} &              \tiny{(.009)} \\
BRCA &  \textbf{.700} &           .659 &           .579 &           .669 &              .562 &  \textbf{.716} &           .694 &           .588 &           .686 &              .570 &  \textbf{.731} &           .716 &           .586 &           .703 &              .546 \\
&    \tiny{(.009)} &       \tiny{(.016)} &         \tiny{(.006)} &         \tiny{(.005)} &              \tiny{(.010)} &    \tiny{(.017)} &       \tiny{(.022)} &         \tiny{(.007)} &         \tiny{(.010)} &              \tiny{(.010)} &    \tiny{(.010)} &       \tiny{(.025)} &         \tiny{(.005)} &         \tiny{(.013)} &              \tiny{(.012)} \\
CESC &  \textbf{.748} &           .632 &           .625 &           .674 &              .602 &  \textbf{.752} &           .661 &           .632 &           .699 &              .579 &  \textbf{.765} &           .679 &           .660 &           .714 &              .603 \\
&    \tiny{(.009)} &       \tiny{(.022)} &         \tiny{(.002)} &         \tiny{(.019)} &              \tiny{(.008)} &    \tiny{(.009)} &       \tiny{(.030)} &         \tiny{(.008)} &         \tiny{(.006)} &              \tiny{(.012)} &    \tiny{(.007)} &       \tiny{(.038)} &         \tiny{(.009)} &         \tiny{(.010)} &              \tiny{(.011)} \\
CHOL &  \textbf{.635} &           .568 &           .569 &           .589 &              .576 &  \textbf{.680} &           .610 &           .582 &           .568 &              .562 &  \textbf{.664} &           .632 &           .594 &           .613 &              .548 \\
&    \tiny{(.015)} &       \tiny{(.086)} &         \tiny{(.009)} &         \tiny{(.015)} &              \tiny{(.032)} &    \tiny{(.036)} &       \tiny{(.051)} &         \tiny{(.011)} &         \tiny{(.018)} &              \tiny{(.013)} &    \tiny{(.029)} &       \tiny{(.059)} &         \tiny{(.011)} &         \tiny{(.008)} &              \tiny{(.011)} \\
ESCA &  \textbf{.630} &           .619 &           .576 &           .612 &              .540 &           .640 &  \textbf{.645} &           .589 &           .629 &              .550 &  \textbf{.667} &           .628 &           .574 &           .623 &              .549 \\
&    \tiny{(.009)} &       \tiny{(.037)} &         \tiny{(.006)} &         \tiny{(.015)} &              \tiny{(.004)} &    \tiny{(.008)} &       \tiny{(.028)} &         \tiny{(.008)} &         \tiny{(.012)} &              \tiny{(.004)} &    \tiny{(.016)} &       \tiny{(.047)} &         \tiny{(.017)} &         \tiny{(.015)} &              \tiny{(.005)} \\
HNSC &  \textbf{.658} &           .610 &           .579 &           .605 &              .575 &  \textbf{.675} &           .617 &           .577 &           .648 &              .577 &  \textbf{.691} &           .645 &           .583 &           .658 &              .591 \\
&    \tiny{(.006)} &       \tiny{(.019)} &         \tiny{(.006)} &         \tiny{(.006)} &              \tiny{(.005)} &    \tiny{(.006)} &       \tiny{(.009)} &         \tiny{(.005)} &         \tiny{(.004)} &              \tiny{(.002)} &    \tiny{(.005)} &       \tiny{(.006)} &         \tiny{(.005)} &         \tiny{(.011)} &              \tiny{(.006)} \\
KIRC &  \textbf{.676} &           .596 &           .579 &           .648 &              .565 &  \textbf{.696} &           .656 &           .563 &           .676 &              .549 &  \textbf{.721} &           .670 &           .592 &           .676 &              .589 \\
&    \tiny{(.011)} &       \tiny{(.009)} &         \tiny{(.017)} &         \tiny{(.013)} &              \tiny{(.012)} &    \tiny{(.008)} &       \tiny{(.044)} &         \tiny{(.015)} &         \tiny{(.012)} &              \tiny{(.010)} &    \tiny{(.007)} &       \tiny{(.032)} &         \tiny{(.013)} &         \tiny{(.007)} &              \tiny{(.014)} \\
KIRP &  \textbf{.819} &           .683 &           .697 &           .809 &              .703 &  \textbf{.835} &           .720 &           .720 &           .810 &              .745 &  \textbf{.854} &           .797 &           .769 &           .831 &              .701 \\
&    \tiny{(.012)} &       \tiny{(.091)} &         \tiny{(.020)} &         \tiny{(.010)} &              \tiny{(.015)} &    \tiny{(.010)} &       \tiny{(.038)} &         \tiny{(.015)} &         \tiny{(.008)} &              \tiny{(.017)} &    \tiny{(.007)} &       \tiny{(.103)} &         \tiny{(.009)} &         \tiny{(.010)} &              \tiny{(.009)} \\
LGG  &           .734 &           .754 &           .662 &  \textbf{.772} &              .557 &           .756 &           .739 &           .690 &  \textbf{.787} &              .542 &           .778 &           .804 &           .717 &  \textbf{.810} &              .578 \\
&    \tiny{(.011)} &       \tiny{(.022)} &         \tiny{(.028)} &         \tiny{(.010)} &              \tiny{(.020)} &    \tiny{(.008)} &       \tiny{(.033)} &         \tiny{(.011)} &         \tiny{(.005)} &              \tiny{(.009)} &    \tiny{(.007)} &       \tiny{(.029)} &         \tiny{(.011)} &         \tiny{(.003)} &              \tiny{(.011)} \\
LIHC &  \textbf{.672} &           .644 &           .635 &           .658 &              .598 &  \textbf{.678} &           .654 &           .625 &           .664 &              .601 &  \textbf{.695} &           .685 &           .625 &           .688 &              .601 \\
&    \tiny{(.007)} &       \tiny{(.024)} &         \tiny{(.008)} &         \tiny{(.011)} &              \tiny{(.006)} &    \tiny{(.004)} &       \tiny{(.012)} &         \tiny{(.009)} &         \tiny{(.008)} &              \tiny{(.004)} &    \tiny{(.004)} &       \tiny{(.037)} &         \tiny{(.012)} &         \tiny{(.003)} &              \tiny{(.003)} \\
LUAD &  \textbf{.665} &           .604 &           .563 &           .636 &              .586 &  \textbf{.674} &           .650 &           .578 &           .639 &              .586 &  \textbf{.695} &           .655 &           .582 &           .662 &              .603 \\
&    \tiny{(.004)} &       \tiny{(.035)} &         \tiny{(.012)} &         \tiny{(.012)} &              \tiny{(.009)} &    \tiny{(.006)} &       \tiny{(.027)} &         \tiny{(.006)} &         \tiny{(.016)} &              \tiny{(.005)} &    \tiny{(.006)} &       \tiny{(.012)} &         \tiny{(.004)} &         \tiny{(.009)} &              \tiny{(.006)} \\
LUSC &  \textbf{.633} &           .579 &           .527 &           .566 &              .512 &  \textbf{.656} &           .619 &           .521 &           .594 &              .521 &  \textbf{.672} &           .651 &           .511 &           .618 &              .523 \\
&    \tiny{(.009)} &       \tiny{(.032)} &         \tiny{(.009)} &         \tiny{(.007)} &              \tiny{(.006)} &    \tiny{(.009)} &       \tiny{(.009)} &         \tiny{(.005)} &         \tiny{(.004)} &              \tiny{(.004)} &    \tiny{(.012)} &       \tiny{(.029)} &         \tiny{(.003)} &         \tiny{(.011)} &              \tiny{(.007)} \\
MESO &           .634 &           .600 &  \textbf{.700} &           .636 &              .666 &           .640 &           .674 &  \textbf{.678} &           .671 &              .644 &           .634 &           .678 &  \textbf{.695} &           .673 &              .658 \\
&    \tiny{(.013)} &       \tiny{(.047)} &         \tiny{(.004)} &         \tiny{(.011)} &              \tiny{(.014)} &    \tiny{(.012)} &       \tiny{(.035)} &         \tiny{(.009)} &         \tiny{(.011)} &              \tiny{(.003)} &    \tiny{(.008)} &       \tiny{(.073)} &         \tiny{(.013)} &         \tiny{(.008)} &              \tiny{(.014)} \\
PAAD &  \textbf{.629} &           .562 &           .603 &           .580 &              .533 &  \textbf{.641} &           .632 &           .579 &           .612 &              .524 &  \textbf{.654} &           .640 &           .594 &           .619 &              .546 \\
&    \tiny{(.007)} &       \tiny{(.039)} &         \tiny{(.006)} &         \tiny{(.005)} &              \tiny{(.007)} &    \tiny{(.009)} &       \tiny{(.041)} &         \tiny{(.006)} &         \tiny{(.004)} &              \tiny{(.006)} &    \tiny{(.010)} &       \tiny{(.015)} &         \tiny{(.009)} &         \tiny{(.009)} &              \tiny{(.004)} \\
SARC &  \textbf{.663} &           .605 &           .607 &           .621 &              .606 &  \textbf{.690} &           .674 &           .593 &           .674 &              .590 &  \textbf{.704} &           .671 &           .616 &           .677 &              .612 \\
&    \tiny{(.004)} &       \tiny{(.027)} &         \tiny{(.016)} &         \tiny{(.007)} &              \tiny{(.008)} &    \tiny{(.007)} &       \tiny{(.031)} &         \tiny{(.008)} &         \tiny{(.013)} &              \tiny{(.008)} &    \tiny{(.007)} &       \tiny{(.017)} &         \tiny{(.016)} &         \tiny{(.016)} &              \tiny{(.009)} \\
SKCM &  \textbf{.727} &           .627 &           .548 &           .603 &              .593 &  \textbf{.752} &           .631 &           .571 &           .601 &              .617 &  \textbf{.747} &           .697 &           .545 &           .651 &              .627 \\
&    \tiny{(.015)} &       \tiny{(.064)} &         \tiny{(.005)} &         \tiny{(.037)} &              \tiny{(.023)} &    \tiny{(.011)} &       \tiny{(.063)} &         \tiny{(.017)} &         \tiny{(.027)} &              \tiny{(.010)} &    \tiny{(.017)} &       \tiny{(.050)} &         \tiny{(.023)} &         \tiny{(.039)} &              \tiny{(.012)} \\
STAD &  \textbf{.604} &           .576 &           .513 &           .592 &              .504 &  \textbf{.618} &           .616 &           .513 &           .598 &              .511 &           .640 &  \textbf{.654} &           .528 &           .616 &              .508 \\
&    \tiny{(.003)} &       \tiny{(.026)} &         \tiny{(.004)} &         \tiny{(.009)} &              \tiny{(.003)} &    \tiny{(.004)} &       \tiny{(.017)} &         \tiny{(.015)} &         \tiny{(.008)} &              \tiny{(.005)} &    \tiny{(.004)} &       \tiny{(.021)} &         \tiny{(.015)} &         \tiny{(.013)} &              \tiny{(.005)} \\
UCEC &  \textbf{.701} &           .624 &           .543 &           .690 &              .576 &  \textbf{.735} &           .636 &           .582 &           .723 &              .585 &  \textbf{.744} &           .693 &           .610 &           .709 &              .552 \\
&    \tiny{(.007)} &       \tiny{(.022)} &         \tiny{(.020)} &         \tiny{(.013)} &              \tiny{(.011)} &    \tiny{(.004)} &       \tiny{(.029)} &         \tiny{(.018)} &         \tiny{(.013)} &              \tiny{(.011)} &    \tiny{(.005)} &       \tiny{(.018)} &         \tiny{(.022)} &         \tiny{(.010)} &              \tiny{(.012)} \\
UCS  &  \textbf{.578} &           .496 &           .523 &           .568 &              .422 &           .587 &  \textbf{.634} &           .519 &           .591 &              .412 &           .614 &  \textbf{.627} &           .519 &           .576 &              .406 \\
&    \tiny{(.008)} &       \tiny{(.203)} &         \tiny{(.016)} &         \tiny{(.023)} &              \tiny{(.008)} &    \tiny{(.006)} &       \tiny{(.082)} &         \tiny{(.012)} &         \tiny{(.014)} &              \tiny{(.013)} &    \tiny{(.003)} &       \tiny{(.053)} &         \tiny{(.009)} &         \tiny{(.009)} &              \tiny{(.008)} \\
UVM  &  \textbf{.757} &           .733 &           .631 &           .660 &              .526 &           .744 &           .774 &           .652 &  \textbf{.782} &              .450 &           .752 &  \textbf{.821} &           .719 &           .784 &              .500 \\
&    \tiny{(.011)} &       \tiny{(.085)} &         \tiny{(.024)} &         \tiny{(.024)} &              \tiny{(.017)} &    \tiny{(.010)} &       \tiny{(.054)} &         \tiny{(.027)} &         \tiny{(.009)} &              \tiny{(.013)} &    \tiny{(.015)} &       \tiny{(.023)} &         \tiny{(.024)} &         \tiny{(.017)} &              \tiny{(.020)} \\
\bottomrule
P-value&
&\scriptsize{1e-4}&\scriptsize{4e-2}&\scriptsize{1e-5}&\scriptsize{2e-3}&
&\scriptsize{6e-5}&\scriptsize{7e-2}&\scriptsize{4e-6}&\scriptsize{2e-2}&
&\scriptsize{2e-4}&\scriptsize{7e-2}&\scriptsize{1e-6}&\scriptsize{1e-1}\\
\bottomrule
Rank& \textbf{1.33}& 3.19& 3.83& 2.29& 4.36& \textbf{1.48}& 2.48& 3.98& 2.50& 4.57& \textbf{1.52}& 2.29& 4.00& 2.43& 4.76\\ 
\bottomrule
\end{tabular}
\caption{The performance comparison on the miRNA data in terms of $\text{C-index}^\prime$. The following settings were used: 15\%, 20\%, and 25\% supervision. The numbers in brackets depict the standard error. The last row shows the rank in each supervision group.
The p-value row depicts the p-value for the upper-tailed Wilcoxon signed-ranks test between each method and MDSSA.}\label{tab:Supp-miRNA-C-index-2-RSF}
\end{table*}

\end{document}